\PassOptionsToPackage{dvipsnames}{xcolor}
\documentclass{article}
\usepackage{iclr2027_conference,times}

\usepackage{xcolor}
\usepackage{amsmath,amsfonts,amssymb,amsthm}
\usepackage{bm}
\usepackage{graphicx}
\usepackage{float}
\usepackage{booktabs}
\usepackage{etoc}
\usepackage{flafter}
\usepackage{hyperref}
\hypersetup{hidelinks}
\usepackage{xurl}

\newcommand{\quant}[1]{Q\left(#1\right)}

\newcommand{\softmax}{\mathrm{softmax}}
\newcommand{\KL}[2]{\mathrm{KL}\!\left(#1\,\middle\|\,#2\right)}

\definecolor{cleanred}{rgb}{0.62,0.05,0.10}
\definecolor{quantgreen}{rgb}{0.0,0.42,0.16}

\definecolor{termcontrast}{HTML}{15567F}
\definecolor{terminteraction}{HTML}{C05E00}
\definecolor{termalignment}{HTML}{1E6E1E}
\definecolor{termxi}{HTML}{6A4590}
\definecolor{removalpurple}{HTML}{7A5195}
\definecolor{reversalred}{HTML}{A33A4A}
\newcommand{\termlabel}[2]{\textcolor{#1}{\bm{#2}}}

\usepackage{ifthen}

\newif\ifdebug
\debugfalse

\newtheorem{theorem}{Theorem}
\newtheorem{proposition}{Proposition}

\newcommand{\W}[1]{{\Wv^{(#1)}}}

\newcommand{\T}[1]{{\Tv^{(#1)}}}

\newcommand{\F}[2]{{\Fv^{(#1)}}\left(#2\right)}
\newcommand{\G}[2]{{\Gv^{(#1)}}\left(#2\right)}
\newcommand{\f}[2]{{\fv^{(#1)}}\left(#2\right)}

\newcommand{\R}{\mathrm{R}}

\newcommand{\vect}[1]{\boldsymbol{\mathbf{#1}}}
\newcommand{\compacthat}[2]{\widehat{#1}\vphantom{#1}^{#2}}

\newcommand{\E}[1]{\mathbb{E}\left[#1\right]}

\newcommand{\N}{\mathcal{N}}

\newcommand{\fv}{\vect f}
\newcommand{\gv}{\vect g}
\newcommand{\hv}{\vect h}

\newcommand{\pv}{\vect p}

\newcommand{\uv}{\vect u}

\newcommand{\wv}{\vect w}

\newcommand{\zv}{\vect z}

\newcommand{\Fv}{\vect F}
\newcommand{\Gv}{\vect G}

\newcommand{\Iv}{\vect I}

\newcommand{\Tv}{\vect T}

\newcommand{\Wv}{\vect W}
\newcommand{\Xv}{\vect X}
\newcommand{\Yv}{\vect Y}

\newcommand{\Gc}{\mathcal G}

\newcommand{\Mc}{\mathcal M}

\newcommand{\Vc}{\mathcal V}

\newcommand{\Eb}{\mathbb E}

\newcommand{\Rb}{\mathbb R}

\newcommand{\norm}[1]{\lVert#1\rVert}

\newcommand{\iprod}[1]{\langle#1\rangle}

\makeatletter
\newtheorem*{rep@theorem}{\rep@title}
\newcommand{\newreptheorem}[2]{%
	\newenvironment{rep#1}[1]{%
		\def\rep@title{#2 \ref{##1}}%
		\begin{rep@theorem}}%
		{\end{rep@theorem}}}
\makeatother

\title{Why Does Post-Training Quantization Work?
}
\author{
Yuxiang Chen\textsuperscript{1,2},\quad
Michael Beyer\textsuperscript{3},\quad
Jun Zhu\textsuperscript{1},\quad
Jianfei Chen\textsuperscript{1,}\thanks{Corresponding author}\\
\textsuperscript{1}Dept. of Comp. Sci. and Tech., Institute for AI, BNRist Center, THBI Lab, \\~~Tsinghua-Bosch Joint ML Center, Tsinghua University \\
\textsuperscript{2}College of AI, Tsinghua University,~~
\textsuperscript{3}Bosch AI Research, Renningen, Germany
\vspace{3pt}\\
\texttt{chenyuxi22@mails.tsinghua.edu.cn}, ~~~
\texttt{michael.beyer2@de.bosch.com},\\
\texttt{\{dcszj,~jianfeic\}@tsinghua.edu.cn}
}

\iclrfinalcopy

\begin{document}
\etocdepthtag.toc{main}
\maketitle

\begin{abstract}
Post-training quantization compresses large language models (LLMs) by
storing their weights at reduced precision, and each quantized weight
introduces an error into the hidden states. Naively, these errors should
accumulate with depth and corrupt next-token prediction; randomly initialized
models accumulate these discrepancies rapidly, whereas quantized pretrained models
accumulate much less hidden-state error and largely maintain downstream task
performance, even though they were never trained with quantization noise. This
raises the question we address: why does post-training quantization work? Comparing full-precision
and quantized forward passes, we identify two mechanisms that
characterize pretrained quantization robustness. First, the error a layer newly introduces tends to oppose the error it
inherits from the layer's input. 
The two cancel partially such that the discrepancy between full-precision and quantized passes grows slowly. 
This counteracting residual interaction develops during pretraining.
Our quantitative analysis identifies it as a major factor slowing hidden-error growth.
Second, LM-head geometry preferentially preserves the scores and probabilities of high-ranked tokens, which typically represent the model's most confident predictions.
Together, these mechanisms explain why quantization error that passes through numerous layers can still produce only small output changes, and we verify the findings across models and quantization settings.
\end{abstract}

\section{Introduction}
\label{sec:introduction}

Weight-only post-training quantization (PTQ) compresses large language
models by storing their weights at reduced precision, and methods such as
GPTQ~\citep{frantar2023gptq} and AWQ~\citep{lin2024awq} are now widely used to
serve them. Quantizing a weight makes it slightly different from its
full-precision value, so at every layer, the quantized model's hidden states
differ from its full-precision counterpart, and these errors should naively
accumulate as they propagate through the layers. 
Yet in practice, even \emph{directly} rounding the weights to 4-bit barely hurts: casting
Qwen3-32B~\citep{yang2025qwen3} to NVFP4~\citep{nvidia2025nvfp4} without
calibration lowers benchmark accuracy by only $0.43$ percentage points on
average across six zero-shot benchmarks
(Sec.~\ref{sec:preferential-output-preservation}), even
though the model was never trained with quantization noise. This raises our
central question: \emph{why does post-training quantization work}?

The usual answer is that quantized weights remain close to their full-precision
values (cosine similarity to their NVFP4 reconstructions $\sim 0.996$), so
errors should propagate slowly. Yet \emph{pretrained} and \emph{randomly initialized}
weights have nearly identical reconstruction error
(App.~\ref{app:nvfp4-weight-closeness}), while the hidden-state discrepancy is
$5.5\times$ larger at random initialization 
(Fig.~\ref{fig:hidden-error-growth}).
Because the weight-level errors are closely matched, this gap in hidden-error growth should reflect properties acquired during pretraining. Prior work reduces or diagnoses quantization damage~\citep{NEURIPS2025_df2034a5,lee2026lfqlogitawarefinalblockquantization,lotfi2026quantizedreasoningmodelsthink},
but leaves a mechanistic gap: it does not explain how quantization errors
propagate through a pretrained model or why they produce only small output
changes.

We trace the discrepancy block by block and find two mechanisms that act in
series. Throughout, we compare the quantized and full-precision models on the
same inputs; the \emph{error} at each layer is the difference between their hidden
states, and we study how its size changes with depth. First, the new error a
block introduces tends to \emph{oppose} the error it inherits from earlier blocks, so
the two partially cancel and the size of the error grows slowly
(Sec.~\ref{sec:residual-counteraction}). Second, the final hidden-state error reaching the LM head is mainly a rotation whose effect is
attenuated by the high-dimensional LM head, especially for top-ranked tokens
(Sec.~\ref{sec:lmhead-geometry}). Together, they explain why a large internal
discrepancy translates into a small output change. Both mechanisms hold across
Qwen3 models of different scales, OLMo3~\citep{teamolmo2025olmo3},
Gemma3~\citep{gemmateam2025gemma3technicalreport}, and the OLMoE
mixture-of-experts model~\citep{muennighoff2025olmoe} (Sec.~\ref{sec:scope}).

We summarize the work's contributions as follows:
{\setlength{\leftmargini}{1.4em}
\begin{enumerate}
\setlength{\itemsep}{1pt}\setlength{\parskip}{0pt}\setlength{\parsep}{0pt}\setlength{\topsep}{2pt}
\item \textbf{Reframing quantization robustness as a mechanistic question.}
We ask why post-training quantization works, and use comparisons with randomly
initialized models to show that the usual explanation that weights stay close to their
full-precision values is not the full answer; slow hidden-error growth
largely comes from the pretrained model itself.
\item \textbf{A quantitative account of slow error growth.}
The error added by each layer tends to oppose the error it inherits, slowing
error growth. We derive an exact decomposition of this growth and confirm experimentally that this cancellation is significant for limiting hidden-error growth.
\item \textbf{A theoretical account of stable top-ranked token predictions.}
The scores of the tokens with the highest output probabilities change only slightly under
the surviving final hidden-state error; we derive how hidden-state error induces score
changes through the LM-head geometry, and connect the resulting score
perturbations to log-probability changes.
\end{enumerate}
}

\section{Setup: Comparing Full-Precision and Quantized Models}

\paragraph{Next-token prediction in LLMs}

For a tokenized sequence
$x_{1:n}=(x_1,\ldots,x_n)$ in vocabulary $\Vc$,
each token $x_i$ is embedded as $\hv_i^{(0)}\in\Rb^d$, where $d$ is the hidden
dimension, and then processed by $L$ Transformer
blocks~\citep{vaswani2017attention}. Block $\ell\in\{1,\ldots,L\}$ computes a
residual update:
\begin{align*}
\uv_i^{(\ell)}:=
\Fv^{(\ell)}\bigl(\hv_{1:i}^{(\ell-1)}\bigr),
\qquad
\hv_i^{(\ell)}=
\hv_i^{(\ell-1)}+\uv_i^{(\ell)}.
\end{align*}
Here $\Fv^{(\ell)}$ denotes the complete computation performed by block
$\ell$,
including the Attention and MLP contributions.
The final hidden state
is mapped to scores $\zv$ by the LM head to obtain the next-token distribution, with
$T=1$ unless stated otherwise:
\begin{align*}
\hv_{\rm LM}
=
\operatorname{Norm}\left(\hv_n^{(L)}\right)
\in \Rb^{d},
\qquad
\zv
=
\Wv_{\mathrm{LM}}\hv_{\rm LM}
\in \Rb^{|\Vc|},
\qquad
\pv
=
\operatorname{softmax}(\zv/T)
\in \Rb^{|\Vc|}.
\end{align*}
\paragraph{Quantifying the quantization impact}

A linear layer
maps an input $\Xv$ to $\Yv=\Xv\Wv^{\top}$ using a weight matrix
$\Wv$. Weight quantization stores $\quant{\Wv}$ as a low-precision approximation of $\Wv$. 
$\widehat{\Yv}=\Xv\quant{\Wv}^{\top}$ is thus an approximation of $\Yv$. 
Our study mainly uses the NVFP4 format~\citep{nvidia2025nvfp4} with round-to-nearest (RTN)
(App.~\ref{app:nvfp4-format}). 
We quantize standard linear projections in Attention and MLP to NVFP4 and keep
other operations, including the LM head, in full precision (BF16). 

Quantizing these linear layers 
gives the \emph{quantized model $\color{quantgreen}{\widehat{\Mc}}$}; we refer to
the unquantized model as the \emph{original model} $\color{cleanred}{\Mc}$.
We compare the two models on the \emph{same input sequence} $x_{1:n}$. At
each position $i\in\{1,\ldots,n-1\}$, both models receive the prefix
$x_{1:i}$ when predicting $x_{i+1}$. We analyze each position separately and
omit the position index $i$ below. Unless stated otherwise, the expectation $\Eb$ first
averages over positions within each input sequence and then averages equally across
sequences.

\begin{figure*}[t]
  \centering
  \includegraphics[width=0.82\textwidth]{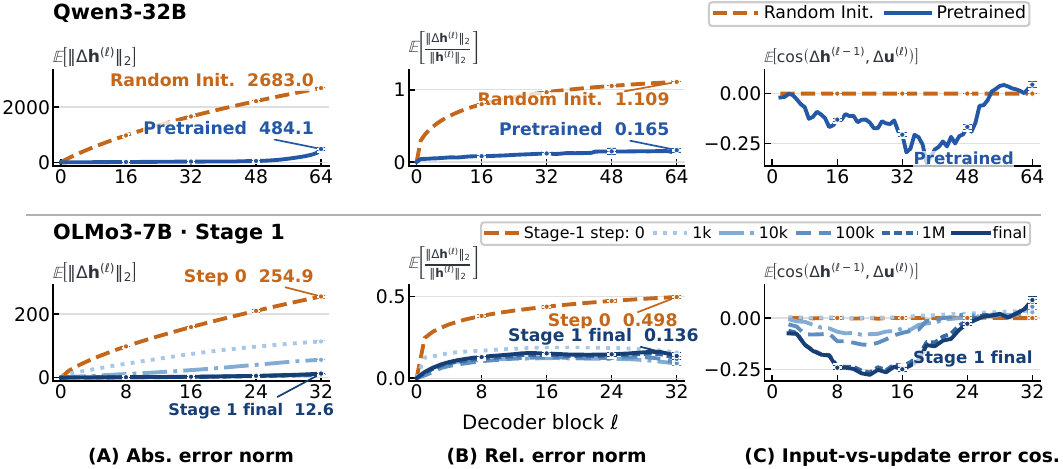}
  \caption{\textbf{Pretraining changes quantized model's hidden-error growth.}
  Rows compare random/pretrained Qwen3-32B and OLMo3-7B Stage-1 checkpoints
  on C4.
  \textbf{(A)} Mean absolute hidden-error norm; \textbf{(B)} mean relative
  hidden-error norm; \textbf{(C)} mean cosine between the block-input error and
  block-update error. Error bars show standard deviation.}
  \label{fig:hidden-error-growth}
\end{figure*}

At each decoding position $i$, the original model $\color{cleanred}{{\Mc}}$ and quantized model $\color{quantgreen}{\widehat{\Mc}}$ 
start with the same initial hidden state, $\compacthat{\hv}{(0)}=\hv^{(0)}$. 
For every block $\ell\in\{1,\ldots,L\}$, let $\uv^{(\ell)}$ and $\compacthat{\uv}{(\ell)}$ denote the respective block updates of $\color{cleanred}{{\Mc}}$ and $\color{quantgreen}{\widehat{\Mc}}$:
$\uv^{(\ell)}
={ \color{cleanred}{\Fv^{(\ell)}} }(
  \hv_{1:i}^{(\ell-1)}
)$
and
$\compacthat{\uv}{(\ell)}
={ \color{quantgreen}{\widehat{\Fv}^{(\ell)}} }(
  \compacthat{\hv}{(\ell-1)}_{1:i}
)$. The two residual updates and their hidden-state difference are
\begin{equation}
\underbrace{
\hv^{(\ell)}=\hv^{(\ell-1)}+\uv^{(\ell)},
}_{\smash[t]{
  \textbf{\color{cleanred}original model};\,
  \text{block }\ell:\,\hv^{(\ell-1)}\!\mapsto\!\hv^{(\ell)}}}
\quad
\underbrace{
\widehat{\hv}^{(\ell)}=\widehat{\hv}^{(\ell-1)}+\widehat{\uv}^{(\ell)},
}_{\smash[t]{
  \textbf{\color{quantgreen}quantized model};\,
  \text{block }\ell:\,\widehat{\hv}^{(\ell-1)}\!\mapsto\!\widehat{\hv}^{(\ell)}}}
\quad
\begin{cases}
\Delta\hv^{(\ell)}:=\widehat{\hv}^{(\ell)}-\hv^{(\ell)}, \\
\Delta\uv^{(\ell)}:=\widehat{\uv}^{(\ell)}-\uv^{(\ell)}
\end{cases}
\label{eq:setup-paired-block}
\end{equation}
We refer to the quantization-induced hidden-state difference
$\Delta\hv^{(\ell)}$ as the \emph{\textbf{hidden error}} at layer $\ell$.
The LM-head inputs are then
$\hv_{\rm LM}:=\operatorname{Norm}
\bigl(
    \hv^{(L)}
\bigr)$
and
$\widehat{\hv}_{\rm LM}:=\operatorname{Norm}
\bigl(
    \compacthat{\hv}{(L)}
\bigr)$:
\begin{equation*}
\underbrace{
\zv=\Wv_{\mathrm{LM}}\hv_{\rm LM},\quad
\pv=\softmax(\zv)
}_{\smash[t]{
  \text{\color{cleanred}original model } 
  \color{cleanred}{\Mc};\,
  \hv_{\rm LM}\!\mapsto\!\pv}}
\qquad
\underbrace{
\widehat{\zv}=\Wv_{\mathrm{LM}}\widehat{\hv}_{\rm LM},\quad
\widehat{\pv}=\softmax(\widehat{\zv})
}_{\smash[t]{
  \text{\color{quantgreen}quantized model }
  \color{quantgreen}{\widehat{\Mc}};\,
  \widehat{\hv}_{\rm LM}\!\mapsto\!\widehat{\pv}}}.
\end{equation*}
The LM-head input difference
$\Delta\hv_{\rm LM}:=\widehat{\hv}_{\rm LM}-\hv_{\rm LM}$ induces the score difference
$
\Delta\zv
:=\widehat{\zv}-\zv
=\Wv_{\mathrm{LM}}\Delta\hv_{\rm LM}.
$
We write $\operatorname{Top}_K(\zv)\subseteq\Vc$ for the vocabulary
tokens with the largest $K$ logits in 
$\color{cleanred}{\Mc}$, 
and measure output quality by cross-entropy (CE) and 
$D_{\mathrm{KL}}(\pv\|\widehat{\pv})$.

\section{The Growth of Hidden-Error Norm across Layers}
\label{sec:residual-counteraction}

The models start from the same hidden state 
but \emph{hidden error} $\Delta\hv^{(\ell)}=\widehat{\hv}^{(\ell)}-\hv^{(\ell)}$ changes across $\ell$. 
This section quantifies the growth by $\norm{\Delta\hv^{(\ell)}}_2$ and its relative size 
$
{\norm{\Delta\hv^{(\ell)}}_2}/
{\norm{\hv^{(\ell)}}_2}$.

\subsection{Hidden-Error Growth in Randomly Initialized and Pretrained Models}
\label{sec:residual-observation}
\begin{figure*}[t]
  \centering
  \includegraphics[width=0.7\textwidth]{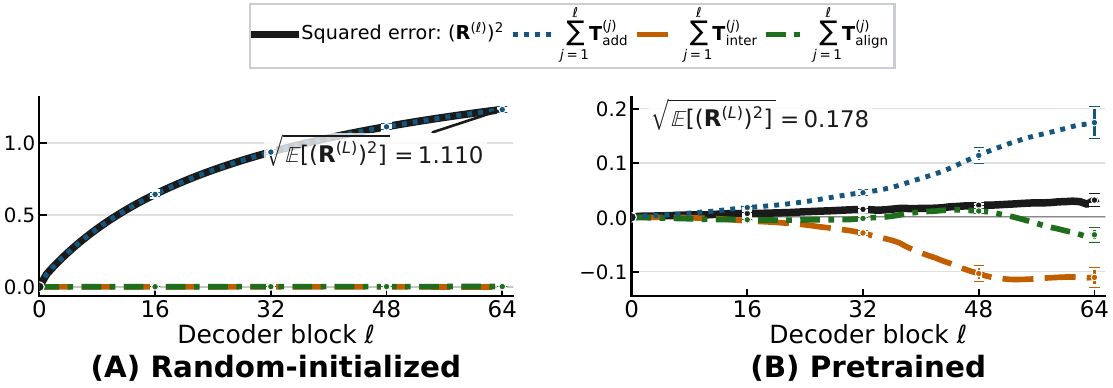}
  \caption{\textbf{Accumulated contributions of the three terms in
  Thm.~\ref{thm:relative-hidden-error-recurrence}.} Randomly initialized and
  pretrained Qwen3-32B on C4. Each colored curve accumulates one term over
  blocks $1$--$\ell$ and shows its averaged value across inputs.
  The black curve is their exact sum. Error bars show standard deviation across inputs.}
  \label{fig:sec3-counteraction-accounting}
\end{figure*}

NVFP4 weight-quantization reduces Qwen3-32B accuracy by only $0.43$ percentage points on average across six zero-shot benchmarks (Sec.~\ref{sec:preferential-output-preservation}). The final relative hidden-error norm is 
$
{\norm{\Delta\hv^{(L)}}_2}/
{\norm{\hv^{(L)}}_2}
\approx 0.15$ and cosine similarity is $\cos\angle\bigl(
\hv^{(L)},\compacthat{\hv}{(L)}
\bigr)\approx 0.98$ on average for pretrained models.
A direct intuition is that networks are robust to NVFP4 because each quantized weight matrix $Q(\Wv)$ remains close to its full-precision counterpart $\Wv$ (${\rm CosSim}(\Wv,Q(\Wv))\approx0.9955$ in App.~\ref{app:nvfp4-weight-closeness}).
This suggests that each quantized matrix multiplication introduces only a small error.
One might therefore expect the errors introduced across layers to remain limited, leaving a small final hidden error.

However, this account is incomplete.
Randomly initialized weights have \emph{nearly the same reconstruction cosine similarity} as the pretrained model
under NVFP4 quantization (App.~\ref{app:nvfp4-weight-closeness}).
If this weight-level similarity alone explained the small final hidden error,
the randomly initialized and pretrained models should therefore show similar hidden-error growth. 
However, the quantized randomly initialized Qwen3-32B reaches a $5.5\times$ larger final absolute hidden-error norm and a $6.7\times$ larger relative hidden-error norm than its pretrained checkpoint 
(Fig.~\ref{fig:hidden-error-growth}). 
The same pattern appears for OLMo3-7B: the initialized step-0 has $20.2\times$ larger final absolute error and $3.7\times$ larger relative error than the final checkpoint.

The growth trends across layers also differ. Over the 64 layers,
randomly initialized Qwen3-32B follows the consistently growing trends
$\Eb\norm{\Delta\hv^{(\ell)}}_2\propto\ell^{0.80}$ and
$\Eb\bigl[
    {\norm{\Delta\hv^{(\ell)}}_2}/
    {\norm{\hv^{(\ell)}}_2}
\bigr]
\propto\ell^{0.30}$, while the pretrained relative-error curve even stops increasing in the later layers
(details in App.~\ref{app:random-hidden-error-growth-rate}). Weight-level similarity alone is therefore insufficient to explain robustness across depth;
there should be a model-side mechanism that distinguishes randomly initialized models from pretrained models. 
The next subsection identifies counteraction between block-input error $\Delta \hv^{(\ell-1)}$ and block-update error $\Delta \uv^{(\ell)}$ as a major contributor to this slower growth.

\subsection{Counteraction between Block-Input and Block-Update Errors}
\label{sec:residual-absolute-counteraction}

Eq.~\eqref{eq:setup-paired-block} shows hidden error after block $\ell$ is
$\Delta\hv^{(\ell)}=\Delta\hv^{(\ell-1)}+\Delta\uv^{(\ell)}$, where
$\Delta\hv^{(\ell-1)}$ is the \textbf{\emph{input error}} inherited from earlier blocks and
$\Delta\uv^{(\ell)}$ is the \textbf{\emph{block-update error}}. Here is the recurrence:

\begin{proposition}[Squared hidden-error recurrence]
\label{prop:absolute-hidden-error-recurrence}
For every Transformer block $\ell\in\{1,\ldots,L\}$,
\begin{equation}
\norm{\Delta\hv^{(\ell)}}_2^2
-\norm{\Delta\hv^{(\ell-1)}}_2^2
=
{
\norm{\Delta\uv^{(\ell)}}_2^2
}
+
{
2\iprod{\Delta\hv^{(\ell-1)},\Delta\uv^{(\ell)}}
}
\label{eq:absolute-hidden-error-recurrence}
\end{equation}
\end{proposition}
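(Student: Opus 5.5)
The plan is to reduce the statement to the additive decomposition of the hidden error and then expand a squared Euclidean norm. First, we subtract the two residual updates in Eq.~\eqref{eq:setup-paired-block}: $\widehat{\hv}^{(\ell)}-\hv^{(\ell)}=(\widehat{\hv}^{(\ell-1)}-\hv^{(\ell-1)})+(\widehat{\uv}^{(\ell)}-\uv^{(\ell)})$. By the definitions of $\Delta\hv$ and $\Delta\uv$ this is exactly
\begin{equation*}
\Delta\hv^{(\ell)}=\Delta\hv^{(\ell-1)}+\Delta\uv^{(\ell)},
\end{equation*}
which holds for every $\ell\in\{1,\ldots,L\}$. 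At $\ell=1$ it uses the convention $\Delta\hv^{(0)}=\vect 0$, since $\compacthat{\hv}{(0)}=\hv^{(0)}$.

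Next, we take squared $\ell_2$ norms and use bilinearity and symmetry of the real inner product on $\Rb^d$:
\begin{equation*}
\norm{\Delta\hv^{(\ell)}}_2^2=\iprod{\Delta\hv^{(\ell-1)}+\Delta\uv^{(\ell)},\,\Delta\hv^{(\ell-1)}+\Delta\uv^{(\ell)}}=\norm{\Delta\hv^{(\ell-1)}}_2^2+2\iprod{\Delta\hv^{(\ell-1)},\Delta\uv^{(\ell)}}+\norm{\Delta\uv^{(\ell)}}_2^2 .
\end{equation*}
Subtracting $\norm{\Delta\hv^{(\ell-1)}}_2^2$ from both sides gives Eq.~\eqref{eq:absolute-hidden-error-recurrence}.

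There is essentially no obstacle here: the identity is the polarization expansion of the norm of a sum, and it is exact. It needs no assumption on the block maps $\Fv^{(\ell)}$, $\widehat{\Fv}^{(\ell)}$, on the quantizer, or on the position $i$, because the nonlinearity of the block and the dependence on the whole prefix are absorbed into $\Delta\uv^{(\ell)}$. The only point to check is that $\hv^{(\ell)}$ and $\widehat{\hv}^{(\ell)}$ are both defined through the same residual form. That holds because quantization only changes the weights inside $\Fv^{(\ell)}$ and never the skip connection.

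I would also note, for later use, that the identity holds pointwise for each position and input, so it survives taking $\Eb$. Summing it over blocks telescopes to $\norm{\Delta\hv^{(\ell)}}_2^2=\sum_{k\le\ell}\bigl(\norm{\Delta\uv^{(k)}}_2^2+2\iprod{\Delta\hv^{(k-1)},\Delta\uv^{(k)}}\bigr)$. This shows that a negative inner-product (counteraction) term is exactly what slows the growth.
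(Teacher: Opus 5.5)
Your proof is correct and matches the paper's argument: subtract the paired residual updates to get $\Delta\hv^{(\ell)}=\Delta\hv^{(\ell-1)}+\Delta\uv^{(\ell)}$, expand the squared norm of the sum, and subtract $\norm{\Delta\hv^{(\ell-1)}}_2^2$. Your remarks on the $\ell=1$ convention and on telescoping are consistent with how the paper later uses the identity.
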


The second term on the right determines
whether the block-update error $\Delta\uv^{(\ell)}$ increases or cancels the
block-input hidden error $\Delta\hv^{(\ell-1)}$. We call the latter case
\textbf{\emph{counteraction}}:
$\iprod{\Delta\hv^{(\ell-1)},\Delta\uv^{(\ell)}}<0$. Counteraction offsets
part of the newly introduced error, and slows the hidden-error norm growth.

For pretrained Qwen3-32B, over layers $1$--$48$, the interaction term $2\sum\Eb\iprod{\Delta\hv^{(\ell-1)},
\Delta\uv^{(\ell)}}$ cumulatively cancels $50.2\%$ of the block-update error term $\sum\Eb\norm{\Delta\uv^{(\ell)}}_2^2$ contribution, explaining the slow growth in Fig.~\ref{fig:hidden-error-growth}(A). 
At random initialization, $\bigl|
2\Eb\iprod{\Delta\hv^{(\ell-1)},
\Delta\uv^{(\ell)}}
\bigr|$ is at most $0.16\%$ of
$\Eb\norm{\Delta\uv^{(\ell)}}_2^2$ for each layer, consistent with the near-zero cosines in Fig.~\ref{fig:hidden-error-growth}(C). 
We next derive the recurrence for the \emph{relative hidden error}
${\norm{\Delta\hv^{(\ell)}}_2}/{\norm{\hv^{(\ell)}}_2}$ which normalizes for changes in hidden-state scale.

\begin{theorem}[Relative hidden-error recurrence]
\label{thm:relative-hidden-error-recurrence}
Define
$R^{(\ell)}:=\frac{\norm{\Delta\hv^{(\ell)}}_2}{\norm{\hv^{(\ell)}}_2}$. 
For every $\ell\in\{1,\ldots,L\}$ such that $\hv^{(\ell-1)},\hv^{(\ell)},\uv^{(\ell)}$ are nonzero,
{\small
\begin{align}
&\bigl(R^{(\ell)}\bigr)^2-\bigl(R^{(\ell-1)}\bigr)^2
\nonumber\\[-0.2em]
&=
\underbrace{
    \left(\frac{\norm{\uv^{(\ell)}}_2}{\norm{\hv^{(\ell)}}_2}\right)^2
    \Bigl[
    \left(\frac{\norm{\Delta\uv^{(\ell)}}_2}
            {\norm{\uv^{(\ell)}}_2}\right)^2
    -\bigl(R^{(\ell-1)}\bigr)^2
    \Bigr]
}_{\substack{
    \color{termcontrast}\text{\bfseries relative additional error 
    (\bfseries block-update vs. block-input): }\termlabel{termcontrast}{T_{\rm add}}}}
\!+\!
\underbrace{
    \frac{
        2\iprod{\Delta\hv^{(\ell-1)},\Delta\uv^{(\ell)}}
    }{
        \norm{\hv^{(\ell)}}_2^2
    }
}_{\substack{
    \color{terminteraction}\text{\bfseries error interaction: }
    \termlabel{terminteraction}{T_{\rm inter}}}}
\underbrace{
    -\frac{
        2\iprod{\hv^{(\ell-1)},\uv^{(\ell)}}
    }{
        \norm{\hv^{(\ell)}}_2^2
    }
    \bigl(R^{(\ell-1)}\bigr)^2
}_{\substack{
    \color{termalignment}\text{\bfseries residual-norm contribution: }
    \termlabel{termalignment}{T_{\rm align}}}}.
\label{eq:relative-hidden-error-recurrence}
\end{align}
}
\end{theorem}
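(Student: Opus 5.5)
The plan is to derive Theorem~\ref{thm:relative-hidden-error-recurrence} purely algebraically from Proposition~\ref{prop:absolute-hidden-error-recurrence} together with the analogous expansion of the hidden-state norm itself. First write
\[
\bigl(R^{(\ell)}\bigr)^2-\bigl(R^{(\ell-1)}\bigr)^2
=\frac{\norm{\Delta\hv^{(\ell)}}_2^2-\bigl(R^{(\ell-1)}\bigr)^2\norm{\hv^{(\ell)}}_2^2}{\norm{\hv^{(\ell)}}_2^2}.
\]
This places everything over the common denominator $\norm{\hv^{(\ell)}}_2^2$, which is legitimate because $\hv^{(\ell)}\neq 0$. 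Since $\hv^{(\ell-1)}\neq0$, $R^{(\ell-1)}$ is also well defined.

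Next, substitute the two expansions into the numerator. By Proposition~\ref{prop:absolute-hidden-error-recurrence},
\[
\norm{\Delta\hv^{(\ell)}}_2^2=\norm{\Delta\hv^{(\ell-1)}}_2^2+\norm{\Delta\uv^{(\ell)}}_2^2+2\iprod{\Delta\hv^{(\ell-1)},\Delta\uv^{(\ell)}}.
\]
By the residual update $\hv^{(\ell)}=\hv^{(\ell-1)}+\uv^{(\ell)}$ in Eq.~\eqref{eq:setup-paired-block},
\[
\norm{\hv^{(\ell)}}_2^2=\norm{\hv^{(\ell-1)}}_2^2+\norm{\uv^{(\ell)}}_2^2+2\iprod{\hv^{(\ell-1)},\uv^{(\ell)}}.
\]
Multiply the second identity by $\bigl(R^{(\ell-1)}\bigr)^2$. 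The term $\bigl(R^{(\ell-1)}\bigr)^2\norm{\hv^{(\ell-1)}}_2^2$ equals $\norm{\Delta\hv^{(\ell-1)}}_2^2$ by definition, so it cancels exactly against the first term of the first identity. The numerator is then
\[
\norm{\Delta\uv^{(\ell)}}_2^2-\bigl(R^{(\ell-1)}\bigr)^2\norm{\uv^{(\ell)}}_2^2+2\iprod{\Delta\hv^{(\ell-1)},\Delta\uv^{(\ell)}}-2\iprod{\hv^{(\ell-1)},\uv^{(\ell)}}\bigl(R^{(\ell-1)}\bigr)^2 .
\]

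Finally, divide the numerator by $\norm{\hv^{(\ell)}}_2^2$ and group the terms.
\begin{itemize}
\item The first two terms share the factor $\norm{\uv^{(\ell)}}_2^2$. Because $\uv^{(\ell)}\neq0$, we can factor out $\bigl(\norm{\uv^{(\ell)}}_2/\norm{\hv^{(\ell)}}_2\bigr)^2$. What remains in the bracket is $\bigl(\norm{\Delta\uv^{(\ell)}}_2/\norm{\uv^{(\ell)}}_2\bigr)^2-\bigl(R^{(\ell-1)}\bigr)^2$, which is $T_{\rm add}$.
\item The inner-product term in the errors gives $T_{\rm inter}$.
\item The inner-product term in the hidden states gives $T_{\rm align}$.
\end{itemize}

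There is no real obstacle; the only delicate point is the cancellation in the second step. That cancellation requires rescaling the norm expansion by $\bigl(R^{(\ell-1)}\bigr)^2$ rather than expanding the ratio directly, and the nonzero hypotheses are used exactly there and in the factoring step.
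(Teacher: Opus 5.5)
Your proof is correct and is essentially the paper's argument. Both use the same ingredients: Prop.~\ref{prop:absolute-hidden-error-recurrence}, the substitution $\norm{\Delta\hv^{(\ell-1)}}_2^2=\norm{\hv^{(\ell-1)}}_2^2\bigl(R^{(\ell-1)}\bigr)^2$, and the expansion of $\norm{\hv^{(\ell)}}_2^2$. The only difference is bookkeeping: the paper rewrites the ratio $\norm{\hv^{(\ell-1)}}_2^2/\norm{\hv^{(\ell)}}_2^2-1$, while you cancel inside a common numerator, and both then factor out $\bigl(\norm{\uv^{(\ell)}}_2/\norm{\hv^{(\ell)}}_2\bigr)^2$ in the same way.
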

\begin{figure*}[t]
  \centering
  \includegraphics[width=0.92\textwidth]{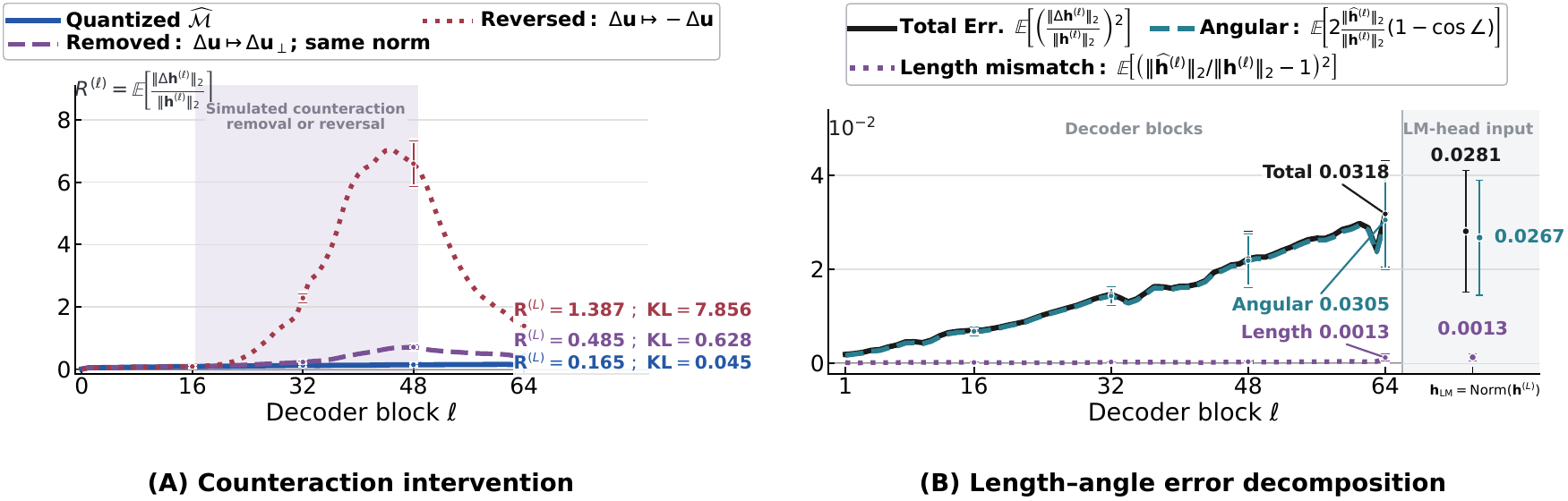}
  \caption{\textbf{Counteraction interventions and length--angle decomposition
  of the hidden error.} Qwen3-32B on C4.
  \textbf{(A)} Relative hidden error after counteraction removal or reversal. 
  \textbf{(B)} Terms of
  Prop.~\ref{prop:hidden-error-magnitude-direction}; the
  separated right region shows the LM-head input terms. Error bars
  show SD.}
  \label{fig:sec3-counteraction-effects}
\end{figure*}

A detailed derivation is given in
App.~\ref{app:hidden-error-recurrence-proofs}, and
Fig.~\ref{fig:sec3-counteraction-accounting} quantifies the three terms for
Qwen3-32B:
\begingroup
\setlength{\leftmargini}{0.8em}
\begin{itemize}
\setlength{\topsep}{0pt}
\setlength{\partopsep}{0pt}
\setlength{\itemsep}{0pt}
\setlength{\parsep}{0pt}
\setlength{\leftmargin}{0pt}
\setlength{\labelwidth}{0.75em}
\setlength{\labelsep}{0.35em}
\setlength{\itemindent}{0pt}
\setlength{\listparindent}{0pt}
\item $\termlabel{termcontrast}{T_{\rm add}}$ compares the relative error in the current block update with
the relative error already present at the block input, scaled by the
update-to-hidden norm ratio. 
\item $\termlabel{terminteraction}{T_{\rm inter}}$ measures the
signed interaction between the block-input error
$\Delta\hv^{(\ell-1)}$ and block-update error $\Delta\uv^{(\ell)}$.
This term encompasses the
\emph{counteraction} that results in the slower error growth: with $\termlabel{terminteraction}{T_{\rm inter}}<0$ for most blocks in pretrained Qwen3-32B, the term cumulatively cancels $63.4\%$ of
$\termlabel{termcontrast}{T_{\rm add}}$ over layers $1$--$L$, substantially slowing the growth of $\bigl(R^{(\ell)}\bigr)^2$.
\item $\termlabel{termalignment}{T_{\rm align}}$ captures how
the change of hidden-state norm affects
the relative error. 
If 
$\iprod{\hv^{(\ell-1)},\uv^{(\ell)}}>0$, the update increases
$\norm{\hv^{(\ell)}}_2$ and reduces the relative error. In
pretrained Qwen3-32B, this term cancels $18.4\%$ of
$\termlabel{termcontrast}{T_{\rm add}}$ cumulatively, and most
of the
cancellation occurs during later blocks.
\end{itemize}
\endgroup

Together, $\termlabel{terminteraction}{T_{\rm inter}}$ and $\termlabel{termalignment}{T_{\rm align}}$ cumulatively cancel $81.8\%$ of
$\termlabel{termcontrast}{T_{\rm add}}$ in pretrained Qwen3-32B over layers
$1$--$L$. 
At random initialization, the cumulative sum of these two terms is
only $0.09\%$ of $\termlabel{termcontrast}{T_{\rm add}}$ over layers $1$--$L$.
\textbf{These measurements identify \emph{counteraction} as a major factor in slowing down the hidden-error growth in the pretrained model}: across blocks, the block-update error $\Delta\uv^{(\ell)}$ tends to point \emph{against} the block-input hidden error $\Delta\hv^{(\ell-1)}$ to slow the error growth.

\subsection{Hidden-Error Growth without Counteraction}
\label{sec:residual-relative-decomposition}

After showing the significance of counteraction, we test what happens when counteraction is removed. 
We construct layer-wise
hidden-state trajectories in which each block-update error $\Delta\uv$ keeps
its norm, but its \emph{direction} is modified to be \emph{counteraction-free}. 
We compare the hidden-error trajectories and the final KL divergence across intervention policies.
Concretely,  
at every intervened block, we use one of the hidden-state recursions:
\begin{align*}
\textbf{Removal:}\quad
\widehat{\hv}_{\rm rem}^{(\ell)}
=
\widehat{\hv}_{\rm rem}^{(\ell-1)}
+\uv^{(\ell)}
\textcolor{removalpurple}{\boldsymbol{+}\Delta\uv_{\perp}^{(\ell)}};\qquad
\textbf{Reversal:}\quad
\widehat{\hv}_{\rm rev}^{(\ell)}
=
\widehat{\hv}_{\rm rev}^{(\ell-1)}
+\uv^{(\ell)}
\textcolor{reversalred}{\boldsymbol{-}\Delta\uv^{(\ell)}}.
\end{align*}
In both recursions, $\Delta\uv^{(\ell)}$ follows
Eq.~\eqref{eq:setup-paired-block} and is recomputed along the modified
trajectory; $\textcolor{removalpurple}{\Delta\uv_{\perp}^{(\ell)}}$ is orthogonal to the block-input error
$\Delta\hv^{(\ell-1)}$. Both interventions are applied in blocks 17--48, where
counteraction is strongest in Qwen3-32B; reversal is applied only when the interaction is negative.

At the final block, removal raises the relative hidden error by $2.94\times$,
while reversal raises it by $8.41\times$. Both interventions also sharply increase KL.
Moreover, after counteraction removal, 
$\termlabel{termcontrast}{T_{\rm add}}$ increases by 4.3× cumulatively (App.~\ref{app:counteraction-intervention}),
which indicates that counteraction not only contributes through the negative term $\termlabel{terminteraction}{T_{\rm inter}}$, but also alters the subsequent trajectory in a way that limits $\termlabel{termcontrast}{T_{\rm add}}$.
The interventions provide causal evidence that \textbf{counteraction is a major factor limiting hidden-error growth}. Sec.~\ref{sec:lmhead-geometry} examines how the remaining hidden-state error affects the outputs.

\section{The Effect of Final Hidden Error on Output Quality}
\label{sec:lmhead-geometry}

The quantized Transformer blocks leave a nonzero error at the final decoder
state $\hv^{(L)}$: the mean relative error norm
$\norm{\Delta\hv^{(L)}}_2/\norm{\hv^{(L)}}_2$ across the three datasets is
${\sim}0.245$ (App.~\ref{app:cross-model-relative-recurrence}). To see what
kind of error survives to the LM head input, we decompose it into its \emph{length} and
\emph{angle} components.
\begin{figure}[t]
  \centering
  \includegraphics[width=\textwidth]{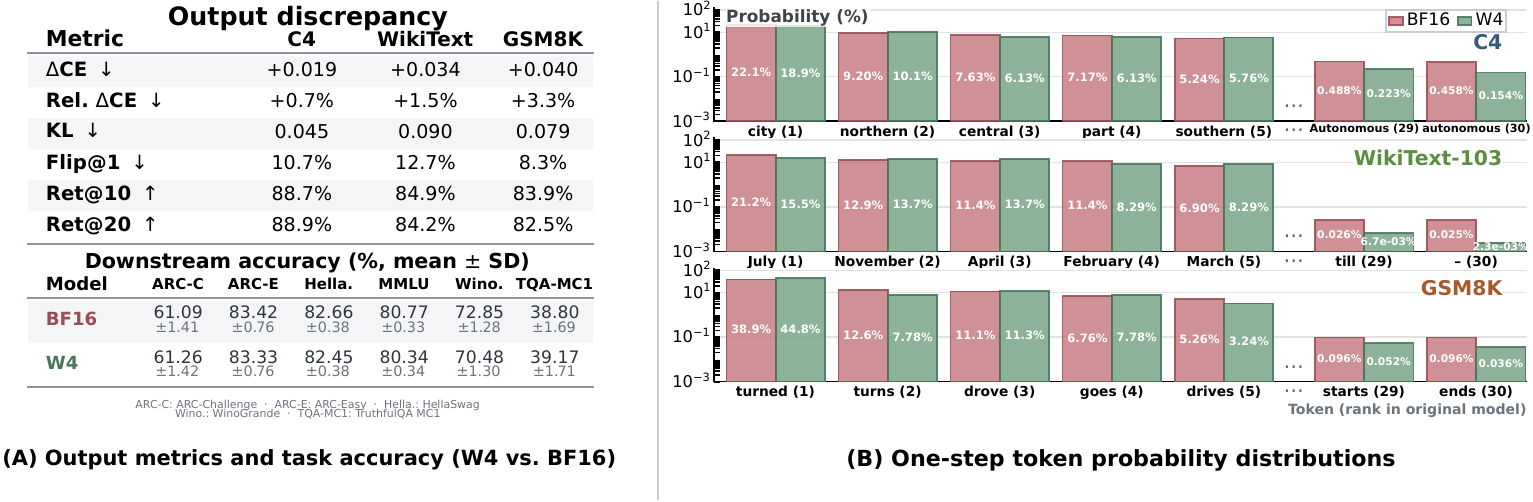}
  \caption{\textbf{Output metrics and paired next-token probabilities after W4
  quantization.} Qwen3-32B.
  \textbf{(A)} Output-quality metrics and zero-shot benchmark accuracy.
  \textbf{(B)} Paired BF16/W4 probabilities at one representative next-token
  position per dataset, showing BF16 ranks 1--5 and 29--30.}
  \label{fig:sec4-output-geometry}
\end{figure}

\begin{proposition}[Length--angle decomposition of relative hidden error]
\label{prop:hidden-error-magnitude-direction}
Define
$\Delta\hv^{(\ell)}:=\compacthat{\hv}{(\ell)}-\hv^{(\ell)}$.
For every block $\ell$ such that $\hv^{(\ell)}\ne\bm{0}$ and
$\compacthat{\hv}{(\ell)}\ne\bm{0}$,
\begin{equation}
\Bigl(
\frac{\norm{\Delta\hv^{(\ell)}}_2}{\norm{\hv^{(\ell)}}_2}
\Bigr)^2
=
\Bigl(
\frac{\norm{\widehat{\hv}^{(\ell)}}_2}
    {\norm{\hv^{(\ell)}}_2}-1
\Bigr)^2
+2\frac{\norm{\widehat{\hv}^{(\ell)}}_2}
       {\norm{\hv^{(\ell)}}_2}
\left[
1-\cos\angle\bigl(\hv^{(\ell)},\widehat{\hv}^{(\ell)}\bigr)
\right].
\label{eq:hidden-error-magnitude-direction}
\end{equation}
In particular, if
$\norm{\compacthat{\hv}{(\ell)}}_2=\norm{\hv^{(\ell)}}_2$, then
$
\left(
\frac{\norm{\Delta\hv^{(\ell)}}_2}{\norm{\hv^{(\ell)}}_2}
\right)^2
=2[
1-\cos\angle(\hv^{(\ell)},  \compacthat{\hv}{(\ell)})
].
$
\end{proposition}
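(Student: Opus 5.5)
The identity follows from expanding a squared norm and normalizing by $\norm{\hv^{(\ell)}}_2^2$. We fix a block $\ell$, write $a:=\norm{\hv^{(\ell)}}_2>0$, $b:=\norm{\widehat{\hv}^{(\ell)}}_2>0$ and $c:=\cos\angle\bigl(\hv^{(\ell)},\widehat{\hv}^{(\ell)}\bigr)=\iprod{\hv^{(\ell)},\widehat{\hv}^{(\ell)}}/(ab)$. The cosine is well defined precisely because both vectors are nonzero, which is exactly the hypothesis of Prop.~\ref{prop:hidden-error-magnitude-direction}.

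\textbf{Expanding the numerator.} Using bilinearity of the inner product on $\Delta\hv^{(\ell)}=\widehat{\hv}^{(\ell)}-\hv^{(\ell)}$, we get
\begin{equation*}
\norm{\Delta\hv^{(\ell)}}_2^2=b^2-2\iprod{\hv^{(\ell)},\widehat{\hv}^{(\ell)}}+a^2=b^2-2abc+a^2.
\end{equation*}
This is the law of cosines for the triangle with vertices $\bm{0}$, $\hv^{(\ell)}$ and $\widehat{\hv}^{(\ell)}$.

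\textbf{Normalizing and regrouping.} Dividing by $a^2$ and setting $r:=b/a$ gives $r^2-2rc+1$. We then add and subtract $2r$:
\begin{equation*}
r^2-2rc+1=(r^2-2r+1)+2r-2rc=(r-1)^2+2r(1-c).
\end{equation*}
This is exactly the right-hand side of Eq.~\eqref{eq:hidden-error-magnitude-direction}.

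\textbf{The special case.} If $b=a$, then $r=1$, so the first term vanishes and the second becomes $2(1-c)$, which is the stated corollary.

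There is no real obstacle here. The only step needing any thought is choosing the regrouping $(r-1)^2+2r(1-c)$ that separates the length mismatch from the angular term. It can be spotted by requiring the expression to vanish when $r=1$ and $c=1$.
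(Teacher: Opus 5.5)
Your proof is correct and takes the same approach as the paper's: expand $\norm{\widehat{\hv}^{(\ell)}-\hv^{(\ell)}}_2^2$ by bilinearity, regroup it into the length-mismatch square plus $2\norm{\hv^{(\ell)}}_2\norm{\widehat{\hv}^{(\ell)}}_2\bigl[1-\cos\angle(\hv^{(\ell)},\widehat{\hv}^{(\ell)})\bigr]$, and divide by $\norm{\hv^{(\ell)}}_2^2$. The only difference is that you divide by $\norm{\hv^{(\ell)}}_2^2$ before regrouping, whereas the paper divides after.
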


We refer to the two terms on the right-hand side of Eq.~\eqref{eq:hidden-error-magnitude-direction} as the \emph{length}-mismatch
contribution and the \emph{angular} contribution, respectively. Fig.~\ref{fig:sec3-counteraction-effects}(B) shows that the angular contribution accounts
for 88.9--98.7\% of the squared relative hidden error across depth, while
changes in hidden-state norm remain small. After final normalization, this
corresponds to an average $12.47^\circ$ rotation of the LM-head input
(Fig.~\ref{fig:sec4-topk-protection}(A)). We next measure its effect on output
quality (Sec.~\ref{sec:preferential-output-preservation}) and explain theoretically why
top-ranked token probabilities are more stable in the outputs
(Sec.~\ref{sec:lmhead-top-rank-geometry}).

\subsection{Output Distributions and Downstream Accuracy after Quantization}
\label{sec:preferential-output-preservation}

Hereafter, \textcolor{cleanred}{\emph{BF16}} denotes the original model
$\color{cleanred}{\Mc}$ evaluated in bfloat16, and
\textcolor{quantgreen}{\emph{W4}} denotes
$\color{quantgreen}{\widehat{\Mc}}$ with only its Transformer block linear
weights quantized to NVFP4. We show that the output quality is mostly retained after quantization, especially for top-ranked tokens.

First, we compare the output quality of BF16 and W4 models by loss and divergence:
$\Delta\mathrm{CE}:=\mathrm{CE}_{\mathrm{W4}}-\mathrm{CE}_{\mathrm{BF16}}$, its
relative value $
\frac{
\Delta\mathrm{CE}}
{
\mathrm{CE}_{\mathrm{BF16}}
}$, and the
forward KL divergence $D_{\mathrm{KL}}(\pv\|\widehat{\pv})$.
Fig.~\ref{fig:sec4-output-geometry}(A)
reports all three metrics on each dataset (details in
App.~\ref{app:main-figure-protocols}), and the accuracy drops by only
$0.43$ percentage points on average across six benchmarks. Across other models, including
Qwen3-30B-A3B, Qwen3-8B, and OLMo3-32B, quantization also largely preserves output quality (App.~\ref{app:sec5-cross-model-output}).

Beyond the metrics, the BF16 model's highest-scoring tokens largely remain top-ranked in the W4 model.
Let $\pi_r\in\Vc$ denote the rank-$r$ token of BF16 model's score $\zv$. Define
\begin{equation*}
\text{Flip@1}
:=\Pr\!\left[
\operatorname{Top}_1(\widehat{\zv})
\ne
\operatorname{Top}_1(\zv)
\right],
\qquad
\text{Ret@}K
:=\Eb\!\Bigl[
\frac{
\left|
\operatorname{Top}_K(\zv)
\cap
\operatorname{Top}_K(\widehat{\zv})
\right|
}{K}
\Bigr].
\end{equation*}
Across the three datasets, Flip@1 is $8.3\%$--$12.7\%$, while
Ret@10 and Ret@20 are both about $85\%$ on average
(Fig.~\ref{fig:sec4-output-geometry}(A)).
In the examples shown in Fig.~\ref{fig:sec4-output-geometry}(B), the average
relative probability change over ranks 29--30 is $2.5$--$4.4\times$ that over
ranks 1--5. Sec.~\ref{sec:lmhead-top-rank-geometry} tests this pattern across
all evaluated positions and explains why top-ranked token scores are less
sensitive to quantization.

\begin{figure}[t]
  \centering
  \includegraphics[width=\textwidth]{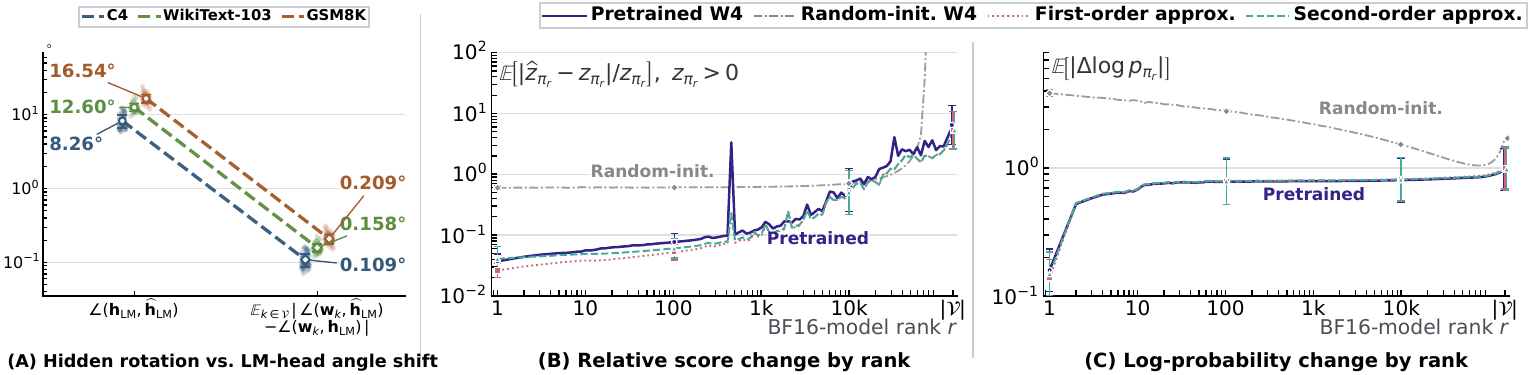}
  \caption{\textbf{Top-ranked token scores and probabilities are more stable
  after quantization in the pretrained model.} Qwen3-32B on C4,
  WikiText-103, and GSM8K text. \textbf{(A)} Across datasets, the LM-head input
  rotates by $8.26^\circ$ to $16.54^\circ$, whereas the vocabulary-mean
  projection-angle change is only $0.109^\circ$ to $0.209^\circ$.
  \textbf{(B)} Relative score error increases toward lower-ranked tokens and is
  much smaller in the pretrained model than at random initialization. 
  The theoretical approximation is under $\rho=1$ of Thm.~\ref{thm:lmhead-tangent-attenuation}. 
  \textbf{(C)} Log-probability error follows the same rank trend, and the approximations follow 
  Thm.~\ref{thm:log-probability-approximations}. Error bars
  show one log-SD; details are given in
  App.~\ref{app:main-figure-protocols}.
  }
  \label{fig:sec4-topk-protection}
\end{figure}

\subsection{Preferential Preservation of Top-Ranked Scores and Probabilities}
\label{sec:lmhead-top-rank-geometry}

In this subsection, we find that top-ranked token scores have smaller
relative errors after quantization and theoretically explain why this trend arises through the geometry of the LM head. This preferential
preservation matters because top-ranked tokens govern the high-probability
region of the output.
Here $\widehat{\hv}_{\rm LM}$ and $\widehat{\zv}$ are the LM-head input and
score vector of the W4 model, and $\widehat{\pv}$ is the corresponding
probability vector;
$\hv_{\rm LM}$, $\zv$, and $\pv$ denote their counterparts in the unquantized model.

We first examine how the score $\zv=(z_1,z_2,\dots,z_{|\Vc|})^{\top}$ changes after quantization. 
The score of token $k\in\Vc$ is the projection of the LM-head input onto
the corresponding LM-head weight vector:
$z_k=\iprod{\wv_k,\hv_{\rm LM}}$.
Because the BF16 and W4 models share the same LM head, a token score is changed only by the input norm
$\norm{\hv_{\rm LM}}_2$ and \textbf{\emph{projection angle}}
$\angle(\wv_k,\hv_{\rm LM})$; App.~\ref{app:lmhead-geometry-proofs} gives the
exact decomposition.
The average relative
input-norm change
$\Eb\bigl|\norm{\widehat{\hv}_{\rm LM}}_2/\norm{\hv_{\rm LM}}_2-1\bigr|$
is only $3.7\%$. We therefore focus on quantifying how changes in the projection angles affect the token scores.
Fig.~\ref{fig:sec4-topk-protection} gives two observations:
\begingroup
\setlength{\leftmargini}{0.8em}
\begin{itemize}
\setlength{\topsep}{2pt}
\setlength{\partopsep}{0pt}
\setlength{\itemsep}{2pt}
\setlength{\parsep}{0pt}
\setlength{\itemindent}{0pt}
\setlength{\listparindent}{0pt}
\item \textbf{Observation 1: projection angles change much less than the
LM-head input.} In Fig.~\ref{fig:sec4-topk-protection}(A), the input rotates
by $\angle(\hv_{\rm LM},\widehat{\hv}_{\rm LM})\approx 12.5^\circ$, whereas
the vocabulary-mean projection-angle change is
$\lvert
\angle(\wv_k,\hv_{\rm LM})-\angle(\wv_k,\widehat{\hv}_{\rm LM})
\rvert\approx 0.159^\circ$, a $78.6\times$ attenuation.
\item \textbf{Observation 2: top-ranked tokens have smaller score and log-probability changes for the quantized pretrained model.} For BF16 rank-$r$ token $\pi_r$,
the relative score change grows
with rank in Fig.~\ref{fig:sec4-topk-protection}(B). The log-probability
change $\Eb[|\Delta\log p_{\pi_r}|]$ follows the same rank trend, while both
quantities are much larger at random initialization
(Fig.~\ref{fig:sec4-topk-protection}(B--C)).
\end{itemize}
\endgroup

To explain these, Thm.~\ref{thm:lmhead-tangent-attenuation} below derives how the hidden-state rotation
$\alpha=\angle(\hv_{\rm LM},\widehat{\hv}_{\rm LM})$
changes both projection angles and relative scores.
Intuitively, in high dimension, only a small component of this rotation affects the projection onto any fixed LM-head weight vector $\wv_k$, 
so the angle between
$\wv_k$ and the hidden state changes much less than the hidden state itself.
Moreover, a smaller angle between $\wv_k$ and
$\hv_{\rm LM}$ makes the relative score less sensitive to the same rotation, and higher-ranked tokens tend to have such smaller angles empirically
(App.~\ref{app:sec5-cross-model-output}), producing the rank-dependent trend in
Fig.~\ref{fig:sec4-topk-protection}(B).
Thm.~\ref{thm:log-probability-approximations} then links these score changes
to the log-probability changes in
Fig.~\ref{fig:sec4-topk-protection}(C).

\begin{theorem}[Projection-angle and score changes under an
isotropic rotation of the LM-head input]
\label{thm:lmhead-tangent-attenuation}
Let $d\ge3$ and let LM-head input $\hv_{\rm LM}$, $\widehat{\hv}_{\rm LM}$, and LM-head weight vector 
$\wv_k$ be nonzero. Define
\begin{equation*}
\alpha:=
\angle(\hv_{\rm LM},\widehat{\hv}_{\rm LM}),
\qquad
\theta_k:=
\angle(\wv_k,\hv_{\rm LM}),
\qquad
\widehat\theta_k:=
\angle(\wv_k,\widehat{\hv}_{\rm LM}),
\qquad
\rho:=\tfrac{
        \norm{\widehat{\hv}_{\rm LM}}_2
}{
        \norm{\hv_{\rm LM}}_2}
\end{equation*}
Conditional on the measured $\alpha$, assume that the direction of the
rotation from $\hv_{\rm LM}$ to $\widehat{\hv}_{\rm LM}$ is uniformly
distributed over all directions orthogonal to $\hv_{\rm LM}$. For
$z_k\ne0$ and $\alpha,\theta_k\in (0,\pi)$, with all expansions taken as $\alpha\to0^+$ for fixed $d$ and
$\theta_k$, let $\Delta z_k:=\widehat z_k-z_k$. Then
\begin{align}
\Eb\left|\widehat{\theta}_k-\theta_k\right|
&=\alpha\mu_d+O(\alpha^2),
\qquad
\mu_d:=\frac{\Gamma\!\left(\tfrac{d-1}{2}\right)}
            {\sqrt{\pi}\,\Gamma\!\left(\tfrac{d}{2}\right)}
=\sqrt{\frac{2}{\pi(d-1)}}
\left(1+O(d^{-1})\right),
\label{eq:lmhead-tangent-angle-rate}
\\
\frac{\Delta z_k}{z_k}
&=(\rho\cos\alpha-1)
+\rho\tan\theta_k\sin\alpha\cdot q_k,
\qquad
q_k^2\sim\operatorname{Beta}\!\left(\frac12,\frac{d-2}{2}\right),
\label{eq:lmhead-relative-score-distortion}
\\
\Eb\left|\frac{\Delta z_k}{z_k}\right|
&=\alpha\lvert\tan\theta_k\rvert\mu_d
+O(\alpha^2),
\qquad\text{if }\rho=1,
\label{eq:lmhead-relative-score-rate}
\end{align}
Here $q_k \text{ is symmetric about zero}$. The corresponding second-order estimate is given in
Eq.~\eqref{eq:lmhead-relative-score-second-order}.
\end{theorem}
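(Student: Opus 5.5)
Write $\hat{\hv}:=\hv_{\rm LM}/\norm{\hv_{\rm LM}}_2$ and decompose the rotated input as
\[
\widehat{\hv}_{\rm LM}=\rho\norm{\hv_{\rm LM}}_2\bigl(\cos\alpha\,\hat{\hv}+\sin\alpha\,\vect{v}\bigr),
\]
where $\vect{v}$ is a unit vector orthogonal to $\hv_{\rm LM}$. By assumption, $\vect{v}$ is uniform on the unit sphere $S^{d-2}$ of $\hv_{\rm LM}^{\perp}$. Decompose the weight vector as $\wv_k=\norm{\wv_k}_2(\cos\theta_k\,\hat{\hv}+\sin\theta_k\,\vect{e}_k)$, with $\vect{e}_k$ a unit vector in $\hv_{\rm LM}^{\perp}$. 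This is possible because $\theta_k\in(0,\pi)$. Set $q_k:=\iprod{\vect{e}_k,\vect{v}}$. Since $\vect{v}$ is uniform on a $(d-2)$-sphere in $\Rb^{d-1}$, $q_k$ is symmetric about zero, and the standard marginal of one coordinate gives $q_k^2\sim\operatorname{Beta}(\tfrac12,\tfrac{d-2}{2})$. The hypothesis $d\ge3$ makes this nondegenerate.

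\textbf{Exact score identity (Eq.~\eqref{eq:lmhead-relative-score-distortion}).} Take inner products:
\[
\widehat z_k=\rho\norm{\hv_{\rm LM}}_2\norm{\wv_k}_2(\cos\alpha\cos\theta_k+\sin\alpha\sin\theta_k\,q_k),
\qquad
z_k=\norm{\hv_{\rm LM}}_2\norm{\wv_k}_2\cos\theta_k .
\]
Since $z_k\ne0$, dividing gives $\widehat z_k/z_k=\rho\cos\alpha+\rho\tan\theta_k\sin\alpha\,q_k$. Subtracting $1$ yields Eq.~\eqref{eq:lmhead-relative-score-distortion} exactly, with no asymptotics.

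\textbf{Score rate (Eq.~\eqref{eq:lmhead-relative-score-rate}).} Set $\rho=1$. Then
\[
\frac{\Delta z_k}{z_k}=-(1-\cos\alpha)+\tan\theta_k\sin\alpha\,q_k=\alpha\tan\theta_k\,q_k+O(\alpha^2).
\]
The remainder is bounded uniformly because $|q_k|\le1$. Taking absolute values and expectations gives $\alpha|\tan\theta_k|\,\Eb|q_k|+O(\alpha^2)$. The moment computation is
\[
\Eb|q_k|=\Eb\bigl[(q_k^2)^{1/2}\bigr]=\frac{B(1,\tfrac{d-2}{2})}{B(\tfrac12,\tfrac{d-2}{2})}=\frac{\Gamma(\tfrac{d-1}{2})}{\sqrt\pi\,\Gamma(\tfrac d2)}=\mu_d .
\]
The asymptotic $\mu_d=\sqrt{2/(\pi(d-1))}(1+O(d^{-1}))$ then follows from the Gautschi/Stirling ratio estimate $\Gamma(x+\tfrac12)/\Gamma(x)=\sqrt{x}\,(1+O(1/x))$ with $x=\tfrac{d-1}{2}$.

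\textbf{Angle rate (Eq.~\eqref{eq:lmhead-tangent-angle-rate}).} Here the cosine of the angle does not depend on $\rho$:
\[
\cos\widehat\theta_k=\cos\alpha\cos\theta_k+\sin\alpha\sin\theta_k\,q_k=\cos\theta_k+\alpha\sin\theta_k\,q_k+O(\alpha^2).
\]
Since $\theta_k\in(0,\pi)$, $\arccos$ is smooth near $\cos\theta_k$ with derivative $-1/\sin\theta_k$. Hence $\widehat\theta_k-\theta_k=-\alpha q_k+O(\alpha^2)$, and taking $\Eb|\cdot|$ gives $\alpha\mu_d+O(\alpha^2)$.

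The main obstacle is making this $O(\alpha^2)$ uniform in $q_k$, so that it survives the expectation. For fixed $\theta_k\in(0,\pi)$ and small $\alpha$, the argument of $\arccos$ stays in a compact subinterval of $(-1,1)$. On that interval $\arccos$ has bounded second derivative, so Taylor's theorem with remainder gives a deterministic bound $C(\theta_k)\alpha^2$. This bound passes directly under the expectation.

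The second-order estimate referenced as Eq.~\eqref{eq:lmhead-relative-score-second-order} would follow from the same expansions carried one order further. At that order it uses $\Eb q_k=0$ and $\Eb q_k^2=1/(d-1)$.
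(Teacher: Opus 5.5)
Your proposal is correct and follows the paper's proof essentially step for step: the same orthogonal decomposition of $\wv_k$ and $\widehat{\hv}_{\rm LM}$ about $\hv_{\rm LM}$, the same coordinate $q_k$, the exact identity $\cos\widehat\theta_k=\cos\theta_k\cos\alpha+\sin\theta_k\sin\alpha\,q_k$, and the same uniform-in-$q_k$ Taylor remainder. The differences are cosmetic: the paper derives the Beta law from a normalized Gaussian and computes $\Eb|q_k|=\mu_d$ by integrating the density of $q_k$, where you use the Beta half-moment. The only point to correct is your closing remark. The paper's second-order estimate keeps the $-(1-\cos\alpha)$ shift inside the absolute value, writing $\Eb|\Delta z_k/z_k|=\Psi_d(1-\cos\alpha,|\tan\theta_k\sin\alpha|)$ with $\Psi_d(A,B):=\Eb|{-A}+Bq_k|$ in closed form via the regularized incomplete beta function, and then uses that $\Psi_d$ is $1$-Lipschitz in $(A,B)$ to pass to $\Psi_d(\alpha^2/2,\alpha|\tan\theta_k|)+O(\alpha^3)$; it never uses $\Eb q_k^2=1/(d-1)$, since an expected absolute value is not determined by low moments.
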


A detailed derivation is given in App.~\ref{app:lmhead-tangent-rotation-proof}.  
The first-order and second-order analytic references from
Eqs.~\eqref{eq:lmhead-relative-score-rate} and \eqref{eq:lmhead-relative-score-second-order} 
set $\rho=1$ to remove the modest norm difference and
isolate the effect of changing the LM-head input direction, while the empirical W4 score-error curves in
Fig.~\ref{fig:sec4-topk-protection}(B) retain the measured
input-norm changes. 
We next relate the score changes to the log-probability changes:

\begin{theorem}[Log-probability change estimates]
\label{thm:log-probability-approximations}
At a fixed next-token position, let
$\Delta\log p_k:=\log\widehat p_k-\log p_k$. The exact identity and expansions for uniformly small centered score changes are
\begin{equation}
\begin{aligned}
\Delta\log p_k
&=\Delta z_k-\log\Eb_{j\sim\pv}e^{\Delta z_j}
=\Delta z_k-\Eb_{j\sim\pv}\Delta z_j-\KL{\pv}{\widehat{\pv}}
&&\text{\rm(exact)},
\\[-2pt]
\Delta\log p_k
&=\Delta z_k-\Eb_{j\sim\pv}\Delta z_j
+O\!\left(\operatorname{Var}_{j\sim\pv}(\Delta z_j)\right)
&&\text{\rm(1-order)},
\\[-2pt]
\Delta\log p_k
&=\Delta z_k-\Eb_{j\sim\pv}\Delta z_j
-\tfrac12\operatorname{Var}_{j\sim\pv}(\Delta z_j)
+O(
\Eb_{j\sim\pv}
\left|\Delta z_j-\Eb_{i\sim\pv}\Delta z_i\right|^3
)
&&\text{\rm(2-order)}.
\end{aligned}
\label{eq:log-probability-approximations}
\end{equation}
\end{theorem}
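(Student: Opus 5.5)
We prove Theorem~\ref{thm:log-probability-approximations} by writing the softmax normalizer of the quantized model as a $\pv$-weighted average of exponentiated score changes, and then expanding its logarithm.

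\textbf{Step 1: the exact identity.} Since $\widehat{\zv}=\zv+\Delta\zv$, we have
\[
\log\widehat p_k=z_k+\Delta z_k-\log\sum_j e^{z_j}e^{\Delta z_j}
\quad\text{and}\quad
\log p_k=z_k-\log\sum_j e^{z_j}.
\]
Subtracting, and using $\sum_j e^{z_j}e^{\Delta z_j}\big/\sum_i e^{z_i}=\sum_j p_je^{\Delta z_j}$, gives
\[
\Delta\log p_k=\Delta z_k-\log\Eb_{j\sim\pv}e^{\Delta z_j}.
\]
This is the first equality.

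For the second equality, apply the same identity to every $j$ and average against $\pv$:
\[
\KL{\pv}{\widehat\pv}=-\Eb_{j\sim\pv}\Delta\log p_j=-\Eb_{j\sim\pv}\Delta z_j+\log\Eb_{j\sim\pv}e^{\Delta z_j}.
\]
Hence $\log\Eb_{j\sim\pv}e^{\Delta z_j}=\Eb_{j\sim\pv}\Delta z_j+\KL{\pv}{\widehat\pv}$. Substituting this back yields the exact form with the KL term.

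\textbf{Step 2: center the score changes.} Let $m:=\Eb_{j\sim\pv}\Delta z_j$ and $c_j:=\Delta z_j-m$. Then
\[
\log\Eb_{j\sim\pv} e^{\Delta z_j}=m+\log\Eb_{j\sim\pv} e^{c_j}.
\]
So everything reduces to expanding the cumulant generating function $K:=\log\Eb e^{c}$ of a mean-zero variable $c$, under the hypothesis that $\max_j|c_j|\le\varepsilon$ is small. This hypothesis is what "uniformly small centered score changes" means.

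\textbf{Step 3: Taylor bounds.} By Taylor's theorem with remainder, for $|c|\le\varepsilon\le1$,
\[
e^{c}=1+c+\tfrac12c^2+r(c),\qquad |r(c)|\le \tfrac{e}{6}|c|^3.
\]
Taking expectations and using $\Eb c=0$ gives $\Eb e^{c}=1+\tfrac12\operatorname{Var}(c)+O(\Eb|c|^3)$. Then $\log(1+x)=x+O(x^2)$ with $x=O(\varepsilon^2)$ gives
\[
K=\tfrac12\operatorname{Var}(c)+O(\Eb|c|^3)+O(\operatorname{Var}(c)^2).
\]
Since $\operatorname{Var}(c)=\Eb c^2$, Jensen/Lyapunov gives $(\Eb c^2)^2\le(\Eb|c|^3)^{4/3}$. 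With $\Eb|c|^3\le 1$ this is at most $\Eb|c|^3$, so the quadratic error is absorbed into $O(\Eb|c|^3)$. This gives the 2-order statement, because $\operatorname{Var}_{j\sim\pv}(\Delta z_j)=\operatorname{Var}(c)$.

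For the 1-order statement, use the cruder bounds $e^{c}=1+c+O(c^2)$ and $\log(1+x)=O(x)$. These give $K=O(\Eb c^2)=O(\operatorname{Var})$.

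\textbf{Main obstacle.} The delicate step is showing that $\operatorname{Var}^2$ is dominated by $\Eb|c|^3$ in the 2-order remainder. The Lyapunov-inequality argument above handles it, with the $O$-constants depending only on the uniform bound $\varepsilon$. No dimension- or vocabulary-dependent constants enter, because every estimate is a pointwise bound averaged against $\pv$.
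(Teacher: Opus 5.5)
Your proof is correct and follows the paper's argument essentially step for step. Like the paper, you write the quantized normalizer as $Z\,\Eb_{j\sim\pv}e^{\Delta z_j}$, obtain the KL form by averaging the exact identity against $\pv$, center at $m=\Eb_{j\sim\pv}\Delta z_j$, and Taylor-expand $\log\Eb_{j\sim\pv}e^{c_j}$. The one addition is that you explicitly absorb the $O(\operatorname{Var}^2)$ term coming from $\log(1+x)=x+O(x^2)$ into $O(\Eb|c|^3)$ via Lyapunov's inequality, a step the paper's proof leaves implicit.
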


A detailed derivation is given in
App.~\ref{app:log-probability-approximation-proof}. Thms.~\ref{thm:lmhead-tangent-attenuation}
and~\ref{thm:log-probability-approximations} account for the two observations
above:
\begingroup
\setlength{\leftmargini}{0.8em}
\begin{itemize}
\setlength{\topsep}{2pt}
\setlength{\partopsep}{0pt}
\setlength{\itemsep}{2pt}
\setlength{\parsep}{0pt}
\setlength{\itemindent}{0pt}
\setlength{\listparindent}{0pt}

\item
\textbf{Why do projection angles change so little?}
Eq.~\eqref{eq:lmhead-tangent-angle-rate} in Thm.~\ref{thm:lmhead-tangent-attenuation} gives
$\Eb|\widehat{\theta}_k-\theta_k|
=\alpha\mu_d+O(\alpha^2)$.
Thus, in high dimension, the rotation of the hidden state is strongly
attenuated when measured as the angle change to any fixed LM-head row vector.
For Qwen3-32B, the predicted attenuation
$\mu_d^{-1}\approx89.7\times$ is close to the measured $78.6\times$ in
Fig.~\ref{fig:sec4-topk-protection}(A).

\item
\textbf{Why are higher-ranked tokens more stable?}
Eqs.~\eqref{eq:lmhead-relative-score-distortion}
and~\eqref{eq:lmhead-relative-score-rate} in Thm.~\ref{thm:lmhead-tangent-attenuation} show that, for positive scores,
relative score sensitivity is governed by
$\lvert\tan\theta_{\pi_r}\rvert$.
A smaller projection angle therefore gives a smaller relative score change
under the same hidden-state rotation.
Empirically, higher-ranked tokens tend to have smaller projection angles
(App.~\ref{app:sec5-cross-model-output}), matching the rank dependence in
Fig.~\ref{fig:sec4-topk-protection}(B).
Thm.~\ref{thm:log-probability-approximations} then connects these score
changes to log-probability changes, and both approximations reproduce the
increasing rank trend in Fig.~\ref{fig:sec4-topk-protection}(C).

\end{itemize}
\endgroup

These results also explain why a non-negligible rotation of
$\hv_{\rm LM}$ produces only modest output changes in
Fig.~\ref{fig:sec4-output-geometry}(A). 
KL remains small because it weights tokens by their original probabilities, 
and the highest-probability tokens are best preserved after quantization. 
CE remains small because the probability assigned to the ground-truth token changes little on average.

\section{Discussion}
\label{sec:scope}

\subsection{Consistency across Models, Quantization Settings, and PTQ Algorithms}
\label{sec:scope-model-quantization}

The counteraction phenomenon introduced in Sec.~\ref{sec:residual-counteraction}
persists across pretrained Qwen, OLMo, and Gemma models, including dense
and mixture-of-experts (MoE) architectures
(App.~\ref{app:cross-model-relative-recurrence}).
The layer-wise curve of
$\Eb\cos\angle(\Delta\hv^{(\ell-1)},\Delta\uv^{(\ell)})$ changes little and remains negative in most blocks when quantizing weights, activations, or both (Fig.~\ref{fig:sec5-quantization-scope}(A)), and across multiple PTQ
algorithms (Fig.~\ref{fig:sec5-quantization-scope}(B)), 
showing that calibration-based PTQ changes the counteraction geometry little (App.~\ref{app:sec5-quantization-transfer}).
Random-initialization
comparisons and intervention experiments on additional models further support the role of counteraction in slowing hidden-error growth
(Apps.~\ref{app:random-hidden-error-growth-rate}
and~\ref{app:counteraction-intervention}).

\begin{figure}[t]
  \centering
  \includegraphics[width=0.85\textwidth]{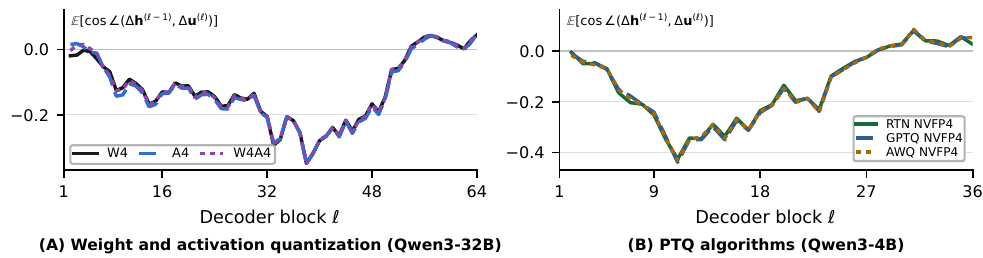}
  \caption{\textbf{Blockwise counteraction across quantization components
  and PTQ methods.}
  \textbf{(A)} Qwen3-32B on C4 under W4 (weight-only), A4 (activation-only), and W4A4 (both quantized).
  \textbf{(B)} Qwen3-4B under RTN, GPTQ, and AWQ, all quantizing the same weights with NVFP4 format.}
  \label{fig:sec5-quantization-scope}
\end{figure}

Preferential preservation of top-ranked scores and probabilities also transfers
across models. Higher-ranked tokens tend to have smaller projection angles to
the LM-head input and smaller relative score errors
(App.~\ref{app:sec5-cross-model-output}), and this rank dependence persists
across softmax temperatures (App.~\ref{app:sec5-temperature}).

\subsection{Analysis of the Source of Counteraction}
\label{sec:counteraction-source}

What produces this negative error interaction that slows hidden-error growth
(Sec.~\ref{sec:residual-counteraction})? Prior work found that residual updates
across Transformer layers can partially cancel earlier
updates. When an earlier contribution is rescaled, the later block adjusts its opposing update, showing that its response depends on the input signal~\citep{patrawala2025llmlayerscorrecteachother}. Does a block's input hidden error similarly induce a change in its update that points against and partially cancels that error?

This self-correcting tendency is much stronger for \emph{errors}
than for the native hidden-state/update pairs. 
Specifically,
$\cos\angle(\hv^{(\ell-1)},\uv^{(\ell)})$ and
$\cos\angle(
\compacthat{\hv}{(\ell-1)},
\compacthat{\uv}{(\ell)})$
are positive in $65.6\%$ and $64.1\%$ of the blocks, whereas interactions between the errors 
$\cos\angle(\Delta\hv^{(\ell-1)},\Delta\uv^{(\ell)})$ are \textbf{negative} in $82.5\%$ of the blocks
(App.~\ref{app:block-response-attribution}). Thus, the strong negative relation
is not a general tendency for a block update to oppose its input hidden state.
It emerges more consistently between the accumulated input \emph{hidden error} and the new \emph{block-update error}.

To identify its source, we separate
$\Delta\uv^{(\ell)}=
{\color{quantgreen}\compacthat{\Fv}{(\ell)}}(\compacthat{\hv}{(\ell-1)})-
{\color{cleanred}\Fv^{(\ell)}}(\hv^{(\ell-1)})$
into two parts. The direct \emph{weight effect} is
${\color{quantgreen}\compacthat{\Fv}{(\ell)}}(\hv^{(\ell-1)})-
{\color{cleanred}\Fv^{(\ell)}}(\hv^{(\ell-1)})$, which changes the weights
at a fixed input. The \emph{input-error response} is
${\color{quantgreen}\compacthat{\Fv}{(\ell)}}(\compacthat{\hv}{(\ell-1)})-
{\color{quantgreen}\compacthat{\Fv}{(\ell)}}(\hv^{(\ell-1)})$, which keeps the weight fixed and changes only its input. 
Although the two parts have comparable magnitudes, the
input-error response contributes $ 99.9\%$ of the negative cumulative
interaction
$\sum_{\ell=1}^{L}\termlabel{terminteraction}{T_{\rm inter}^{(\ell)}}$, 
compared with less than $0.1\%$ from the direct weight effect (App.~\ref{app:block-response-attribution}).
Counteraction therefore comes \textbf{mainly from the block's response to its input error}, which tends to oppose accumulated hidden error and slow its propagation.

\subsection{Rare Failures in Hidden-Error Propagation}
\label{sec:scope-data-position}

Counteraction significantly slows the hidden-error growth, which provides a self-correction effect that enhances the model's robustness to the quantization noise.
However, at rare decoding positions, the perturbation enters a regime where
counteraction is no longer sufficient to keep the quantized hidden-state
trajectory close to the original model, producing a large hidden-norm mismatch.
These rare failures produce extreme values that can dominate the averages and make the typical error dynamics harder to characterize.
We therefore exclude these positions from the recurrence-based aggregate statistics (only ${\sim}0.28\%$ of positions are excluded) using the norm-based rule in
App.~\ref{app:main-figure-protocols} and analyze them separately in
App.~\ref{app:relative-recurrence-position-selection}. These cases show that large
hidden-state norm distortion can cause output failures, but does not always do so
(Fig.~\ref{fig:app-sec3-filtered-fail-cases}).

\subsection{Limitations}
\label{sec:scope-limitations}

Our analysis has three limitations. First, we analyze the quantization mechanisms for individual
next-token predictions, but not how these mechanisms extend to multi-token generation. 
Second, our mechanism analysis relies on aggregate statistics 
and  does not precisely characterize every hidden-error trajectory or output change,
especially at rare positions with extreme errors.
Third, the LM-head theory uses a uniform-direction assumption for the approximation, although actual hidden-state rotation directions need not be uniform. Two questions remain open: whether training stochasticity produces
counteraction, as suggested by implicit-bias analyses of
SGD~\citep{pmlr-v119-smith20a}, and whether these findings can improve
post-training quantization.
\section{Conclusion}

This work studies why pretrained models maintain stable next-token predictions after
weight quantization. Comparing the original and quantized forward passes
reveals two mechanisms: (1) Even though the weights of randomly initialized and
pretrained models have nearly identical reconstruction cosine similarities under
NVFP4 quantization, pretrained models accumulate much less hidden error. Our exact recurrence isolates a mechanism specific to pretrained
models: the block-update error tends to oppose the block-input hidden error.
Interventions that remove this counteraction sharply increase the final hidden
error, providing causal evidence that counteraction is a major factor
limiting hidden-error growth. (2) The quantization-induced change in the final
hidden state is primarily a rotation rather than a change in norm. Our theory
shows that, in high dimensions, only a small component of this rotation affects
the projection onto any fixed LM-head weight vector and that LM-head geometry
makes the resulting score and log-probability changes smaller for higher-ranked
tokens. We verify both mechanisms across models and quantization settings.
\subsubsection*{Acknowledgments}
The authors sincerely thank Pengle Zhang and Zichen Liang for insightful discussions.

\bibliography{references}
\bibliographystyle{iclr2027_conference}

\newpage
\appendix
\etocdepthtag.toc{appendix}
{\centering\section*{Appendix Contents}}
\etocsettagdepth{main}{none}
\etocsettagdepth{appendix}{subsection}
\etocsettocstyle{}{}
{\tableofcontents}
\bigskip

\section{Related Work}
\label{app:related-work}

We organize related work around three stages of quantization error: how it is
introduced, how it propagates through Transformer blocks, and how it affects
model outputs.

\subsection{PTQ Optimization and Low-Precision Formats}

Research on weight-only LLM Post-Training Quantization (PTQ) largely frames PTQ as an optimization problem, exemplified
by the second-order weight reconstruction of GPTQ~\citep{frantar2023gptq} and the
activation-aware scaling of AWQ~\citep{lin2024awq}. A broad family of successors
extends the calibration objective to learned clipping and equivalent
transformations, block-level and cross-block reconstruction, asymmetric calibration
against full-precision outputs, alternating error correction, and per-weight loss
sensitivity~\citep{shao2024omniquant,li2021brecq,ding2025cbq,li2025gptaq,
zhang2026qronos,hu2025identifyingsensitiveweights}. These methods improve the
quantized model by reducing the errors introduced or retained during calibration.

On the format side, a broad microscaling benchmark finds that scale construction and
compatibility between formats and algorithms are critical at 4-bit
precision~\citep{zhang-etal-2026-benchmarking}. NVFP4 has become common in recent
large-model inference and training recipes, pairing fine-grained scaling for more
accurate 4-bit representation with methods for stable end-to-end low-precision
training~\citep{meng-etal-2026-arcquant,nvidia2025pretrainingnvfp4,
chen2026tetrajetv2,panferov2026quartetii}. We use NVFP4 round-to-nearest (RTN)
as our main setting (App.~\ref{app:nvfp4-format}). This lets us study the
pretrained model's response without adding format-specific optimization.

\subsection{Transformer Block Responses to Quantization}

Prior work has identified several factors that affect quantization error.
Input-dependent failures have been linked to residual
magnitudes, late-layer activations, and MLP gates
~\citep{chang-etal-2025-inputs}, 
while robustness across training trajectories depends strongly on learning-rate dynamics and other training hyperparameters~\citep{catalantatjer2026trainingdynamicsimpactposttraining}.
Broader accounts connect parameter-space robustness to training noise and flat minima~\citep{hochreiter1997flat,foret2021sam}. 
Related theory studies gradient variance during quantized training~\citep{chen2020statistical} 
and forward stability in quantized convolutional and graph residual networks~\citep{benyair2022quantized}. 
This literature helps explain when and why quantization error becomes large in the forward and backward passes, but not how errors from different Transformer blocks interact.

Some PTQ methods address error propagation during optimization.
Quantization Error Propagation
(QEP)~\citep{NEURIPS2025_df2034a5} carries upstream quantization error into
layer calibration and adjusts its reconstruction target to compensate for accumulated error.
Model-Preserving Adaptive
Rounding~\citep{tseng2026modelpreservingadaptiverounding} optimizes
rounding against approximate end-to-end output error because local
activation error can be a poor proxy for the final distribution. 
These methods modify the quantization objective to reduce propagated error. 
Rather than proposing another objective, we study the propagation process itself: 
how errors introduced by successive Transformer blocks interact and accumulate through depth. 
Clarifying this mechanism may inspire future PTQ methods.

A closer mechanistic parallel comes from full-precision Transformers.
\citet{patrawala2025llmlayerscorrecteachother} show that residual contributions from different layers can oppose one another.
Under quantization, we find that the block-update error tends to point against the hidden error already present at the block input (Sec.~\ref{sec:residual-counteraction}). 
This negative interaction between errors is much stronger than 
that between the hidden state and block update. 
Our decomposition further shows that this counteraction comes from how the quantized block responds to its input error
(Sec.~\ref{sec:counteraction-source};
App.~\ref{app:block-response-attribution}).

\subsection{Output Robustness under Quantization}

Output-aware PTQ starts from the observation that hidden-state reconstruction
does not necessarily preserve token predictions. These methods incorporate
output cross-entropy distortion or end-loss gradients into
calibration~\citep{edalati2025oac,kim2025guidedquant}.
Logit-aware Final-block Quantization
(LFQ)~\citep{lee2026lfqlogitawarefinalblockquantization} shows that lower final-block
hidden-state mean-squared error can worsen token predictions, and calibrates
the final decoder block through the full-precision LM head. These methods use
output sensitivity to improve quantization, but do not explain how the output probabilities respond to hidden error accumulated through the Transformer blocks.

Sequence-level studies examine the downstream effects of quantization on generation. 4-bit weight quantization
can retain reasoning performance in many
settings~\citep{liu2025quantizationhurtsreasoningempirical}, while more aggressive PTQ lengthens chains of thought and induces failures after correct intermediate
answers~\citep{lotfi2026quantizedreasoningmodelsthink}. These studies measure behavior in the more demanding end-to-end setting, where the original and quantized models follow their own generated histories. 
To understand the mechanism more clearly, 
we compare the two models on the same prefix at each next-token prediction and trace how quantization changes the hidden state, token scores, and output probabilities (Sec.~\ref{sec:lmhead-geometry}).

At each prediction, the LM head maps the final hidden state to token scores by projecting it onto rows that serve as output word
embeddings~\citep{press2017output}. Prior work studies the structure of the LM
head and the output distributions it can represent
\citep{yang2018softmax,cancedda2024spectral,cho2025token,finlayson2026signature}.
Related work also connects hidden states to outputs through intermediate-state
decoding and general perturbation
bounds~\citep{logitlens,belrose2023tunedlens,tsuzuku2018lipschitz}.
These studies characterize the structure of the LM head and general output
sensitivity, but do not explain why quantization affects high-ranked token
scores less than the rest of the vocabulary. 
We explain this trend
theoretically in Sec.~\ref{sec:lmhead-top-rank-geometry}.

\section{Notation}
\label{app:notation}

\begin{center}
\small
\setlength{\tabcolsep}{6pt}
\begin{tabular}{@{}p{0.30\textwidth}p{0.64\textwidth}@{}}
\toprule
Symbol & Meaning \\
\midrule
\multicolumn{2}{@{}l@{}}{\emph{Dimensions and indices}}\\
$d$;~ $L$;~ $\ell$ & hidden dimension; number of decoder blocks; block index \\
$\Vc$ & token vocabulary with $|\Vc|$ tokens \\
$\pi_r\in\Vc$ & token at rank $r$ of the BF16 logits $\zv$ at one position \\
$\operatorname{Top}_K(\zv)\subseteq\Vc$ & the $K$ tokens with the largest entries of $\zv$;
  $\operatorname{Top}_K(\zv)=\{\pi_1,\ldots,\pi_K\}$ \\
\midrule
\multicolumn{2}{@{}l@{}}{\emph{Models and passes}}\\
$\color{cleanred}{\Mc}$;~ $\color{quantgreen}{\widehat{\Mc}}$ &
  original (BF16) model; quantized (NVFP4) model 
  \\
$\quant{\Wv}$ & NVFP4 quantize--dequantize reconstruction of a weight matrix $\Wv$ \\
$\color{cleanred}{\Fv^{(\ell)}}$;~ $\color{quantgreen}{\compacthat{\Fv}{(\ell)}}$ &
  BF16/W4 block maps (Attention and MLP of block $\ell$) \\
\midrule
\multicolumn{2}{@{}l@{}}{\emph{Residual stream states and errors}}\\
$\hv^{(\ell)}$, $\compacthat{\hv}{(\ell)}$;~ $\uv^{(\ell)}$, $\compacthat{\uv}{(\ell)}$ &
  BF16/W4 hidden states and block updates;
  $\hv^{(\ell)}=\hv^{(\ell-1)}+\uv^{(\ell)}$ \\
$\Delta\hv^{(\ell)}$;~ $\Delta\uv^{(\ell)}$ &
  hidden error $\compacthat{\hv}{(\ell)}-\hv^{(\ell)}$ (the hidden error entering
  block $\ell{+}1$); block-update error
  $\compacthat{\uv}{(\ell)}-\uv^{(\ell)}$ \\
$R^{(\ell)}$ & relative hidden-error norm
  $\norm{\Delta\hv^{(\ell)}}_2/\norm{\hv^{(\ell)}}_2$ \\
$\termlabel{termcontrast}{T_{\rm add}}$,
$\termlabel{terminteraction}{T_{\rm inter}}$,
$\termlabel{termalignment}{T_{\rm align}}$ &
  the three terms of the recurrence in
  Thm.~\ref{thm:relative-hidden-error-recurrence}: relative added-error contribution,
  error interaction (named \emph{counteraction} when negative), and
  the residual-norm contribution from the clean input--update dot product \\
\midrule
\multicolumn{2}{@{}l@{}}{\emph{Output layer}}\\
$\hv_{\rm LM}$, $\widehat{\hv}_{\rm LM}$ & BF16/W4 LM-head inputs,
  $\hv_{\rm LM}=\operatorname{Norm}(\hv^{(L)})$ \\
$\Wv_{\mathrm{LM}}\in\Rb^{|\Vc|\times d}$;~ $\wv_k^\top$ &
  shared BF16 LM head; its row for token $k$ \\
$\zv$, $\widehat{\zv}$;~ $\pv$, $\widehat{\pv}$ &
  BF16/W4 logits $\zv=\Wv_{\mathrm{LM}}\hv_{\rm LM}$ and next-token
  distributions%
  \\
$\rho$ & LM-head input-norm ratio
  $\norm{\widehat{\hv}_{\rm LM}}_2/\norm{\hv_{\rm LM}}_2$ \\
$\alpha$ & hidden rotation
  $\angle(\hv_{\rm LM},\widehat{\hv}_{\rm LM})$ \\
$\theta_k$;~ $\widehat{\theta}_k$ & projection angles
  $\angle(\wv_k,\hv_{\rm LM})$ and
  $\angle(\wv_k,\widehat{\hv}_{\rm LM})$ \\
$g\in\Vc$;~ $\Delta\mathrm{CE}$;~ $\KL{\pv}{\widehat{\pv}}$ &
  ground-truth next token; CE increase
  $\mathrm{CE}_{\mathrm{W4}}-\mathrm{CE}_{\mathrm{BF16}}$; forward KL \\
Flip@1;~ Ret@$K$ & greedy flip rate
  $\Pr[\operatorname{Top}_1(\widehat{\zv})\neq\operatorname{Top}_1(\zv)]$ and
  mean top-$K$ retention
  $\Eb[\tfrac1K|\operatorname{Top}_K(\zv)\cap\operatorname{Top}_K(\widehat{\zv})|]$ \\
\bottomrule
\end{tabular}
\end{center}

\section{Proofs and Theoretical Analysis}
\label{app:proofs}

\subsection{Hidden-Error Recurrence Proofs}
\label{app:hidden-error-recurrence-proofs}

\begin{proof}[Proof of Prop.~\ref{prop:absolute-hidden-error-recurrence}]
Subtracting the original residual update from the quantized residual update in
Eq.~\eqref{eq:setup-paired-block} gives
$\Delta\hv^{(\ell)}=\Delta\hv^{(\ell-1)}+\Delta\uv^{(\ell)}$. Therefore,
\begin{align*}
\norm{\Delta\hv^{(\ell)}}_2^2
~=~\norm{\Delta\hv^{(\ell-1)}+\Delta\uv^{(\ell)}}_2^2
~=~\norm{\Delta\hv^{(\ell-1)}}_2^2
 +\norm{\Delta\uv^{(\ell)}}_2^2
 +2\iprod{\Delta\hv^{(\ell-1)},\Delta\uv^{(\ell)}}.
\end{align*}
Subtracting $\norm{\Delta\hv^{(\ell-1)}}_2^2$ from both sides proves
Eq.~\eqref{eq:absolute-hidden-error-recurrence}.
\end{proof}

\begin{proof}[Proof of Thm.~\ref{thm:relative-hidden-error-recurrence}]
By Prop.~\ref{prop:absolute-hidden-error-recurrence},
\begin{align*}
\bigl(R^{(\ell)}\bigr)^2
&=\frac{\norm{\Delta\hv^{(\ell-1)}}_2^2
        +\norm{\Delta\uv^{(\ell)}}_2^2
        +2\iprod{\Delta\hv^{(\ell-1)},\Delta\uv^{(\ell)}}}
       {\norm{\hv^{(\ell)}}_2^2}.
\end{align*}
The definition of $R^{(\ell-1)}$ gives
\begin{equation*}
\norm{\Delta\hv^{(\ell-1)}}_2^2
=\norm{\hv^{(\ell-1)}}_2^2\bigl(R^{(\ell-1)}\bigr)^2.
\end{equation*}
Substituting this identity and subtracting
$\bigl(R^{(\ell-1)}\bigr)^2$ yields
\begin{align*}
\bigl(R^{(\ell)}\bigr)^2-\bigl(R^{(\ell-1)}\bigr)^2
&=\frac{\norm{\Delta\uv^{(\ell)}}_2^2
        +2\iprod{\Delta\hv^{(\ell-1)},\Delta\uv^{(\ell)}}}
       {\norm{\hv^{(\ell)}}_2^2}
+
\left(
\frac{\norm{\hv^{(\ell-1)}}_2^2}{\norm{\hv^{(\ell)}}_2^2}-1
\right)
\bigl(R^{(\ell-1)}\bigr)^2.
\end{align*}

The original residual update
$\hv^{(\ell)}=\hv^{(\ell-1)}+\uv^{(\ell)}$ implies
\begin{equation*}
\norm{\hv^{(\ell)}}_2^2
=\norm{\hv^{(\ell-1)}}_2^2+\norm{\uv^{(\ell)}}_2^2
+2\iprod{\hv^{(\ell-1)},\uv^{(\ell)}}.
\end{equation*}
Therefore,
\begin{align*}
\frac{\norm{\hv^{(\ell-1)}}_2^2}{\norm{\hv^{(\ell)}}_2^2}-1
&=-\frac{\norm{\uv^{(\ell)}}_2^2
          +2\iprod{\hv^{(\ell-1)},\uv^{(\ell)}}}
         {\norm{\hv^{(\ell)}}_2^2}.
\end{align*}
Substitution gives
\begin{align*}
\bigl(R^{(\ell)}\bigr)^2-\bigl(R^{(\ell-1)}\bigr)^2
&=\frac{\norm{\Delta\uv^{(\ell)}}_2^2
        -\norm{\uv^{(\ell)}}_2^2
         \bigl(R^{(\ell-1)}\bigr)^2}
       {\norm{\hv^{(\ell)}}_2^2}
\nonumber\\
&\quad+
\frac{2\iprod{\Delta\hv^{(\ell-1)},\Delta\uv^{(\ell)}}}
     {\norm{\hv^{(\ell)}}_2^2}
\;-
\frac{2\iprod{\hv^{(\ell-1)},\uv^{(\ell)}}}
     {\norm{\hv^{(\ell)}}_2^2}
\bigl(R^{(\ell-1)}\bigr)^2.
\end{align*}
Finally, the first fraction factors as
\begin{align*}
\frac{\norm{\Delta\uv^{(\ell)}}_2^2
      -\norm{\uv^{(\ell)}}_2^2
       \bigl(R^{(\ell-1)}\bigr)^2}
     {\norm{\hv^{(\ell)}}_2^2}
&=\left(\frac{\norm{\uv^{(\ell)}}_2}
                  {\norm{\hv^{(\ell)}}_2}\right)^2
  \left[
  \left(\frac{\norm{\Delta\uv^{(\ell)}}_2}
             {\norm{\uv^{(\ell)}}_2}\right)^2
  -\bigl(R^{(\ell-1)}\bigr)^2
  \right].
\end{align*}
The three lines are $T_{\rm add}$, $T_{\rm inter}$, and $T_{\rm align}$,
respectively, which proves
Eq.~\eqref{eq:relative-hidden-error-recurrence}.
\end{proof}

\subsection{Length--Angle Decomposition}
\label{app:residual-direction-proofs}

\begin{proof}[Proof of Prop.~\ref{prop:hidden-error-magnitude-direction}]
By the definition of cosine,
$\iprod{\hv^{(\ell)},\compacthat{\hv}{(\ell)}}
=\norm{\hv^{(\ell)}}_2\norm{\compacthat{\hv}{(\ell)}}_2
\cos\angle\bigl(\hv^{(\ell)},\compacthat{\hv}{(\ell)}\bigr)$. Therefore,
\begin{align*}
\norm{\Delta\hv^{(\ell)}}_2^2
&=\norm{\widehat{\hv}^{(\ell)}}_2^2+\norm{\hv^{(\ell)}}_2^2
 -2\iprod{\hv^{(\ell)},\widehat{\hv}^{(\ell)}}\\
&=\left(\norm{\widehat{\hv}^{(\ell)}}_2
        -\norm{\hv^{(\ell)}}_2\right)^2
 +2\norm{\hv^{(\ell)}}_2\norm{\widehat{\hv}^{(\ell)}}_2
 \left[
 1-\cos\angle\bigl(\hv^{(\ell)},\widehat{\hv}^{(\ell)}\bigr)
 \right].
\end{align*}
Dividing by $\norm{\hv^{(\ell)}}_2^2$ proves
Eq.~\eqref{eq:hidden-error-magnitude-direction}. Substituting
$\norm{\compacthat{\hv}{(\ell)}}_2=\norm{\hv^{(\ell)}}_2$ gives the
equal-length identity stated in the proposition.
\end{proof}

\subsection{LM-Head Score Decomposition}
\label{app:lmhead-geometry-proofs}

Sec.~\ref{sec:lmhead-top-rank-geometry} studies how LM-head input rotation
changes token scores. The following exact decomposition separates score changes
caused by the input norm from those caused by the projection angle and clarifies
what the $\rho=1$ analytic references omit.

With $\rho$, $\theta_k$, and $\widehat{\theta}_k$ as in
Thm.~\ref{thm:lmhead-tangent-attenuation}, each vocabulary score change
splits exactly into a length part and an angle part:
\begin{equation}
\widehat z_k-z_k
=(\rho-1)z_k
+\rho\norm{\wv_k}_2\norm{\hv_{\rm LM}}_2
  (\cos\widehat\theta_k-\cos\theta_k).
\label{eq:lmhead-logit-decomposition}
\end{equation}

\begin{proof}[Proof of Eq.~\eqref{eq:lmhead-logit-decomposition}]
For each nonzero vocabulary row $\wv_k$,
\[
z_k=\norm{\wv_k}_2\norm{\hv_{\rm LM}}_2\cos\theta_k,
\qquad
\widehat z_k
=\norm{\wv_k}_2\norm{\widehat{\hv}_{\rm LM}}_2
 \cos\widehat{\theta}_k.
\]
Subtracting the two equalities and adding and subtracting
$\norm{\wv_k}_2\norm{\widehat{\hv}_{\rm LM}}_2\cos\theta_k$
gives Eq.~\eqref{eq:lmhead-logit-decomposition}. If
$\widehat{\hv}_{\rm LM}=c\hv_{\rm LM}$ with $c>0$, then
$\widehat{\zv}=c\zv$, so every pairwise score ordering is unchanged.
\end{proof}

\subsection{High-Dimensional Rotation Analysis}
\label{app:lmhead-tangent-rotation-proof}

\begin{proof}[Proof of Thm.~\ref{thm:lmhead-tangent-attenuation}]
We first characterize the geometry of the LM-head input rotation relative to
a fixed LM-head weight vector.

\noindent\textbf{Geometry of the hidden-state rotation.}
Let
\begin{equation*}
\bar{\hv}:=\frac{\hv_{\rm LM}}{\norm{\hv_{\rm LM}}_2},
\qquad
\bar{\wv}_k:=\frac{\wv_k}{\norm{\wv_k}_2},
\qquad
P_\perp:=\Iv-\bar{\hv}\bar{\hv}^{\mathsf T}.
\end{equation*}
Because $\theta_k\in(0,\pi)$, the component of $\bar{\wv}_k$ orthogonal to
$\bar{\hv}$ is nonzero. Define its unit direction by
\begin{equation*}
\boldsymbol{\xi}_k:=
\frac{P_\perp\wv_k}{\norm{P_\perp\wv_k}_2}.
\end{equation*}
Then $\bar{\wv}_k$ decomposes into components parallel and orthogonal to
$\bar{\hv}$:
\begin{equation*}
\bar{\wv}_k
=
\cos\theta_k\,\bar{\hv}
+
\sin\theta_k\,\boldsymbol{\xi}_k.
\end{equation*}

Similarly, because
$\alpha=\angle(\hv_{\rm LM},\widehat{\hv}_{\rm LM})\in(0,\pi)$,
the normalized quantized LM-head input can be written as
\begin{equation*}
\frac{\widehat{\hv}_{\rm LM}}
     {\norm{\widehat{\hv}_{\rm LM}}_2}
=
\cos\alpha\,\bar{\hv}
+
\sin\alpha\,\boldsymbol{\eta},
\end{equation*}
where
\begin{equation*}
\boldsymbol{\eta}:=
\frac{P_\perp\widehat{\hv}_{\rm LM}}
     {\norm{P_\perp\widehat{\hv}_{\rm LM}}_2}
\end{equation*}
is a unit vector orthogonal to $\bar{\hv}$.
Define the directional coordinate
\begin{equation*}
q_k:=\iprod{\boldsymbol{\eta},\boldsymbol{\xi}_k}.
\end{equation*}
Thus, $q_k$ measures how much the hidden-state rotation points toward the
component of $\wv_k$ orthogonal to the original hidden state.
Taking the inner product between the decompositions of
$\bar{\wv}_k$ and
$\widehat{\hv}_{\rm LM}/\norm{\widehat{\hv}_{\rm LM}}_2$
gives the exact identity
\begin{equation}
\cos\widehat\theta_k
=
\cos\theta_k\cos\alpha
+
\sin\theta_k\sin\alpha\,q_k.
\label{eq:lmhead-tangent-cosine}
\end{equation}

\textbf{Isotropic rotation direction and the distribution of $q_k$.}
Conditional on the measured rotation angle $\alpha$, the theorem assumes that
the rotation direction $\boldsymbol{\eta}$ is uniformly distributed over all
unit directions orthogonal to $\hv_{\rm LM}$. Equivalently,
$\boldsymbol{\eta}$ is uniform on the unit sphere
$\mathbb{S}^{d-2}$ in the $(d-1)$-dimensional subspace
$\hv_{\rm LM}^{\perp}$.

A convenient way to represent such a uniform direction is to normalize an
isotropic Gaussian vector. Choose an orthonormal basis of
$\hv_{\rm LM}^{\perp}$ whose first basis vector is
$\boldsymbol{\xi}_k$, and let
\begin{equation*}
\gv=(G_1,\ldots,G_{d-1})^{\mathsf T},
\qquad
G_j\stackrel{\mathrm{i.i.d.}}{\sim}\mathcal{N}(0,1).
\end{equation*}
Because the standard Gaussian distribution is rotationally invariant,
$\gv/\norm{\gv}_2$ is uniformly distributed on
$\mathbb{S}^{d-2}$. Hence we may represent
$\boldsymbol{\eta}$ in this basis as
\begin{equation*}
\boldsymbol{\eta}
\stackrel{d}{=}
\frac{\gv}{\norm{\gv}_2}.
\end{equation*}
where $\stackrel{d}{=}$ denotes equality in distribution.
This does not mean that a uniform spherical direction is Gaussian:
the Gaussian vector $\gv$ has a random norm, while
$\gv/\norm{\gv}_2$ has unit norm and only its direction is retained.

Since the first basis vector is $\boldsymbol{\xi}_k$, the directional
coordinate
\begin{equation*}
q_k:=\iprod{\boldsymbol{\eta},\boldsymbol{\xi}_k}
\end{equation*}
is distributed as the first coordinate of this normalized Gaussian vector:
\begin{equation*}
q_k
\stackrel{d}{=}
\frac{G_1}
{\sqrt{G_1^2+\cdots+G_{d-1}^2}}.
\end{equation*}
Therefore,
\begin{equation*}
q_k^2
\stackrel{d}{=}
\frac{G_1^2}
{G_1^2+\sum_{j=2}^{d-1}G_j^2}.
\end{equation*}
Now
\begin{equation*}
G_1^2\sim\chi_1^2,
\qquad
\sum_{j=2}^{d-1}G_j^2\sim\chi_{d-2}^2,
\end{equation*}
and these two random variables are independent.

For independent $X\sim\chi_m^2$ and $Y\sim\chi_n^2$,
$X/(X+Y)\sim\operatorname{Beta}(m/2,n/2)$.
Applying this with $m=1$ and $n=d-2$ gives
\begin{equation*}
q_k^2
\sim
\operatorname{Beta}\!\left(
\frac12,\frac{d-2}{2}
\right).
\end{equation*}
Moreover, $q_k$ is symmetric about zero, so $\Eb[q_k]=0$.
Equivalently, $q_k$ has density
\begin{equation*}
f_d(q)
=
\frac{\Gamma((d-1)/2)}
{\sqrt{\pi}\,\Gamma((d-2)/2)}
(1-q^2)^{(d-4)/2},
\qquad -1<q<1.
\end{equation*}
Its mean absolute value is
\begin{align*}
\Eb|q_k|
&=
2
\frac{\Gamma((d-1)/2)}
{\sqrt{\pi}\,\Gamma((d-2)/2)}
\int_0^1 q(1-q^2)^{(d-4)/2}\,dq
\\
&=
\frac{\Gamma((d-1)/2)}
{\sqrt{\pi}\,\Gamma(d/2)}
=: \mu_d.
\end{align*}
The standard gamma-ratio expansion gives
\begin{equation*}
\mu_d
=
\sqrt{\frac{2}{\pi(d-1)}}
\left(1+O(d^{-1})\right).
\end{equation*}
Thus, $q_k$ is the coordinate of a random unit direction along one fixed
direction in a $(d-1)$-dimensional space, and its typical magnitude is
$O(d^{-1/2})$. This is the source of the high-dimensional attenuation in
Eq.~\eqref{eq:lmhead-tangent-angle-rate}.

\textbf{Projection-angle change.}
For fixed $q_k$, define
\begin{equation*}
g_k(\alpha):=
\cos\theta_k\cos\alpha
+
\sin\theta_k\sin\alpha\,q_k,
\qquad
\widehat\theta_k=\arccos g_k(\alpha).
\end{equation*}
At $\alpha=0$,
\begin{equation*}
g_k(0)=\cos\theta_k,
\qquad
g_k'(0)=\sin\theta_k\,q_k.
\end{equation*}
Therefore,
\begin{align*}
\left.
\frac{d\widehat\theta_k}{d\alpha}
\right|_{\alpha=0}
=
-\frac{g_k'(0)}
       {\sqrt{1-g_k(0)^2}}
=
-\frac{\sin\theta_k\,q_k}
       {\sqrt{1-\cos^2\theta_k}}
=
-q_k,
\end{align*}
where the last equality uses $\theta_k\in(0,\pi)$ and hence
$\sin\theta_k>0$.

For fixed $\theta_k\in(0,\pi)$, the denominator above remains bounded away
from zero for sufficiently small $\alpha$, uniformly over
$q_k\in[-1,1]$. Thus the Taylor remainder can be taken uniformly in $q_k$,
and
\begin{equation*}
\widehat\theta_k-\theta_k
=
-\alpha q_k+O(\alpha^2).
\end{equation*}
Using
\begin{equation*}
\left|
\,|\widehat\theta_k-\theta_k|
-
\alpha|q_k|
\right|
=
O(\alpha^2)
\end{equation*}
uniformly in $q_k$, taking expectation gives
\begin{equation*}
\Eb\left|
\widehat\theta_k-\theta_k
\right|
=
\alpha\mu_d+O(\alpha^2).
\end{equation*}
Together with the expression for $\mu_d$ above, this proves
Eq.~\eqref{eq:lmhead-tangent-angle-rate}.

\textbf{Relative score change.}
The original and quantized scores are
\begin{equation*}
z_k
=
\norm{\wv_k}_2
\norm{\hv_{\rm LM}}_2
\cos\theta_k,
\qquad
\widehat z_k
=
\norm{\wv_k}_2
\norm{\widehat{\hv}_{\rm LM}}_2
\cos\widehat\theta_k.
\end{equation*}
Using
\begin{equation*}
\rho
:=
\frac{\norm{\widehat{\hv}_{\rm LM}}_2}
     {\norm{\hv_{\rm LM}}_2},
\end{equation*}
and substituting Eq.~\eqref{eq:lmhead-tangent-cosine},
\begin{align*}
\frac{\widehat z_k}{z_k}
&=
\rho
\frac{\cos\widehat\theta_k}
     {\cos\theta_k}
\\
&=
\rho\cos\alpha
+
\rho\tan\theta_k\sin\alpha\,q_k.
\end{align*}
Subtracting one gives the exact decomposition
\begin{equation*}
\frac{\widehat z_k-z_k}{z_k}
=
(\rho\cos\alpha-1)
+
\rho\tan\theta_k\sin\alpha\,q_k,
\end{equation*}
which proves
Eq.~\eqref{eq:lmhead-relative-score-distortion}.

When $\rho=1$,
\begin{equation*}
\frac{\widehat z_k-z_k}{z_k}
=
(\cos\alpha-1)
+
\tan\theta_k\sin\alpha\,q_k.
\end{equation*}
Since
\begin{equation*}
\cos\alpha-1=O(\alpha^2),
\qquad
\sin\alpha=\alpha+O(\alpha^3),
\end{equation*}
we obtain, uniformly over $q_k\in[-1,1]$,
\begin{equation*}
\frac{\widehat z_k-z_k}{z_k}
=
\alpha\tan\theta_k\,q_k+O(\alpha^2).
\end{equation*}
Taking the expected absolute value therefore gives
\begin{equation*}
\Eb\left|
\frac{\widehat z_k-z_k}{z_k}
\right|
=
\alpha
\lvert\tan\theta_k\rvert
\mu_d
+
O(\alpha^2),
\end{equation*}
which proves
Eq.~\eqref{eq:lmhead-relative-score-rate}.

\textbf{Finite-angle expectation.}
The first-order result above keeps only the leading term in $\alpha$.
For $\rho=1$, we can also compute the expected absolute relative score
change exactly at a finite rotation angle.

Let $q$ denote a random variable with density $f_d$ above, and for
$A,B\ge0$ define
\begin{equation}
\Psi_d(A,B)
:=
\Eb_q\!\left[\lvert -A+Bq \rvert\right].
\label{eq:tangent-shifted-absolute-moment-definition}
\end{equation}
Thus, $\Psi_d(A,B)$ is simply the expected absolute value of a random
directional term $Bq$ shifted by the deterministic quantity $A$.

Let
\begin{equation*}
I_x(a,b)
:=
\frac{B_x(a,b)}{B(a,b)},
\qquad
B_x(a,b)
:=
\int_0^x
t^{a-1}(1-t)^{b-1}\,dt,
\end{equation*}
denote the regularized incomplete beta function, with
$B(a,b):=B_1(a,b)$.

We now derive a closed form for $\Psi_d$.
If $B=0$ or $A\ge B$, then for all $q\in[-1,1]$,
\begin{equation*}
-A+Bq\le -A+B\le0.
\end{equation*}
Hence
\begin{equation*}
\Psi_d(A,B)
=
\Eb_q[A-Bq]
=
A,
\end{equation*}
because $\Eb_q[q]=0$.

Now suppose $0\le A<B$, and define
\begin{equation*}
t:=\frac{A}{B}\in[0,1).
\end{equation*}
The quantity $-A+Bq$ changes sign at $q=t$. Splitting the expectation at
this point and using the symmetry $f_d(q)=f_d(-q)$ gives
\begin{align*}
\Psi_d(A,B)
&=
\Eb_q|-A+Bq|
\\
&=
A\Pr(|q|\le t)
+
2B\int_t^1 qf_d(q)\,dq.
\end{align*}

Because
\begin{equation*}
q^2
\sim
\operatorname{Beta}\!\left(
\frac12,\frac{d-2}{2}
\right),
\end{equation*}
the first term is
\begin{equation*}
\Pr(|q|\le t)
=
\Pr(q^2\le t^2)
=
I_{t^2}\!\left(
\frac12,\frac{d-2}{2}
\right).
\end{equation*}
For the second term,
\begin{align*}
2\int_t^1 qf_d(q)\,dq
&=
2
\frac{\Gamma((d-1)/2)}
     {\sqrt{\pi}\,\Gamma((d-2)/2)}
\int_t^1
q(1-q^2)^{(d-4)/2}\,dq
=
\mu_d
(1-t^2)^{(d-2)/2}.
\end{align*}
Substituting $t=A/B$ gives
\begin{equation}
\Psi_d(A,B)
=
\begin{cases}
A,
&
B=0\ \text{or}\ A\ge B,
\\[1ex]
A
I_{(A/B)^2}\!\left(
\frac12,\frac{d-2}{2}
\right)
+
B\mu_d
\left[
1-(A/B)^2
\right]^{(d-2)/2},
&
0\le A<B.
\end{cases}
\label{eq:tangent-shifted-absolute-moment}
\end{equation}

Returning to the relative score error with $\rho=1$,
\begin{equation*}
\frac{\widehat z_k-z_k}{z_k}
=
-(1-\cos\alpha)
+
\tan\theta_k\sin\alpha\,q_k.
\end{equation*}
Since $q_k$ is symmetric about zero, the sign of the coefficient multiplying
$q_k$ does not affect the expected absolute value. Therefore, setting
\begin{equation*}
A:=1-\cos\alpha,
\qquad
B:=
\left|
\tan\theta_k\sin\alpha
\right|
\end{equation*}
gives the exact finite-angle expression
\begin{equation*}
\Eb\left|
\frac{\widehat z_k-z_k}{z_k}
\right|
=
\Psi_d\!\left(
1-\cos\alpha,
\left|
\tan\theta_k\sin\alpha
\right|
\right).
\end{equation*}

Finally,
\begin{equation*}
1-\cos\alpha
=
\frac{\alpha^2}{2}
+
O(\alpha^4),
\qquad
\sin\alpha
=
\alpha
+
O(\alpha^3).
\end{equation*}
Moreover, since $|q|\le1$,
\begin{equation*}
\left|
\Psi_d(A,B)-\Psi_d(A',B')
\right|
\le
|A-A'|+|B-B'|.
\end{equation*}
Thus replacing the exact arguments by their second-order approximations
changes $\Psi_d$ by at most $O(\alpha^3)$, yielding
\begin{equation}
\Eb\left|
\frac{\widehat z_k-z_k}{z_k}
\right|
=
\Psi_d\!\left(
\frac{\alpha^2}{2},
\alpha\lvert\tan\theta_k\rvert
\right)
+
O(\alpha^3).
\label{eq:lmhead-relative-score-second-order}
\end{equation}
\end{proof}

\newpage
\subsection{Log-Probability Approximation Proof}
\label{app:log-probability-approximation-proof}

\begin{proof}[Proof of Thm.~\ref{thm:log-probability-approximations}]
Let $Z:=\sum_j e^{z_j}$ and
$\widehat Z:=\sum_j e^{z_j+\Delta z_j}$. Since $p_j=e^{z_j}/Z$,
\begin{equation*}
\widehat Z
=Z\sum_j p_j e^{\Delta z_j}
=Z\Eb_{j\sim\pv}e^{\Delta z_j}.
\end{equation*}
It follows that
\begin{equation*}
\widehat p_k
=\frac{p_k e^{\Delta z_k}}
       {\Eb_{j\sim\pv}e^{\Delta z_j}},
\qquad
\Delta\log p_k
=\Delta z_k-\log\Eb_{j\sim\pv}e^{\Delta z_j}.
\end{equation*}
Also,
\begin{align*}
\KL{\pv}{\widehat{\pv}}
&=\Eb_{k\sim\pv}\log\frac{p_k}{\widehat p_k}
=\log\Eb_{j\sim\pv}e^{\Delta z_j}
-\Eb_{j\sim\pv}\Delta z_j.
\end{align*}
Combining these two identities proves the exact line of
Eq.~\eqref{eq:log-probability-approximations}.

Set
\begin{equation*}
m:=\Eb_{j\sim\pv}\Delta z_j,
\qquad
x_j:=\Delta z_j-m,
\qquad
\Eb_{j\sim\pv}x_j=0.
\end{equation*}
Then
\begin{equation*}
\log\Eb_{j\sim\pv}e^{\Delta z_j}
=m+\log\Eb_{j\sim\pv}e^{x_j}.
\end{equation*}
For uniformly small $x_j$, Taylor expansion gives
\begin{align*}
\Eb_{j\sim\pv}e^{x_j}
&=1+\frac12\Eb_{j\sim\pv}x_j^2
+O\!\left(\Eb_{j\sim\pv}|x_j|^3\right),\\
\log\Eb_{j\sim\pv}e^{x_j}
&=\frac12\Eb_{j\sim\pv}x_j^2
+O\!\left(\Eb_{j\sim\pv}|x_j|^3\right)\\
&=\frac12\operatorname{Var}_{j\sim\pv}(\Delta z_j)
+O\!\left(\Eb_{j\sim\pv}|x_j|^3\right).
\end{align*}
Substituting into the exact identity yields
\begin{equation*}
\Delta\log p_k
=\Delta z_k-m
-\frac12\operatorname{Var}_{j\sim\pv}(\Delta z_j)
+O\!\left(\Eb_{j\sim\pv}|x_j|^3\right),
\end{equation*}
which is the second-order estimate. Omitting the quadratic term gives
\begin{equation*}
\Delta\log p_k
=\Delta z_k-m
+O\!\left(\operatorname{Var}_{j\sim\pv}(\Delta z_j)\right),
\end{equation*}
which is the first-order estimate.
\end{proof}

\subsection{Softmax and Token-Ranking Stability}
\label{app:output-probability-decision-proof}

This appendix gives the exact conditions under which a token's ranking is
preserved or changed, complementing the estimates by rank in
Sec.~\ref{sec:lmhead-top-rank-geometry}.

\begin{theorem}[Probability-change identities and token-preservation conditions]
\label{thm:output-probability-decision}
Let $\Delta\zv:=\widehat{\zv}-\zv$,
$\pv:=\softmax(\zv)$, and
$\widehat{\pv}:=\softmax(\widehat{\zv})$, with $|\Vc|\ge2$. Then
\begin{equation}
\KL{\pv}{\widehat{\pv}}
=\log\mathbb{E}_{k\sim\pv}e^{\Delta z_k}
 -\mathbb{E}_{k\sim\pv}\Delta z_k.
\label{eq:output-kl-logit-shift}
\end{equation}
For a ground-truth next token $g$,
\begin{equation}
-\log\widehat p_g+\log p_g
=\log\mathbb{E}_{k\sim\pv}e^{\Delta z_k}-\Delta z_g.
\label{eq:output-cross-entropy-change}
\end{equation}
Adding the same scalar to every $\Delta z_k$ leaves these quantities unchanged.
If $a$ is the unique top-1 token under $\zv$, it remains top-1 exactly when
\begin{equation}
z_a-z_j>\Delta z_j-\Delta z_a
\qquad\text{for every }j\ne a.
\label{eq:output-top1-margin-condition}
\end{equation}
The analogous condition for a unique top-$K$ set $S_K$ is
\begin{equation}
z_a-z_j>\Delta z_j-\Delta z_a
\qquad\text{for every }a\in S_K,\ j\notin S_K.
\label{eq:output-topk-margin-condition}
\end{equation}
\end{theorem}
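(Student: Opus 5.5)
The plan reuses the softmax normalizer identity from the proof of Thm.~\ref{thm:log-probability-approximations}. With $Z:=\sum_j e^{z_j}$ and $\widehat Z:=\sum_j e^{z_j+\Delta z_j}$, we have $\widehat Z = Z\,\mathbb{E}_{k\sim\pv}e^{\Delta z_k}$. It follows that $\log\widehat p_k-\log p_k=\Delta z_k-\log\mathbb{E}_{k'\sim\pv}e^{\Delta z_{k'}}$ for every $k$. Both probability identities are then one-line consequences. For Eq.~\eqref{eq:output-kl-logit-shift}, we write $\KL{\pv}{\widehat{\pv}}=\mathbb{E}_{k\sim\pv}[\log p_k-\log\widehat p_k]$ and substitute. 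The log-normalizer term is constant in $k$, so it passes through the expectation unchanged, leaving $\log\mathbb{E}_{\pv}e^{\Delta z}-\mathbb{E}_{\pv}\Delta z$. For Eq.~\eqref{eq:output-cross-entropy-change}, we simply evaluate the per-token identity at $k=g$ and negate.

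For shift invariance, we replace $\Delta z_k$ by $\Delta z_k+c$ for all $k$. The factor $e^{c}$ pulls out of $\mathbb{E}_{\pv}e^{\Delta z}$, so the log term increases by $c$. This is cancelled by the increase of $\mathbb{E}_{\pv}\Delta z$ by $c$ in the KL expression, and by the increase of $\Delta z_g$ by $c$ in the CE expression. Equivalently, $\widehat{\pv}$ itself is unchanged because softmax is invariant to a common additive shift.

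For the ranking conditions, the key observation is that softmax is strictly monotone in the scores. Orderings of $\widehat{\pv}$ therefore coincide with orderings of $\widehat{\zv}=\zv+\Delta\zv$. Token $a$ is the unique top-1 under $\widehat{\zv}$ iff $\widehat z_a>\widehat z_j$ for all $j\ne a$, that is, $z_a+\Delta z_a>z_j+\Delta z_j$. Rearranging gives Eq.~\eqref{eq:output-top1-margin-condition}. The top-$K$ case is analogous. The set $S_K$ remains the (unique) top-$K$ set of $\widehat{\zv}$ iff every member strictly exceeds every non-member, i.e.\ $\widehat z_a>\widehat z_j$ for all $a\in S_K$, $j\notin S_K$, which rearranges to Eq.~\eqref{eq:output-topk-margin-condition}.

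The only point needing care is the interpretation of ``remains top-1/top-$K$'' when ties occur. I will adopt the convention that preservation means \emph{strict} (unique) membership, which makes the strict inequalities both necessary and sufficient. I will also note that with non-strict inequalities one would instead get preservation up to ties. Otherwise every step is a direct algebraic rearrangement, so I expect no real obstacle.
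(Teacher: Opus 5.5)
Your proposal is correct and follows the paper's proof essentially step for step: the normalizer identity $\widehat Z = Z\,\mathbb{E}_{j\sim\pv}e^{\Delta z_j}$ yields the per-token log-ratio, the KL and CE identities come from averaging over $k\sim\pv$ and setting $k=g$, shift invariance holds because the common shift $c$ enters both terms and cancels, and the ranking conditions are direct rearrangements of $z_a+\Delta z_a>z_j+\Delta z_j$. Your reading of ``remains top-1/top-$K$'' as strict (unique) membership is the same interpretation the paper uses.
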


\begin{proof}[Proof of Thm.~\ref{thm:output-probability-decision}]
Let
$Z:=\sum_j e^{z_j}$ and
$\widehat Z:=\sum_j e^{z_j+\Delta z_j}$.
Then
\[
\widehat Z
=Z\,\mathbb{E}_{j\sim\pv}e^{\Delta z_j},
\qquad
\widehat p_k
=\frac{p_k e^{\Delta z_k}}
       {\mathbb{E}_{j\sim\pv}e^{\Delta z_j}}.
\]
Therefore,
\[
\log\frac{p_k}{\widehat p_k}
=\log\mathbb{E}_{j\sim\pv}e^{\Delta z_j}-\Delta z_k.
\]
Taking expectation over $k\sim\pv$ proves
Eq.~\eqref{eq:output-kl-logit-shift}; setting $k=g$ proves
Eq.~\eqref{eq:output-cross-entropy-change}. Replacing every
$\Delta z_k$ by $\Delta z_k+c$ adds $c$ to both terms on the right-hand
sides of Eqs.~\eqref{eq:output-kl-logit-shift} and
\eqref{eq:output-cross-entropy-change}, so the additions cancel.

The original top-1 token $a$ remains the unique perturbed top-1 token exactly
when
\[
z_a+\Delta z_a>z_j+\Delta z_j
\qquad\text{for every }j\ne a.
\]
Rearranging gives Eq.~\eqref{eq:output-top1-margin-condition}. Likewise,
$S_K$ remains exactly the top-$K$ set if and only if every member stays above
every nonmember:
\[
z_a+\Delta z_a>z_j+\Delta z_j
\qquad\text{for every }a\in S_K,\ j\notin S_K.
\]
Rearranging gives Eq.~\eqref{eq:output-topk-margin-condition}.
\end{proof}

For Qwen3-32B, Flip@1 is $10.7\%$ on C4, $12.7\%$ on WikiText-103,
and $8.3\%$ on GSM8K text under NVFP4
(Fig.~\ref{fig:sec4-output-geometry}(A)). Under the theorem's assumption that
the original top-1 token $a$ is unique, each flip implies
$\Delta z_j-\Delta z_a\ge z_a-z_j$ for at least one competitor $j\ne a$,
where $z_a-z_j>0$ is its original score margin to the top-1 token. Thus,
Flip@1 measures how often quantization changes relative token
scores enough to overturn the original top-1 prediction, rather than how often
any score changes.

\newpage
\section{Experimental Details}

\label{app:nvfp4-format}
\subsection{NVFP4 Quantization Format}

NVFP4 represents each quantized value by a 4-bit E2M1 floating-point code and
two levels of shared scales~\citep{nvidia2025nvfp4}. The signed E2M1 value set
is
\begin{equation*}
    \mathcal{Q}_{\mathrm{E2M1}}
    = \{0, \pm 0.5, \pm 1, \pm 1.5, \pm 2, \pm 3, \pm 4, \pm 6\}.
    \end{equation*}
Thus, the largest E2M1 magnitude is $q_{\max}=6$. Each weight matrix
$\Wv\in\mathbb{R}^{d_{\mathrm{out}}\times d_{\mathrm{in}}}$ is partitioned
row by row into consecutive groups of $16$ entries along its input dimension.
If necessary, the final group is zero-padded for quantization and the padding
is removed afterward. We denote these groups by $\{\mathcal{G}_b\}_b$, and each group $\Gc_b$ contains $16$ entries with shape $1\times 16$.

The quantization of one weight matrix proceeds as follows. First, we compute a tensor-level scale
\begin{equation}
    s_{\Wv}
    = \frac{\max_{i,j}|W_{ij}|}{q_{\max} f_{\max}}
    = \frac{\max_{i,j}|W_{ij}|}{6\times 448},
    \label{eq:appendix-nvfp4-tensor-scale}
\end{equation}
where $f_{\max}=448$ is the largest finite magnitude of the E4M3 scale format.
For each 16-entry group $\mathcal{G}_b$, we then compute the ideal local scale
from the tensor-normalized weights and store it in E4M3:
\begin{align*}
    \widetilde{s}_b
    = \frac{1}{q_{\max}}
       \max_{(i,j)\in\mathcal{G}_b}
       \left|\frac{W_{ij}}{s_{\Wv}}\right|, \quad
    s_b
    = \operatorname{cast}_{\mathrm{E4M3}}\!\left(\widetilde{s}_b\right).
    \end{align*}
The factor $6\times448$ in Eq.~\eqref{eq:appendix-nvfp4-tensor-scale}
ensures that the required local scales lie within the E4M3 range. Finally,
each weight is divided by both scales, rounded to its nearest E2M1 value, and
dequantized:
\begin{align}
    q_{ij}
    = \operatorname{RTN}_{\mathcal{Q}_{\mathrm{E2M1}}}
       \!\left(\frac{W_{ij}}{s_{\Wv}s_b}\right),
    \quad
    \quant{\Wv}_{ij}
    = s_{\Wv}s_b q_{ij}.
       \qquad (i,j)\in\mathcal{G}_b.
    \label{eq:appendix-nvfp4-qdq}
\end{align}
Here, round-to-nearest (RTN) selects the closest E2M1 value, with values outside
its range clipped to $[-6,6]$. An all-zero tensor or group is mapped to zero,
avoiding division by a zero scale.

We apply RTN NVFP4 to the linear weights in every Transformer block, while embeddings, normalization layers, activations, and the LM head
remain in their original precision. For MoE layers, this includes every expert's gate, up, and down projections but not the router. We simulate quantization by casting the reconstructed weights
$\quant{\Wv}$ in Eq.~\eqref{eq:appendix-nvfp4-qdq} back to the computation dtype
before each linear operation. The experiments therefore measure the numerical
effect of quantization, not the runtime or memory cost of a packed 4-bit kernel.

\subsection{NVFP4 Weight Reconstruction Error}
\label{app:nvfp4-weight-closeness}

We compare the NVFP4 reconstruction errors of pretrained and randomly
initialized models directly. The Frobenius cosine-similarity between a weight matrix and its reconstruction is defined as
$\frac{\iprod{\Wv,\quant{\Wv}}_F}
{\norm{\Wv}_F\norm{\quant{\Wv}}_F}.
$ Across all models, pretrained and randomly initialized weights both have reconstruction cosines of about $0.9955$ and relative errors of about
$9.5\%$. These \emph{\textbf{nearly identical matrix-level errors}} contrast with their
different hidden-error growth in Sec.~\ref{sec:residual-observation}. Thus,
weight reconstruction alone does not explain the slower growth after pretraining.
\begin{table*}[h]
\centering
\caption{\textbf{Per-matrix reconstruction error under direct-cast NVFP4.}
Values are means $\pm$ SD. Relative error is
$\norm{\quant{\Wv}-\Wv}_{F}/\norm{\Wv}_{F}$. Random-init columns use one independently initialized instance per model; all reported SDs are computed across weight matrices.}
\label{tab:app-nvfp4-weight-closeness}
\scriptsize
\setlength{\tabcolsep}{5pt}
\begin{tabular}{lcccc}
\toprule
& \multicolumn{2}{c}{Pretrained} & \multicolumn{2}{c}{Random initialization} \\
\cmidrule(lr){2-3}\cmidrule(lr){4-5}
Model & $\cos(\Wv,\quant{\Wv})$ & Relative error
& $\cos(\Wv,\quant{\Wv})$ & Relative error \\
\midrule
Qwen3-4B    & $0.99550 \pm 0.00001$ & $9.49\% \pm 0.02\%$ & $0.99548 \pm 0.00001$ & $9.51\% \pm 0.01\%$ \\
Qwen3-8B    & $0.99551 \pm 0.00003$ & $9.47\% \pm 0.03\%$ & $0.99548 \pm 0.00001$ & $9.51\% \pm 0.01\%$ \\
Qwen3-14B   & $0.99551 \pm 0.00004$ & $9.47\% \pm 0.04\%$ & $0.99548 \pm 0.00001$ & $9.51\% \pm 0.01\%$ \\
Qwen3-32B   & $0.99551 \pm 0.00003$ & $9.48\% \pm 0.04\%$ & $0.99547 \pm 0.00001$ & $9.51\% \pm 0.01\%$ \\
Pythia-1.4B & $0.99549 \pm 0.00002$ & $9.50\% \pm 0.02\%$ & $0.99548 \pm 0.00001$ & $9.51\% \pm 0.01\%$ \\
OLMo3-7B    & $0.99549 \pm 0.00002$ & $9.50\% \pm 0.02\%$ & $0.99548 \pm 0.00001$ & $9.51\% \pm 0.01\%$ \\
\bottomrule
\end{tabular}
\end{table*}

\newpage
\subsection{Reproduction Details}
\label{app:main-figure-protocols}

\paragraph{Datasets.}
The token-level experiments use C4~\citep{raffel2020t5},
WikiText-103~\citep{merity2017wikitext}, and GSM8K
text~\citep{cobbe2021gsm8k}. Unless noted otherwise, each dataset setting
contains 64 randomly sampled sequences with 512 next-token prediction
positions per sequence.
The cross-model recurrence accounting in
Figs.~\ref{fig:app-sec3-counteraction-generalization-cumulative}
and~\ref{fig:app-sec3-counteraction-generalization-per-block} instead uses 128
sequences per model and dataset. The PTQ comparison uses eight C4 evaluation
inputs, while the temperature replay uses eight inputs from each of C4 and
GSM8K text.
For GSM8K text, we join each official test-set question with its reference
solution and use teacher forcing: the model predicts each reference token from
the preceding reference text. It does not generate a complete solution, so
this setting does not measure generated-solution accuracy.

\paragraph{Benchmarks.}
We use the full test splits of ARC-Challenge and
ARC-Easy~\citep{clark2018arc} and MMLU~\citep{hendrycks2021mmlu}, and the full
validation splits of HellaSwag~\citep{zellers2019hellaswag},
WinoGrande~\citep{winograd2021}, and
TruthfulQA~\citep{lin2022truthfulqa}.
All six benchmarks are evaluated zero-shot without a chat template. ARC and
HellaSwag use character-normalized candidate likelihoods, MMLU scores answer
letters, WinoGrande scores the shared suffix conditioned on each candidate,
and TruthfulQA reports MC1 accuracy. These results appear in
Fig.~\ref{fig:sec4-output-geometry}.

\paragraph{Models.}
We evaluate Qwen3-4B, Qwen3-8B, Qwen3-14B, Qwen3-30B-A3B, and
Qwen3-32B~\citep{yang2025qwen3}; OLMo3-7B and
OLMo3-32B~\citep{teamolmo2025olmo3};
OLMoE-1B-7B~\citep{muennighoff2025olmoe};
Gemma3-4B~\citep{gemmateam2025gemma3technicalreport}; and Pythia-1.4B and
Pythia-2.8B~\citep{biderman2023pythia}. Qwen3-32B is the primary model for the
hidden-state and output analyses. Fig.~\ref{fig:hidden-error-growth} compares
its pretrained checkpoint with ten independent random initializations and
also compares six released OLMo3-7B Stage-1 checkpoints. The random reference
in Fig.~\ref{fig:sec4-topk-protection} uses three fixed, seeded random
initializations of Qwen3-32B.

\paragraph{Post-training weight quantization.}
Unless stated otherwise, we use the weight-only NVFP4 RTN conversion defined
in App.~\ref{app:nvfp4-format}. Attention and MLP projection weights are
quantized, while embeddings, normalization layers, activations, and the LM
head remain in their original precision. The counteraction intervention
protocol is given in App.~\ref{app:counteraction-intervention}. For the
calibrated PTQ comparison, GPTQ and AWQ use the same 64 C4 calibration inputs,
disjoint from the eight evaluation inputs; RTN uses no calibration data.

\paragraph{Activation and activation-weight joint quantization.}
For A4, the weights remain in their original precision, while the input to
each decoder Attention and MLP projection is quantized immediately before
matrix multiplication. We use the same E2M1 values and E4M3 block scales as
for W4. Each activation tensor is partitioned into contiguous groups of 16
values along its last dimension.
The tensor-wide scale is recomputed for every
forward pass, followed by one E4M3 scale per group, making activation
quantization dynamic and calibration-free.
Residual states, normalization
layers, embeddings, the KV cache, and the LM head remain in their original
precision.
W4A4 combines this activation quantization with the W4 conversion defined in App.~\ref{app:nvfp4-format}.

\paragraph{Evaluation.}
Unless specified below, statistics are first averaged over positions within
each input and then equally across inputs. Error bars are one sample standard
deviation across complete inputs. For random models, each combination of an
initialization and input is one sample.

For the relative-error recurrence analysis in
Fig.~\ref{fig:sec3-counteraction-accounting}, we form each term at every
position before aggregation. Position zero is the only prediction conditioned
on a one-token prefix, so we exclude this fixed-window boundary and report the
recurrence over within-sequence positions. Rare large differences between the
BF16 and W4 hidden-state norms produce extreme terms that can dominate the
aggregate, so we retain a complete position trajectory only when its maximum
relative hidden-norm difference across depth satisfies
\begin{equation*}
g_i^{\max}
:=\max_{\ell}
\frac{\left|\norm{\widehat{\hv}_i^{(\ell)}}_2
-\norm{\hv_i^{(\ell)}}_2\right|}
{\norm{\hv_i^{(\ell)}}_2}.
\end{equation*}
We retain position $i$ if $g_i^{\max}\leq0.5$ and use the same retained
positions at every block. This removes $0.08\%$ of Qwen3-32B C4 positions and
retains $99.20$--$99.97\%$ across the 15 cross-model and cross-dataset
settings. Each plotted standard deviation is computed separately for one
recurrence term and therefore need not satisfy the additive recurrence.
App.~\ref{app:cross-model-relative-recurrence} reports the
cross-model results, and
App.~\ref{app:relative-recurrence-position-selection} reports the unfiltered
and threshold-sensitivity checks.

Fig.~\ref{fig:sec3-counteraction-effects}(B) forms the terms in
Prop.~\ref{prop:hidden-error-magnitude-direction} at each prediction position
before averaging. The LM-head-input region applies the same decomposition
after final normalization and uses the same retained positions.

Fig.~\ref{fig:sec4-output-geometry}(A) evaluates all 512 positions in each
input. Its benchmark error bars come from 10,000 bootstrap estimates of
accuracy.

Fig.~\ref{fig:sec4-topk-protection}(A) uses the same complete inputs and
reports the mean angle between the BF16 and W4 LM-head inputs together with
the vocabulary-mean absolute change in their projection angles to the same
LM-head weight vectors.
For Fig.~\ref{fig:sec4-topk-protection}(B) and
Fig.~\ref{fig:sec4-topk-protection}(C), $\pi_r$ is
the token ranked $r$ by the BF16 scores.
Fig.~\ref{fig:sec4-topk-protection}(B) retains ranks 1--10 individually and
groups the rest of the vocabulary into contiguous log-spaced
intervals. It includes a position--rank pair only when its BF16 score and
FP32-recomputed projection $\iprod{\wv_{\pi_r},\hv_{\rm LM}}$ are both positive.
The observed absolute relative score changes are averaged first within each
complete input and then equally across inputs. The pretrained band shows one
log-SD across the input-level means. For the random reference, each
initialization--input pair is one sample, and the curve ends when a rank bin
contains no positive BF16 scores.

The dotted first-order and dashed second-order curves evaluate
Eqs.~\eqref{eq:lmhead-relative-score-rate}
and~\eqref{eq:lmhead-relative-score-second-order} at each retained pair's
observed hidden rotation $\alpha$ and projection angle $\theta_{\pi_r}$ under
$\rho=1$, then apply the same averaging. These analytic references use the
conditional expectation over the rotation direction $\boldsymbol\eta$
(App.~\ref{app:lmhead-tangent-rotation-proof}); neither is fitted to the W4
measurements.

Fig.~\ref{fig:sec4-topk-protection}(C) retains all vocabulary rows. Its three
curves average the absolute values of the exact,
first-order, and second-order expressions in
Thm.~\ref{thm:log-probability-approximations}, first within each complete input
and then equally across the three datasets. The random reference uses the same
aggregation over initialization--input pairs. Bars show one log-SD across the
corresponding input-level means.

\newpage
\section{Additional Experimental Analysis}
\label{app:scope-extensions}

\makeatletter
\setlength{\@fptop}{0pt}
\setlength{\@dblfptop}{0pt}
\makeatother

\subsection{Hidden-Error Growth at Initialization and after Pretraining}
\label{app:random-hidden-error-growth-rate}

Sec.~\ref{sec:residual-observation} describes the growth of
absolute and relative hidden error at random initialization and contrasts it
with pretrained models. This subsection derives the random-initialization fit
from the recurrence, checks it across architectures, and quantifies the slower
growth after pretraining.

\paragraph{Hidden error in randomly initialized models} We derive the randomly initialized Qwen3-32B growth rate from
Fig.~\ref{fig:hidden-error-growth}. For random initialization, expectations
first average positions within each sequence and then average equally over
initialization--sequence pairs; the pretrained curve uses the same sequence-level
averaging. Taking expectations in
Prop.~\ref{prop:absolute-hidden-error-recurrence} gives
\begin{align}
&\mathbb{E}\norm{\Delta\hv^{(\ell)}}_2^2
-\mathbb{E}\norm{\Delta\hv^{(\ell-1)}}_2^2
={}
\mathbb{E}\norm{\Delta\uv^{(\ell)}}_2^2
+2\mathbb{E}\iprod{\Delta\hv^{(\ell-1)},\Delta\uv^{(\ell)}}.
\label{eq:appendix-expected-absolute-recurrence}
\end{align}
Across the 64 measured blocks, the ratio of the left-hand side of
Eq.~\eqref{eq:appendix-expected-absolute-recurrence} to
$\mathbb{E}\norm{\Delta\uv^{(\ell)}}_2^2$ lies in $[0.9989,1.0003]$.
Thus, over this measured depth range,
\begin{equation}
\mathbb{E}\norm{\Delta\hv^{(L)}}_2^2
\approx
\sum_{\ell=1}^{L}\mathbb{E}\norm{\Delta\uv^{(\ell)}}_2^2.
\label{eq:appendix-random-orthogonal-growth}
\end{equation}
The mean squared block-update error
$\mathbb{E}\norm{\Delta\uv^{(\ell)}}_2^2$ does not follow a single power over
blocks $1$--$64$. A three-parameter saturation curve instead gives
\begin{equation*}
\mathbb{E}\norm{\Delta\uv^{(\ell)}}_2^2
\approx
A\frac{\ell^\beta}{\tau^\beta+\ell^\beta},
\qquad
A=1.7154\times10^5,\quad \beta=1.143,\quad \tau=13.31,
\end{equation*}
with log-scale $R^2=0.99931$ and original-scale $R^2=0.99956$. Substituting this fit into
Eq.~\eqref{eq:appendix-random-orthogonal-growth}:
\begin{equation*}
\sqrt{\mathbb{E}\norm{\Delta\hv^{(L)}}_2^2}
\approx
\left[
\sum_{\ell=1}^{L}
A\frac{\ell^\beta}{\tau^\beta+\ell^\beta}
\right]^{1/2}.
\end{equation*}
This predicted curve matches the measured root-mean-square (RMS) hidden-error
magnitude with
$R^2=0.999981$ and differs by $0.014\%$ at block 64. Separately, a power fit
over the measured depth range gives exponent $0.801$ with log--log $R^2=0.997$
and a standard deviation of $0.0014$ across trajectory fits for each
initialization and sequence; the
mean $\mathbb{E}\norm{\Delta\hv^{(\ell)}}_2$ reported in the main text has the same fitted
exponent. Fig.~\ref{fig:app-random-init-power-fits}(C) directly fits the mean
relative error, yielding exponent $0.302$ with log--log $R^2=0.984$. These fits
support the main-text $\ell^{0.80}$ and
$\ell^{0.30}$ descriptions over the measured 64 blocks, without asserting
asymptotic growth laws.

\begin{figure}[H]
  \centering
  \includegraphics[width=\textwidth]{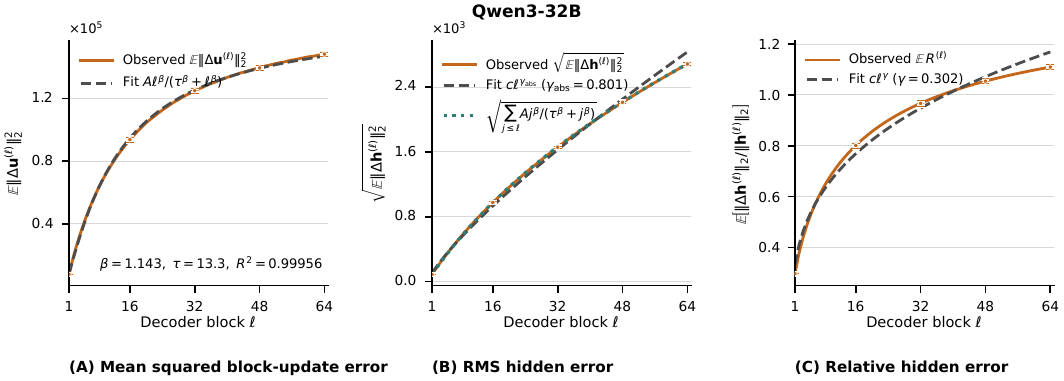}
  \caption{\textbf{Fits to block-update and hidden-error growth in randomly
  initialized Qwen3-32B.}
  \textbf{(A)} A three-parameter curve fits
  $\mathbb{E}\norm{\Delta\uv^{(\ell)}}_2^2$ over the measured blocks.
  \textbf{(B)} Summing these fitted values closely matches the measured RMS
  hidden error, whose growth is also fitted by a power
  law. \textbf{(C)} A separate power law fits the mean relative hidden error.
  Error bars show standard deviation.}
  \label{fig:app-random-init-power-fits}
\end{figure}

The same fit form for $\mathbb{E}\norm{\Delta\uv^{(\ell)}}_2^2$ also describes
randomly initialized Qwen3-4B, Qwen3-8B, and OLMo3-7B Stage-1 step 0
(Fig.~\ref{fig:app-random-init-fit-generalization});
across the four models the hidden-error exponents fitted over the measured depth
range are $0.70$--$0.83$,
the corresponding mean-relative-error exponents are $0.20$--$0.33$, and the
recurrence-predicted final RMS errors match the measured values to within
$0.02\%$. Adding an unconstrained intercept to the power fit yields negative intercepts,
violating the boundary $\Delta h^{(0)}=0$; we therefore report
these as descriptive fits over the measured intervals, not growth laws.

\begin{figure}[H]
  \centering
  \includegraphics[width=\textwidth]{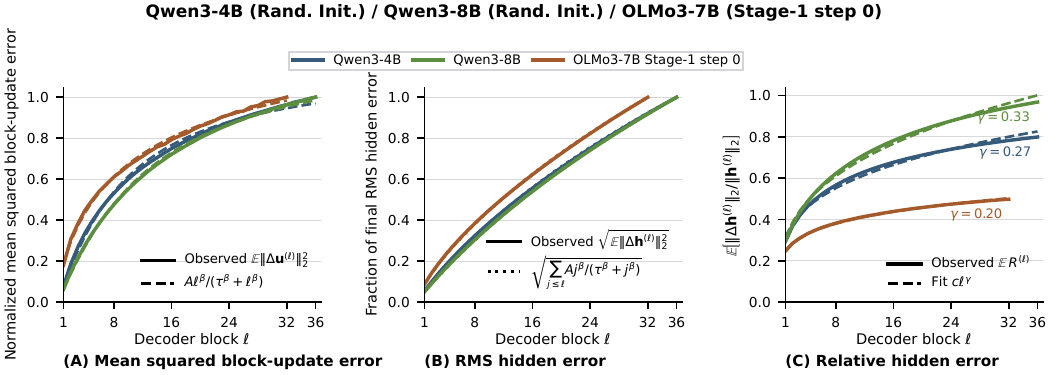}
  \caption{\textbf{Fits to block-update and hidden-error growth across randomly
  initialized models.} Results in \textbf{(A)} and \textbf{(B)} are
  normalized by their final values for comparison across Qwen3-4B, Qwen3-8B,
  and OLMo3-7B Stage-1 step 0.
  \textbf{(A)} Fitted curves follow the measured
  $\mathbb{E}\norm{\Delta\uv^{(\ell)}}_2^2$.
  \textbf{(B)} Summing the fitted values closely matches the measured RMS hidden
  error. \textbf{(C)} Power laws fit the unnormalized mean relative hidden
  error. All fits are restricted to the measured blocks.}
  \label{fig:app-random-init-fit-generalization}
\end{figure}

\paragraph{Hidden-error growth in pretrained models} We do not assign the pretrained curve a global power exponent because its
growth is not uniform across depth. We quantify its rate over intervals. Define the average
per-block relative-error increase over blocks $a+1,\ldots,b$ as
\begin{equation*}
\overline{s}_{a:b}^{\mathrm{rel}}
:=
\frac{
\mathbb{E}\!\left[
\norm{\Delta\hv^{(b)}}_2/\norm{\hv^{(b)}}_2
\right]
-
\mathbb{E}\!\left[
\norm{\Delta\hv^{(a)}}_2/\norm{\hv^{(a)}}_2
\right]
}{b-a}.
\end{equation*}
Over blocks 33--64, $\overline{s}_{33:64}$ is $1.46\times10^{-3}$ for pretrained
Qwen3-32B and $4.45\times10^{-3}$ at random initialization. Thus, the measured
second-half relative-error growth is $3.1\times$ slower after pretraining. At
block 64, the mean hidden-error magnitude and mean relative hidden error are
$5.5\times$ and $6.7\times$ smaller, respectively; these final-block ratios
describe the final errors rather than their growth rates.

Fig.~\ref{fig:app-random-pretrained-models} compares randomly initialized
and pretrained models across three architecture families. At the final block,
random initialization increases relative hidden error by $7.9\times$ in
Qwen3-8B, $5.9\times$ in OLMo3-32B, and $4.0\times$ in Gemma3-4B.

\begin{figure}[H]
  \centering
  \includegraphics[width=0.8\textwidth]{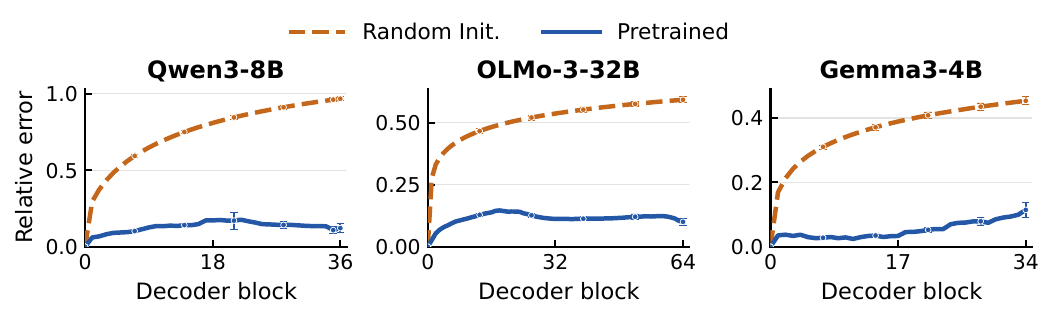}
  \caption{\textbf{Randomly initialized and pretrained models across three architecture families.}
  Relative hidden errors are shown for Qwen3-8B, OLMo3-32B, and
  Gemma3-4B under the same NVFP4 conversion. Error bars show
  standard deviation. Qwen3-8B and OLMo3-32B use BF16 for
  non-quantized computation; Gemma3 uses FP32 for the original pass and all
  non-quantized operations.}
  \label{fig:app-random-pretrained-models}
\end{figure}

\subsection{Hidden-Error Recurrence across Models}
\label{app:cross-model-relative-recurrence}

The exact recurrence in Thm.~\ref{thm:relative-hidden-error-recurrence}
separates each block's change in squared relative hidden error into the three
terms defined in the main text: relative additional error
$\termlabel{termcontrast}{T_{\rm add}}$, error interaction
$\termlabel{terminteraction}{T_{\rm inter}}$, and residual-norm contribution
$\termlabel{termalignment}{T_{\rm align}}$. We test whether the same signed
balance holds beyond the Qwen3-32B result in the main text.

We evaluate five pretrained dense and mixture-of-experts models on C4,
WikiText-103, and GSM8K text. Model precision, quantization, aggregation, and
position selection follow App.~\ref{app:main-figure-protocols}. In every
setting, the negative interaction term
$\termlabel{terminteraction}{T_{\rm inter}}$ offsets part of
$\termlabel{termcontrast}{T_{\rm add}}$. The residual-norm term
$\termlabel{termalignment}{T_{\rm align}}$ gives a further reduction, although
the size of each contribution varies by model and dataset. The signs
$\termlabel{termcontrast}{T_{\rm add}}>0$ and
$\termlabel{terminteraction}{T_{\rm inter}}<0$ remain unchanged across the
tested filter thresholds.
For the primary Qwen3-32B model, the mean final relative hidden error across
the three datasets is $0.245$, as reported in Sec.~\ref{sec:lmhead-geometry}.

Figs.~\ref{fig:app-sec3-counteraction-generalization-cumulative}
and~\ref{fig:app-sec3-counteraction-generalization-per-block} show the
cumulative and per-block results.

\begin{figure}[H]
  \centering
  \includegraphics[width=\textwidth]{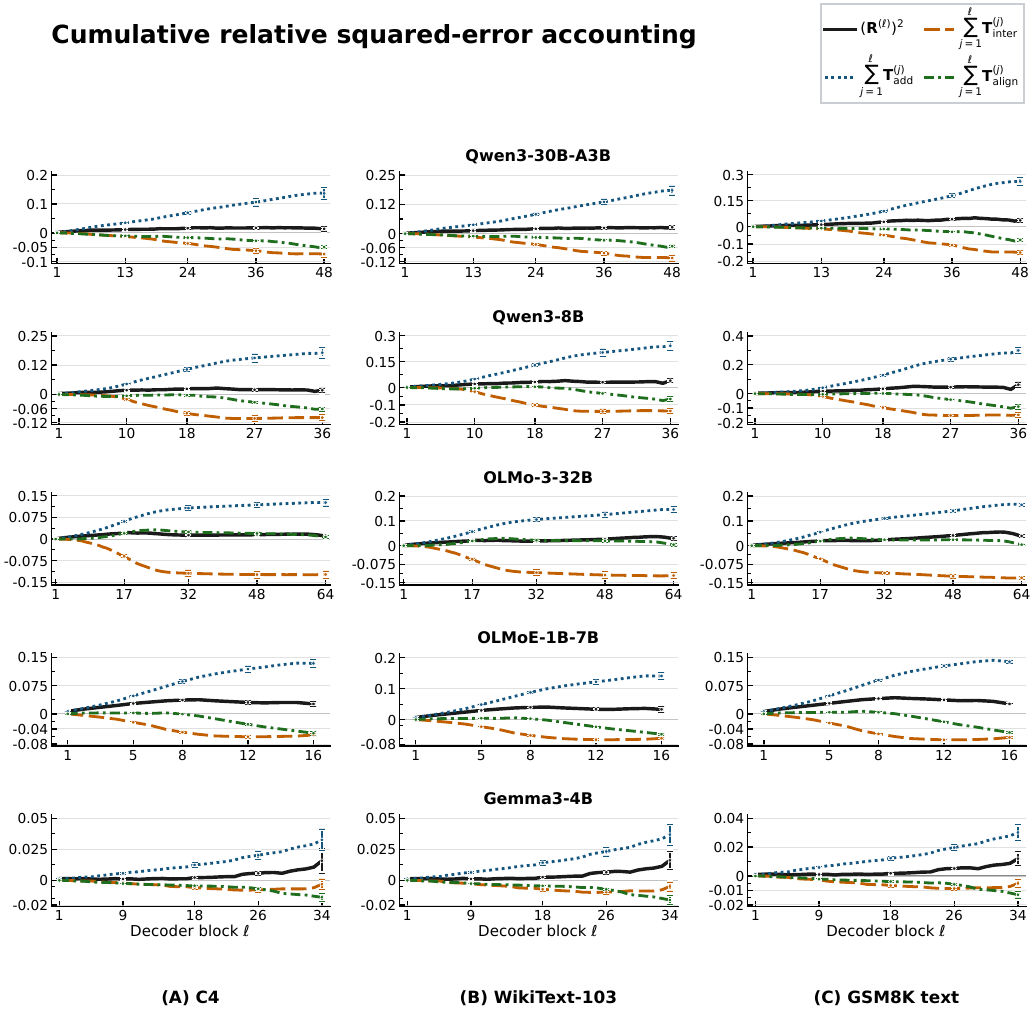}
  \caption{\textbf{Cumulative relative squared-error accounting.}
  Rows are models and columns are datasets. The black curve is the exact
  squared relative hidden error; the colored curves are the cumulative
  three-term contributions. Error bars show one sample standard deviation
  across complete input sequences. Position selection is described in
  App.~\ref{app:relative-recurrence-position-selection}.}
  \label{fig:app-sec3-counteraction-generalization-cumulative}
\end{figure}

\begin{figure}[H]
  \centering
  \includegraphics[width=\textwidth]{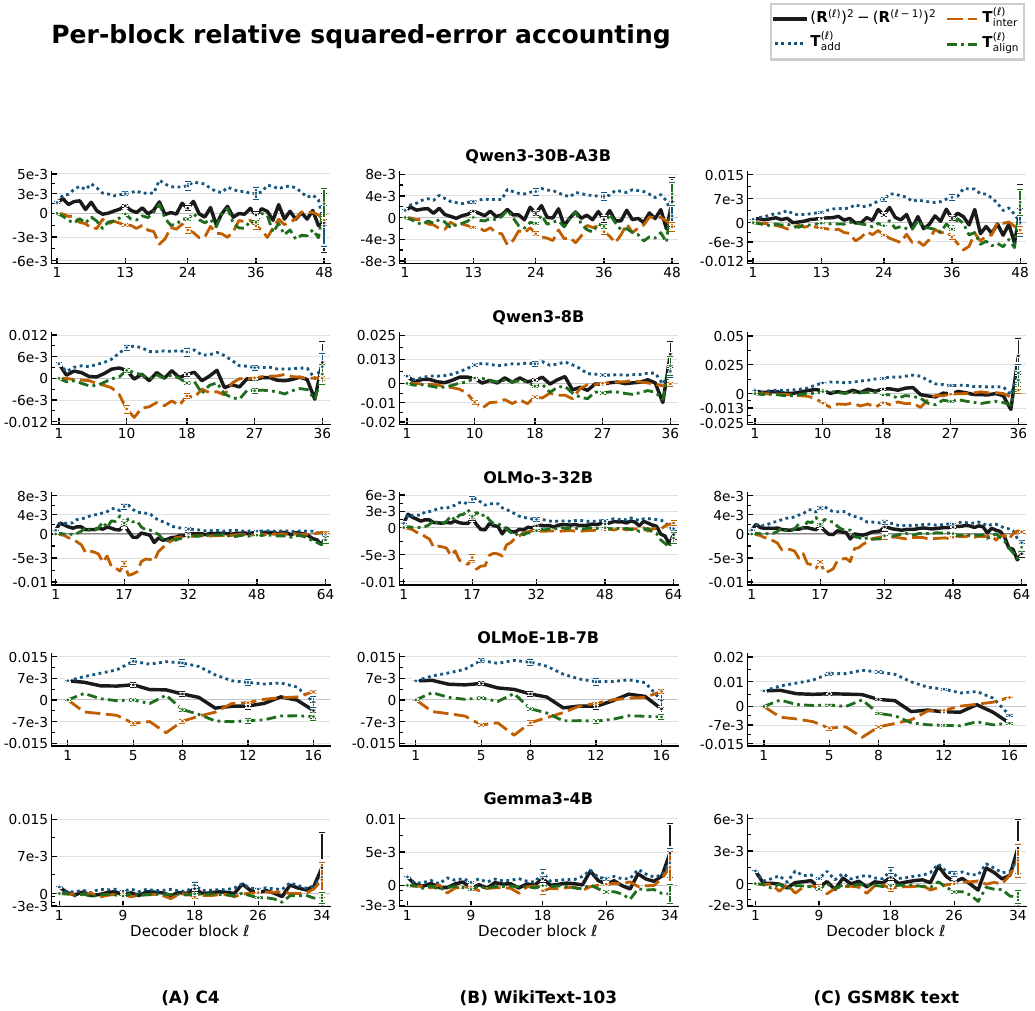}
  \caption{\textbf{Per-block relative squared-error contribution.}
  Rows are models and columns are datasets. The black curve is the exact
  blockwise change; the colored curves are the three terms in
  Thm.~\ref{thm:relative-hidden-error-recurrence}. Error bars show one sample
  standard deviation across complete input sequences.}
  \label{fig:app-sec3-counteraction-generalization-per-block}
\end{figure}

\newpage
\subsection{Hidden-Error Analysis: Abnormal Trajectory Filtering}
\label{app:relative-recurrence-position-selection}

Counteraction slows hidden-error growth on average, but it does not keep every
quantized next-token trajectory stable. At a small fraction of positions, the
hidden error becomes unusually large and the original and W4 hidden-state
norms separate sharply across depth. These trajectories can dominate averages
of the signed recurrence terms, although the recurrence formula is exact at
every position. We therefore separate these cases with a trajectory filter,
test its threshold sensitivity, and examine the excluded cases.
App.~\ref{app:main-figure-protocols} gives the full selection and aggregation
protocol.

Each decoding position $i$ produces a BF16/W4 hidden-state trajectory
$\{\hv_i^{(\ell)},\compacthat{\hv}{(\ell)}_i\}_{\ell=0}^{L}$. We compute the
largest relative difference between the BF16 and W4 hidden-state norms along
that trajectory:
\begin{equation*}
g_i^{\max}
:=\max_{\ell\in\{0,\ldots,L\}}
\frac{\left|\norm{\widehat{\hv}_i^{(\ell)}}_2
-\norm{\hv_i^{(\ell)}}_2\right|}
{\norm{\hv_i^{(\ell)}}_2}.
\end{equation*}
By default, we retain position $i$ when $g_i^{\max}\leq0.5$, which retains
$99.72\%$ of positions across the evaluated settings. The same positions are
used at every layer, so all points on a layerwise curve refer to one fixed
population. We form the recurrence terms of
Thm.~\ref{thm:relative-hidden-error-recurrence} at each retained position and
then average over positions within each complete sequence.
Error bars in the reported figures show one sample standard
deviation across complete input sequences.

We repeat the analysis with thresholds $1.0$, $0.5$, $0.25$, and $0.1$, and
without filtering. Tab.~\ref{tab:app-hidden-error-filter-sensitivity}
reports $\mathbb{E}[R^{(L)}]$, first averaging retained positions
within each sequence and then giving equal weight to each sequence and
model-dataset setting, where
$R^{(L)}:=\norm{\compacthat{\hv}{(L)}-\hv^{(L)}}_2/
\norm{\hv^{(L)}}_2$ is the final relative hidden error defined in
Sec.~\ref{sec:residual-absolute-counteraction}.

\begingroup
\begin{table}[H]
  \centering
  \caption{\textbf{Final relative hidden error under different
  hidden-norm-gap thresholds.} We exclude position $i$ with $g^{\max}_i > \text{threshold}$. Values are macro-averaged over the evaluated model--dataset settings. Both sums in the last column run over decoder blocks $1$--$L$.}
  \label{tab:app-hidden-error-filter-sensitivity}
  \small
  \begin{tabular}{lccc}
    \toprule
    Norm-gap threshold & Positions retained
      & $\mathbb{E}[R^{(L)}]$
      & $\mathbb{E}\!\left[-\sum T_{\rm inter}^{(\ell)}\,/\,
         \sum T_{\rm add}^{(\ell)}\right]$ \\
    \midrule
    Unfiltered & $100.00\%$ & $0.148$ & $30.4\%$ \\
    $1.0$ & $99.92\%$ & $0.147$ & $57.7\%$ \\
    $0.5$ (default) & $99.72\%$ & $0.147$ & $50.4\%$ \\
    $0.25$ & $98.62\%$ & $0.144$ & $50.2\%$ \\
    $0.1$ & $89.34\%$ & $0.136$ & $51.8\%$ \\
    \bottomrule
  \end{tabular}
\end{table}
\endgroup

The default threshold is a suitable cutoff: it changes the mean final relative
hidden error only from $0.148$ to $0.147$ while retaining $99.72\%$ of
positions. It removes rare scale failures while preserving the main
statistics. When all positions
are included, the fraction of
$\termlabel{termcontrast}{T_{\rm add}}$ canceled by
$\termlabel{terminteraction}{T_{\rm inter}}$ falls from $50.4\%$ to $30.4\%$.
Counteraction is less effective when the W4 norm trajectory departs sharply
from the original trajectory. {{These rare errors appear to exceed the range that the model can regulate through counteraction.}}

For the LM-head norm statistics used in Sec.~\ref{sec:lmhead-geometry}, the default filter gives
\[
\mathbb{E}\left|
\frac{\norm{\widehat{\hv}_{\rm LM}}_2}{\norm{\hv_{\rm LM}}_2}-1
\right|=0.0372,
\qquad
\mathbb{E}\left(
\frac{\norm{\widehat{\hv}_{\rm LM}}_2}{\norm{\hv_{\rm LM}}_2}-1
\right)^2=0.00320,
\]
averaged equally over C4, WikiText-103, and GSM8K text. Without filtering, the
values are $0.0373$ and $0.00324$, only $0.31\%$ and $1.27\%$ higher. Thus,
the LM-head-input norm remains statistically stable even without the filter, supporting the
approximation
$\norm{\widehat{\hv}_{\rm LM}}_2/\norm{\hv_{\rm LM}}_2\approx1$
used in Sec.~\ref{sec:lmhead-top-rank-geometry}.

Fig.~\ref{fig:app-sec3-filtered-fail-cases} shows the excluded regime directly.
For each of Qwen3-32B, Qwen3-8B, and OLMo3-7B, we select the position whose
$g_i^{\max}$ is closest to the median among that model's excluded positions.
This deterministic rule does not inspect the recurrence values or output
probabilities. The BF16 and W4 hidden-state norms separate sharply in these examples, and the
recurrence terms of Thm.~\ref{thm:relative-hidden-error-recurrence} become large enough to dominate an unfiltered
mean despite the small number of such positions.

\begin{figure}[H]
  \centering
  \includegraphics[width=0.94\textwidth]{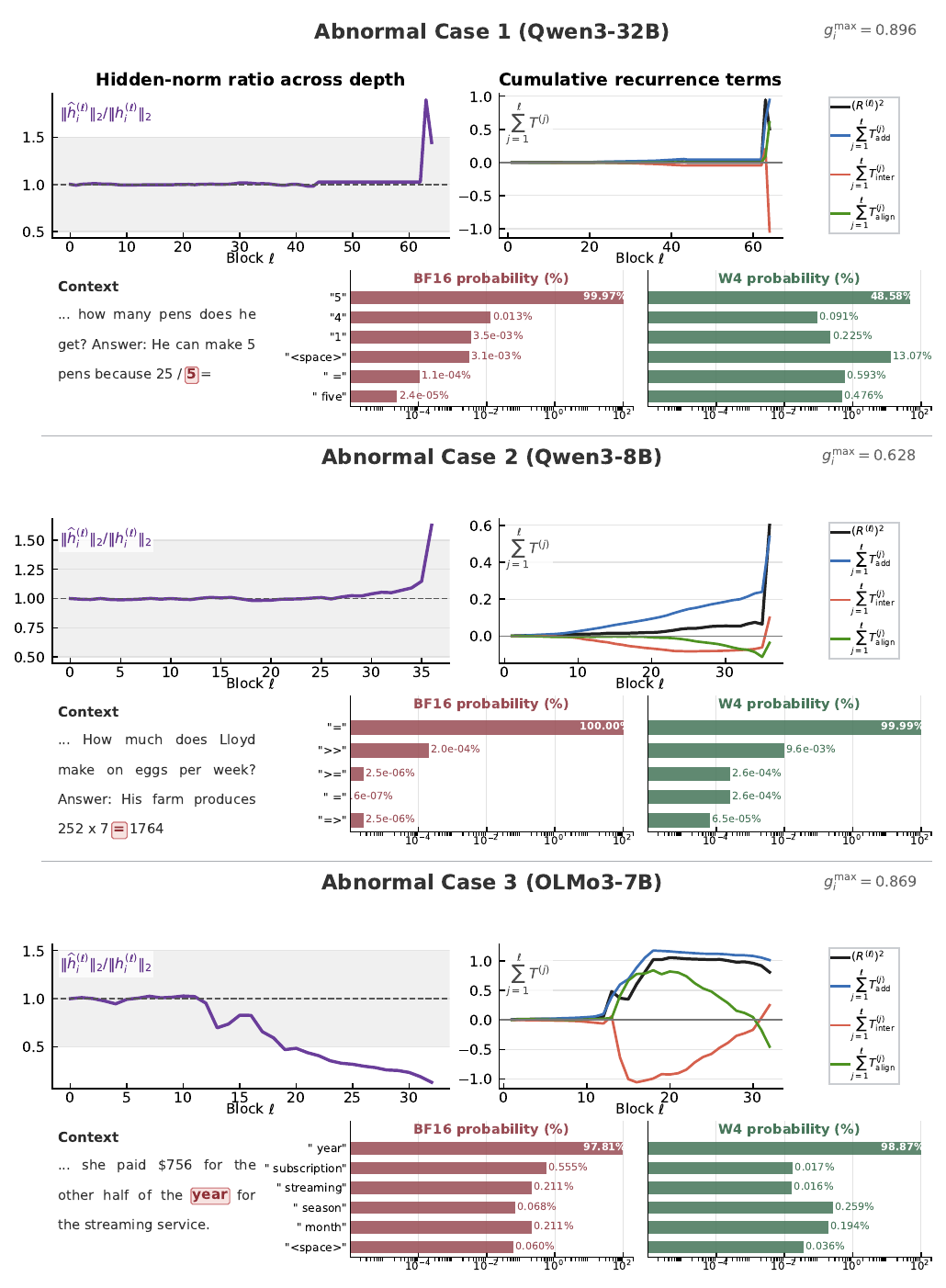}
  \caption{\textbf{Rare hidden-state trajectory failures occur when BF16 and W4 hidden
  norms diverge.} All three cases are from GSM8K text. Rows show Qwen3-32B,
  Qwen3-8B, and OLMo3-7B positions selected by proximity to each model's median
  excluded $g_i^{\max}$; the value beside each row title is this selection
  statistic, not the plotted ordinate. Selection does not use recurrence
  values or output changes. The left column shows
  $\norm{\compacthat{\hv_i}{(\ell)}}_2/\norm{\hv_i^{(\ell)}}_2$, with the retained
  range $[0.5,1.5]$ shaded in gray. The right column shows the cumulative
  recurrence terms. Both columns use decoder block $\ell$ on the horizontal
  axis. The lower strip shows the context and BF16/W4 next-token probabilities.}
  \label{fig:app-sec3-filtered-fail-cases}
\end{figure}

\clearpage
\subsection{Hidden-Error Growth: Late-Layer Differences across Data}
\label{app:sec5-data-dependence}

Even with the counteraction mechanism, hidden-error growth can also vary across input domains for pretrained models. Qwen3-32B accumulates relative
hidden error faster on GSM8K text than on C4 in the later blocks. Here, we
identify the responsible recurrence term and test it by applying the same
update rescaling to both streams.

The relative additional-error term in
Thm.~\ref{thm:relative-hidden-error-recurrence} is
\begin{equation*}
\termlabel{termcontrast}{T_{\rm add}^{(\ell)}}
=
\left(\frac{\norm{\uv^{(\ell)}}_2}{\norm{\hv^{(\ell)}}_2}\right)^2
\left[
\left(\frac{\norm{\Delta\uv^{(\ell)}}_2}
{\norm{\uv^{(\ell)}}_2}\right)^2
-\bigl(R^{(\ell-1)}\bigr)^2
\right].
\end{equation*}
Fig.~\ref{fig:app-sec5-data-dependence}(B) shows that the relative hidden-error
trajectories remain close in the earlier blocks but separate sharply later.
Over blocks $36$--$64$, the coefficient
$\bigl(\norm{\uv^{(\ell)}}_2/\norm{\hv^{(\ell)}}_2\bigr)^2$ is larger on
GSM8K text (Fig.~\ref{fig:app-sec5-data-dependence}(A)). The block-update norms
are comparable across the two datasets, but the hidden-state norm is smaller
on average for GSM8K text. Its updates are therefore stronger relative to the
hidden state. When the bracketed term is positive, the larger coefficient
increases $\termlabel{termcontrast}{T_{\rm add}^{(\ell)}}$ and accelerates
relative hidden-error growth.

Fig.~\ref{fig:app-sec5-data-dependence}(A) and
Fig.~\ref{fig:app-sec5-data-dependence}(B) show that the coefficient and hidden
error change together but do not isolate the coefficient. We keep the GSM8K
inputs and internal block computations fixed, then rescale only the residual
updates to match the C4 coefficient at each block. We intervene over blocks
$36$--$64$ and denote the modified trajectories by the subscript $c$:
\begin{equation*}
\hv_c^{(\ell)}
=\hv_c^{(\ell-1)}+\lambda^{(\ell)}\uv_c^{(\ell)},
\qquad
\widehat{\hv}_c^{(\ell)}
=\widehat{\hv}_c^{(\ell-1)}+\lambda^{(\ell)}\widehat{\uv}_c^{(\ell)}.
\end{equation*}
On the BF16 pass, we choose $\lambda^{(\ell)}$ so that the
position-averaged value
$\Eb\bigl(\lambda^{(\ell)}\norm{\uv_c^{(\ell)}}_2/
\norm{\hv_c^{(\ell)}}_2\bigr)^2$ on GSM8K dataset exactly matches the averaged value on C4 at the same block, then
reuse $\lambda^{(\ell)}$ on the W4 pass. Only the residual addition is
rescaled; the internal attention and MLP computations remain unchanged. This
oracle intervention is used only to test the coefficient, so we do not evaluate
output quality.

After matching the coefficients, the final
$\mathbb{E}[(R^{(L)})^2]$ decreases from $0.121$ to $0.062$, corresponding to a
$48.8\%$ reduction (Fig.~\ref{fig:app-sec5-data-dependence}(C)). This
oracle-style intervention shows that the coefficient contributes substantially
to the faster late-layer hidden-error growth on GSM8K text. More generally,
data change the relative strength of block updates across depth, producing
different hidden-state norm trajectories. The ratio
$\norm{\uv^{(\ell)}}_2/\norm{\hv^{(\ell)}}_2$ measures the signal generated by
block $\ell$ relative to its hidden state. When the bracketed term is positive,
a larger ratio gives newly generated block-update error a larger contribution
to the residual stream. Later blocks inherit this larger hidden error, so
subsequent propagation starts from a larger value. Thus,
\textbf{data-dependent block-update strength} can determine \textbf{where
hidden-error growth accelerates}. On GSM8K, the larger ratio after block 36
accounts for much of the rapid growth in
Fig.~\ref{fig:app-sec5-data-dependence}(B).

\begin{figure}[H]
  \centering
  \includegraphics[width=\textwidth]{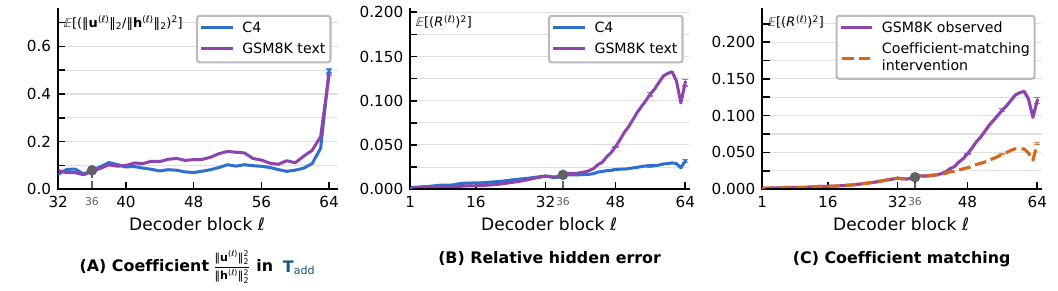}
  \caption{\textbf{The update-to-hidden norm ratio explains much of the
  data-dependent late hidden-error growth in Qwen3-32B.}
  \textbf{(A)} The coefficient
  $\norm{\uv^{(\ell)}}_2^2/\norm{\hv^{(\ell)}}_2^2$ in
  $\termlabel{termcontrast}{T_{\rm add}}$ is larger on GSM8K text over most
  late blocks. \textbf{(B)} The relative hidden error then grows faster.
  \textbf{(C)} Matching the coefficient to the C4 values from block $36$
  reduces this growth. Error bars show the standard error across $64$
  sequences at selected blocks.}
  \label{fig:app-sec5-data-dependence}
\end{figure}

\newpage
\subsection{Counteraction Emergence during Pretraining}
\label{app:pythia-checkpoint-trajectories}

Sec.~\ref{sec:residual-counteraction} identifies counteraction as a major
difference between randomly initialized and pretrained models. At random
initialization, the interaction between the block-input hidden error
$\Delta\hv^{(\ell-1)}$ and block-update error $\Delta\uv^{(\ell)}$ is nearly
zero. In pretrained Qwen3-32B, it is negative over much of the network and
offsets part of the block-update error, slowing hidden-error growth. The
OLMo3-7B Stage-1 checkpoints in Fig.~\ref{fig:hidden-error-growth} show this
change along pretraining, from weak interaction at step 0 to the negative
interaction observed at the final checkpoint. We use Pythia checkpoints to
test whether the same change appears outside the OLMo family.

Fig.~\ref{fig:app-pythia-checkpoint-trajectories} repeats the checkpoint
comparison for Pythia-1.4B and Pythia-2.8B. In both models, the cosine between
$\Delta\hv^{(\ell-1)}$ and $\Delta\uv^{(\ell)}$ changes from a near-zero
value at step 0 to broadly negative values at the final checkpoint. The OLMo3
and Pythia interaction curves show the same change during training:
counteraction is weak near initialization and becomes more pronounced at later
checkpoints.

\begin{figure}[H]
  \centering
  \includegraphics[width=\textwidth]{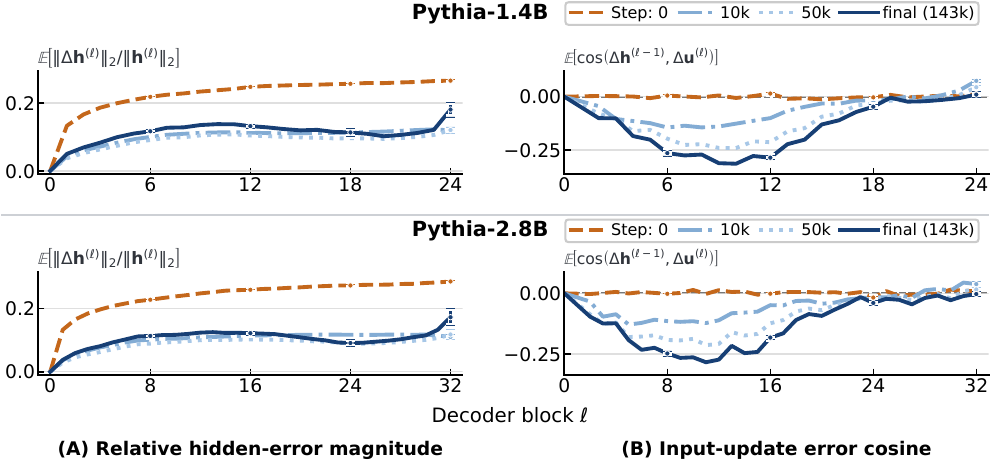}
  \caption{\textbf{Counteraction strengthens during Pythia pretraining.}
  Rows show four checkpoints of Pythia-1.4B and Pythia-2.8B under the same
  NVFP4 weight conversion. Columns report relative hidden error and the cosine
  between the block-input hidden error and block-update error. Error bars show
  standard deviation.}
  \label{fig:app-pythia-checkpoint-trajectories}
\end{figure}

\newpage
\subsection{Counteraction Source: Block-Input-Error Response}
\label{app:block-response-attribution}

Sec.~\ref{sec:counteraction-source} attributes counteraction mainly to the
block's response to the hidden error at its input. Here, we provide the
detailed evidence in two steps. We first show that \emph{counteraction} is specific
to the \emph{error} dynamics rather than a general relation between a hidden state
and its residual update.
We then decompose the block-update error to determine
whether the negative interaction comes from the direct weight perturbation or
from the block's response to the inherited hidden error.

\textbf{Counteraction is specific to the error dynamics.}
Counteraction refers to the tendency of the block-update error
$\Delta\uv^{(\ell)}$ to point against the hidden error
$\Delta\hv^{(\ell-1)}$ already present at the block input
(Sec.~\ref{sec:residual-absolute-counteraction}).
A related phenomenon has been observed for native residual contributions:
later Transformer layers can partially oppose earlier contributions to correct each other
\citep{patrawala2025llmlayerscorrecteachother}.
We therefore ask whether the negative interaction between quantization errors simply reflects a general tendency of a block update to oppose its input hidden state.

Fig.~\ref{fig:app-block-response-native-cosines} compares these relations.
The native hidden-state/update cosines
$\cos\angle(\hv^{(\ell-1)},\uv^{(\ell)})$ and
$\cos\angle(\compacthat{\hv}{(\ell-1)},\compacthat{\uv}{(\ell)})$
are positive in $65.6\%$ and $64.1\%$ of the blocks, respectively.
By contrast,
$\cos\angle(\Delta\hv^{(\ell-1)},\Delta\uv^{(\ell)})$
is negative in $82.5\%$ of the blocks.
Thus, the strong negative relation is not a general property of the residual
update; it appears much more consistently between the
inherited input hidden error
and the new block-update error.

\begin{figure}[H]
  \centering
  \includegraphics[width=0.9\textwidth]{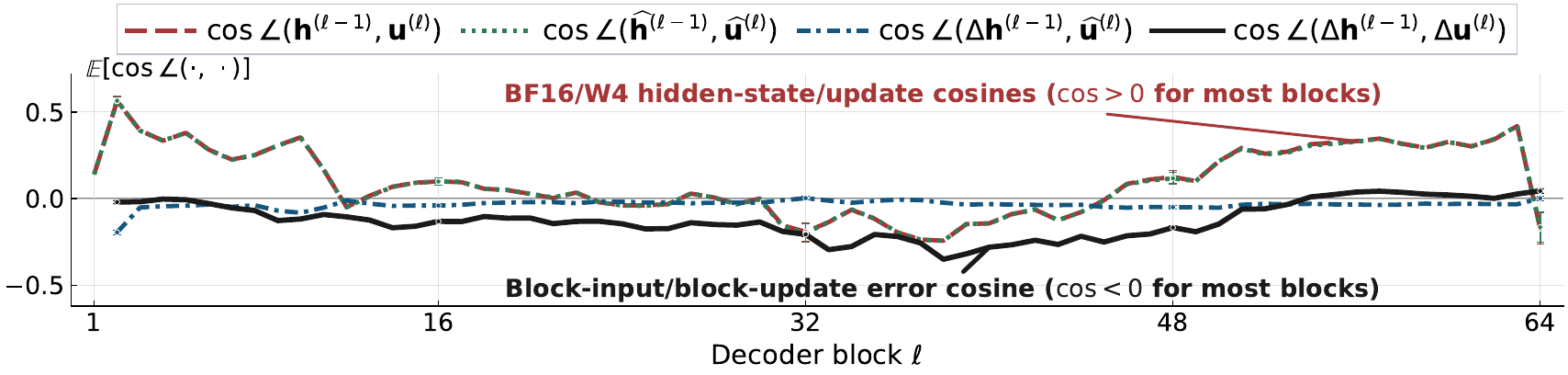}
  \caption{\textbf{Counteraction is specific to the error dynamics in
  Qwen3-32B.} BF16 and W4 hidden-state/update cosines are compared with the
  cosine between the block-input and block-update errors on C4 inputs. Error
  bars show one sample standard deviation.}
  \label{fig:app-block-response-native-cosines}
\end{figure}

\textbf{Decomposing the block-update error.}
We next ask which part of $\Delta\uv^{(\ell)}$ produces this negative
interaction. At block $\ell$, the original and quantized updates are
\begin{equation*}
\uv^{(\ell)}
=
{\color{cleanred}\Fv^{(\ell)}}(\hv^{(\ell-1)}),
\compacthat{\uv}{(\ell)}
=
{\color{quantgreen}\compacthat{\Fv}{(\ell)}}
(\compacthat{\hv}{(\ell-1)}).
\end{equation*}
We separate these two effects exactly:
\begin{equation}
\Delta\uv^{(\ell)}=
\underbrace{
{\color{quantgreen}\widehat{\Fv}^{(\ell)}}(\hv^{(\ell-1)})
-
{\color{cleanred}\Fv^{(\ell)}}(\hv^{(\ell-1)})
}_{
\Delta\uv_{\rm weight}^{(\ell)}
\ \text{(direct weight effect)}
}
+
\underbrace{
{\color{quantgreen}\widehat{\Fv}^{(\ell)}}
(\widehat{\hv}^{(\ell-1)})
-
{\color{quantgreen}\widehat{\Fv}^{(\ell)}}
(\hv^{(\ell-1)})
}_{
\Delta\uv_{\rm response}^{(\ell)}
\ \text{(response to block-input error)}
}.
\label{eq:main-block-response-decomposition}
\end{equation}
The first component changes the block weights while keeping its input fixed
at $\hv^{(\ell-1)}$. The second keeps the quantized block fixed and changes
only its input from $\hv^{(\ell-1)}$ to
$\compacthat{\hv}{(\ell-1)}$.
Because
$\termlabel{terminteraction}{T_{\rm inter}^{(\ell)}}=\frac{
    2\iprod{\Delta\hv^{(\ell-1)},\Delta\uv^{(\ell)}}
}{
    \norm{\hv^{(\ell)}}_2^2
}$
is linear in $\Delta\uv^{(\ell)}$, this decomposition also gives
\begin{equation*}
\termlabel{terminteraction}{T_{\rm inter}^{(\ell)}}
=
T_{\rm inter,weight}^{(\ell)}
+
T_{\rm inter,response}^{(\ell)}
=
\frac{
    2\iprod{\Delta\hv^{(\ell-1)},\Delta\uv^{(\ell)}_{\rm weight} }
}{
    \norm{\hv^{(\ell)}}_2^2
}+
\frac{
    2\iprod{\Delta\hv^{(\ell-1)},\Delta\uv^{(\ell)}_{\rm response} }
}{
    \norm{\hv^{(\ell)}}_2^2
}
.
\end{equation*}
Thus, the source of counteraction can be identified by asking which
component contributes the negative interaction with
$\Delta\hv^{(\ell-1)}$.
We measure both components by replaying every decoder block of Qwen3-32B on
the C4 inputs from Sec.~\ref{sec:residual-absolute-counteraction}.

\begin{figure}[H]
  \centering
  \includegraphics[width=0.9\textwidth]{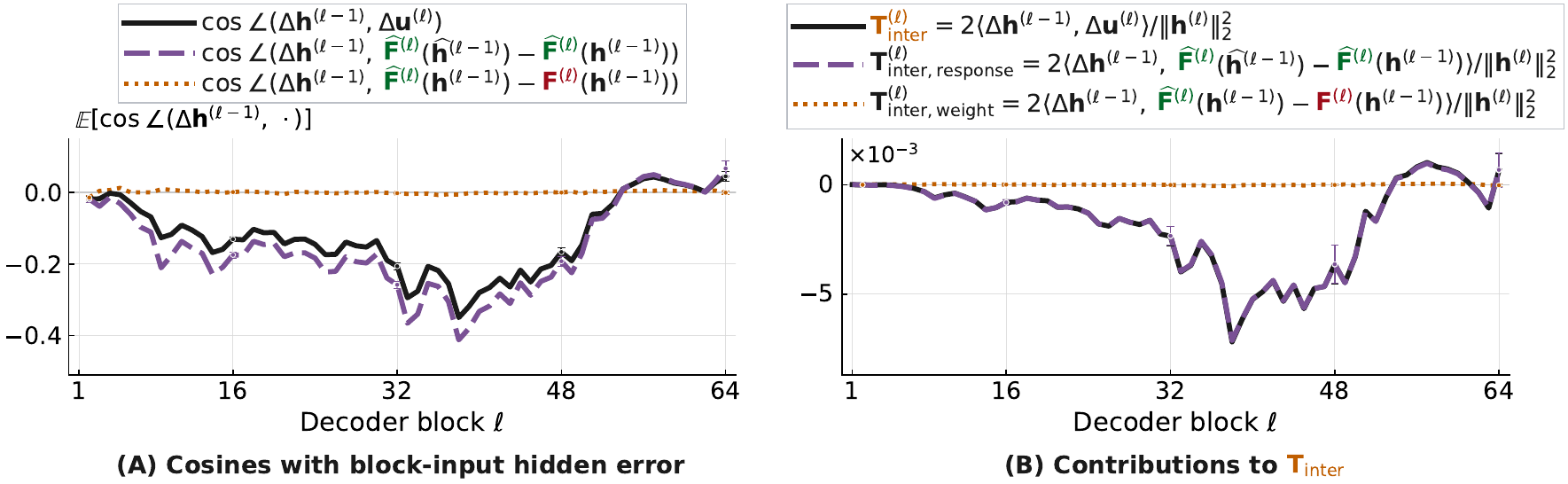}
  \caption{\textbf{The response to block-input error carries counteraction in
  Qwen3-32B.} C4 inputs.
  \textbf{(A)} Cosines between the block-input error and the
  block-update error $\Delta\uv^{(\ell)}$, split
  into the direct weight effect and input-error response from
  Eq.~\eqref{eq:main-block-response-decomposition}.
  \textbf{(B)} Their contributions
  $T_{\rm inter,weight}^{(\ell)}$ and
  $T_{\rm inter,response}^{(\ell)}$ to
  $\termlabel{terminteraction}{T_{\rm inter}^{(\ell)}}$.
  Error bars show one sample standard deviation.}
  \label{fig:app-block-response-attribution}
\end{figure}

\textbf{The input-error response carries the negative direction.}
Fig.~\ref{fig:app-block-response-attribution}(A) compares the directions of
the two components. The full block-update error
$\Delta\uv^{(\ell)}$ closely follows
$\Delta\uv_{\rm response}^{(\ell)}$, whose cosine with the block-input
hidden error is negative in $83\%$ of the blocks. By contrast, the cosine curve for $\Delta\uv_{\rm weight}^{(\ell)}$ stays within $0.02$ of zero at every block. 
The direct weight-quantization perturbation therefore produces a nonzero update error, but it has no consistent direction relative to the
hidden error already present at the block input.

Fig.~\ref{fig:app-block-response-attribution}(B) measures the quantity
directly relevant to counteraction: the signed contribution to
$\termlabel{terminteraction}{T_{\rm inter}}$.
After summing over blocks,
$
\sum_{\ell=1}^{L}T_{\rm inter,response}^{(\ell)}
$
accounts for $99.9\%$ of the negative
$
\sum_{\ell=1}^{L}
\termlabel{terminteraction}{T_{\rm inter}^{(\ell)}},
$
whereas
$\sum_{\ell=1}^{L}T_{\rm inter,weight}^{(\ell)}$
accounts for less than $0.1\%$.
This attribution is directional rather than a consequence of the response component being much larger in norm. In fact,
$
\Eb \bigl[
\frac{\norm{\Delta\uv_{\rm weight}^{(\ell)}}_2}
     {\norm{\Delta\uv^{(\ell)}}_2}
\bigr]
$
is $84\%$ of
$
\Eb \bigl[
\frac{\norm{\Delta\uv_{\rm response}^{(\ell)}}_2}
     {\norm{\Delta\uv^{(\ell)}}_2}
\bigr].
$
The two components therefore have \emph{comparable magnitudes}, but only the \textbf{\emph{input-error response}} points \emph{consistently against} the inherited hidden error.

\textbf{Robustness to the decomposition path.}
The exact decomposition of $\Delta\uv^{(\ell)}$ is not unique because one
may choose either block map for the intermediate evaluation. To check that
the attribution above does not depend on this choice, we instead add and
subtract
${\color{cleanred}\Fv^{(\ell)}}
(\widehat{\hv}^{(\ell-1)})$, giving
\begin{equation*}
\Delta\uv^{(\ell)}
=
\left[
{\color{quantgreen}\widehat{\Fv}^{(\ell)}}
(\widehat{\hv}^{(\ell-1)})
-
{\color{cleanred}\Fv^{(\ell)}}
(\widehat{\hv}^{(\ell-1)})
\right]
+
\left[
{\color{cleanred}\Fv^{(\ell)}}
(\widehat{\hv}^{(\ell-1)})
-
{\color{cleanred}\Fv^{(\ell)}}
(\hv^{(\ell-1)})
\right].
\end{equation*}
Here the first term measures the direct weight effect at the W4 input,
whereas the second measures the response of the original block to the same
input error. This alternative decomposition gives the same qualitative
attribution: across blocks, the first term above contributes $+0.0013$ to the cumulative $T_{\rm inter}$, whereas the second contributes $-0.1118$ to the 
$\sum_{\ell=1}^{L}
\termlabel{terminteraction}{T_{\rm inter}^{(\ell)}}=-0.1105$.

Therefore, under either exact decomposition, \emph{counteraction} is carried mainly
by \textbf{\emph{how the block responds to the hidden error at its input}}. The resulting
change in the block update tends to oppose the accumulated hidden error and slow its propagation. 

\newpage
\subsection{Counteraction Intervention: Removal and Reversal}
\label{app:counteraction-intervention}

Sec.~\ref{sec:residual-relative-decomposition} shows that removing or reversing
counteraction increases hidden error and output change in Qwen3-32B. Here, we
give the online recursions and term-wise accounting, then repeat the intervention
in Qwen3-8B to test whether the effect transfers across model scales.

Removal and reversal follow separate intervened trajectories initialized from
$\hv^{(0)}$. In each recursion below,
$\compacthat{\hv}{(\ell-1)}$ is the current modified block input. We recompute
the block-input hidden error and W4 block-update error at every block:
$
\Delta\hv^{(\ell-1)}:=\widehat{\hv}^{(\ell-1)}-\hv^{(\ell-1)},
\Delta\uv^{(\ell)}:={\color{quantgreen}\widehat{\Fv}^{(\ell)}}
(\widehat{\hv}^{(\ell-1)})
-{\color{cleanred}\Fv^{(\ell)}}(\hv^{(\ell-1)}).
$
Here, $\Delta\uv^{(\ell)}$ follows the definition in
Eq.~\eqref{eq:setup-paired-block}, but is computed on the \emph{modified trajectory}.

For removal, the equal-norm error orthogonal to the block-input hidden error is
\begin{equation*}
\Delta\uv_{\perp}^{(\ell)}
:=\norm{\Delta\uv^{(\ell)}}_2
\frac{
\displaystyle \Delta\uv^{(\ell)}
-\frac{\iprod{\Delta\hv^{(\ell-1)},\Delta\uv^{(\ell)}}}
{\norm{\Delta\hv^{(\ell-1)}}_2^2}\Delta\hv^{(\ell-1)}
}{
\displaystyle \left\|
\Delta\uv^{(\ell)}
-\frac{\iprod{\Delta\hv^{(\ell-1)},\Delta\uv^{(\ell)}}}
{\norm{\Delta\hv^{(\ell-1)}}_2^2}\Delta\hv^{(\ell-1)}
\right\|_2
}.
\end{equation*}
The removal and reversal trajectories are then propagated directly as
\begin{align}
\widehat{\hv}_{\rm rem}^{(\ell)}
&=\widehat{\hv}_{\rm rem}^{(\ell-1)}+\uv^{(\ell)}+
\begin{cases}
\Delta\uv_{\perp}^{(\ell)}, & 17\leq\ell\leq48,\\
\Delta\uv^{(\ell)}, & \text{otherwise},
\end{cases}
\label{eq:appendix-counteraction-removal}\\
\widehat{\hv}_{\rm rev}^{(\ell)}
&=\widehat{\hv}_{\rm rev}^{(\ell-1)}+\uv^{(\ell)}+
\begin{cases}
-\Delta\uv^{(\ell)},
& {17\leq\ell\leq48 ~\text{and~}
\iprod{\Delta\hv^{(\ell-1)},\Delta\uv^{(\ell)}}<0},\\
\Delta\uv^{(\ell)}, & \text{otherwise}.
\end{cases}
\label{eq:appendix-counteraction-reversal}
\end{align}
Removal makes the applied block-update error orthogonal to the block-input hidden
error at every position in blocks $17\leq\ell\leq48$; reversal flips only
negative interactions in this interval. Both preserve
$\norm{\Delta\uv^{(\ell)}}_2$ at the intervened block,
although later block-update errors may change along the modified trajectory. A
deterministic orthogonal fallback handles degenerate projections but was never
triggered. Term-wise analysis follows the abnormal case exclusion selection in
App.~\ref{app:main-figure-protocols}. The same filtering rule is applied to each modified trajectory.

Fig.~\ref{fig:app-counteraction-intervention-terms} provides the term-wise
recurrence accounting for Qwen3-32B. Across the intervened blocks, removal
nearly eliminates cumulative $\termlabel{terminteraction}{T_{\rm inter}}$
while increasing cumulative $\termlabel{termcontrast}{T_{\rm add}}$ by
$4.3\times$. Together, these changes increase the final
$\Eb[(R^{(L)})^2]$ by $8.4\times$. Although the recurrence separates
$\termlabel{termcontrast}{T_{\rm add}}$ and
$\termlabel{terminteraction}{T_{\rm inter}}$ algebraically, changing the
trajectory also changes the subsequent values of
$\termlabel{termcontrast}{T_{\rm add}}$. Thus, counteraction limits error
growth both through direct negative interaction and by preventing larger
values of $\termlabel{termcontrast}{T_{\rm add}}$ later in the forward pass.

\begingroup
\setlength{\intextsep}{8pt}
\begin{figure}[H]
  \centering
  \includegraphics[width=\textwidth]{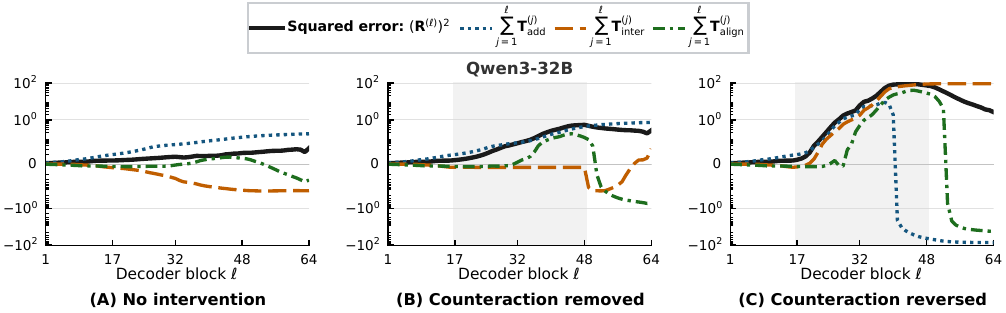}
  \caption{\textbf{Relative squared-error accounting after counteraction
  intervention in Qwen3-32B.} Columns compare unmodified W4 with
  counteraction removal and reversal over blocks $17$--$48$, where
  counteraction is strong. The corresponding hidden-error and output changes
  are reported in Sec.~\ref{sec:residual-relative-decomposition}. In contrast
  to the other figures, a symmetric
  logarithmic scale is used here to compare trajectories that span multiple
  orders of magnitude.}
  \label{fig:app-counteraction-intervention-terms}
\end{figure}
\endgroup

We repeat the intervention in Qwen3-8B over its strongest-counteraction
interval, blocks $6$--$23$. Fig.~\ref{fig:app-sec5-qwen8-intervention}
combines the hidden-error and output changes with the same term-wise
accounting. Removal again nearly eliminates cumulative
$\termlabel{terminteraction}{T_{\rm inter}}$, while cumulative
$\termlabel{termcontrast}{T_{\rm add}}$ increases by $2.7\times$ and the
final $\Eb[(R^{(L)})^2]$ increases by $4.8\times$. The Qwen3-8B result
therefore reproduces the same coupling between the negative interaction and
the subsequent hidden-state trajectory.

\label{app:sec5-qwen8-intervention}
\begin{figure}[H]
  \centering
  \includegraphics[width=0.95\textwidth]{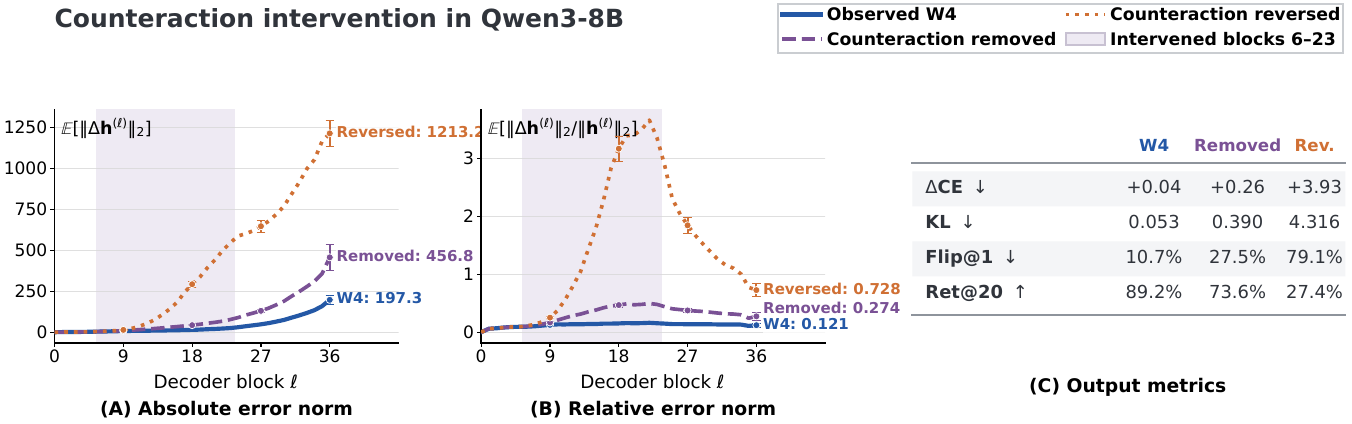}
  \par\vspace{6pt}
  \includegraphics[width=0.95\textwidth]{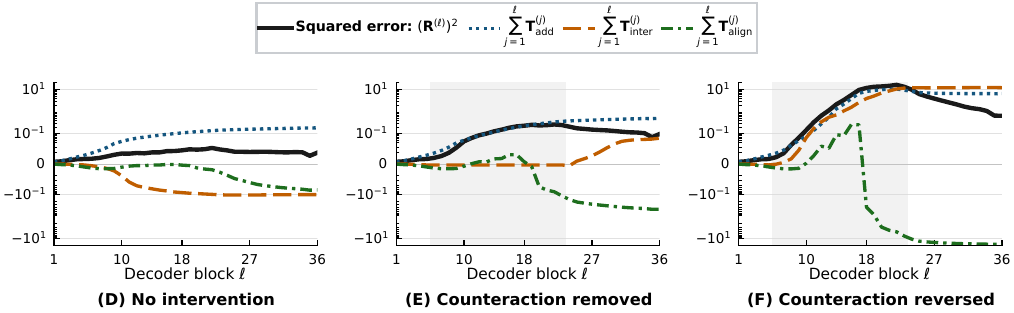}
  \caption{\textbf{Counteraction intervention in Qwen3-8B.}
  \textbf{(A,B)} Hidden-error trajectories and \textbf{(C)} output changes
  after removing or reversing counteraction over blocks $6$--$23$.
  \textbf{(D)--(F)} Relative squared-error accounting along the same three
  trajectories. Output metrics are defined in
  Sec.~\ref{sec:preferential-output-preservation}; error bars show one sample
  standard deviation across inputs. The lower panels use a symmetric
  logarithmic scale.}
  \label{fig:app-sec5-qwen8-intervention}
\end{figure}

\newpage
\subsection{Counteraction across Weight and Activation Quantization}
\label{app:sec5-quantization-components}

Sec.~\ref{sec:scope-model-quantization} asks whether the negative interaction
between block-input hidden error and block-update error is specific to weight
quantization. Fig.~\ref{fig:sec5-quantization-scope}(A) compares its blockwise
cosine under W4 (weight-only quantization), A4 (activation-only quantization),
and W4A4 (joint weight and activation quantization), and finds a similar
depth-wise pattern in all three settings. The cosine shows the direction of
the interaction, but not how much it contributes to the relative hidden-error
trajectory or how the remaining error changes the next-token distribution.

Fig.~\ref{fig:app-sec5-quantization-site-recurrence} applies
Thm.~\ref{thm:relative-hidden-error-recurrence} separately to each setting.
The relative-error trajectories and the amount of newly added error differ,
but the cumulative $\termlabel{terminteraction}{T_{\rm inter}}$ contribution
is negative in all three cases. Thus, the block-update error counteracts the
block-input hidden error whether the perturbation enters through weights,
activations, or both.

Fig.~\ref{fig:app-sec5-quantization-site-output} compares the corresponding
CE, KL, top-token retention, and rank-wise log-probability changes. The three
settings do not produce the same output error, even though their blockwise
counteraction curves are similar. Counteraction describes how hidden error is
limited through depth; output robustness also depends on how much error each
block introduces and how the remaining error affects the LM head.

\begin{figure}[H]
  \centering
  \includegraphics[width=0.94\textwidth]{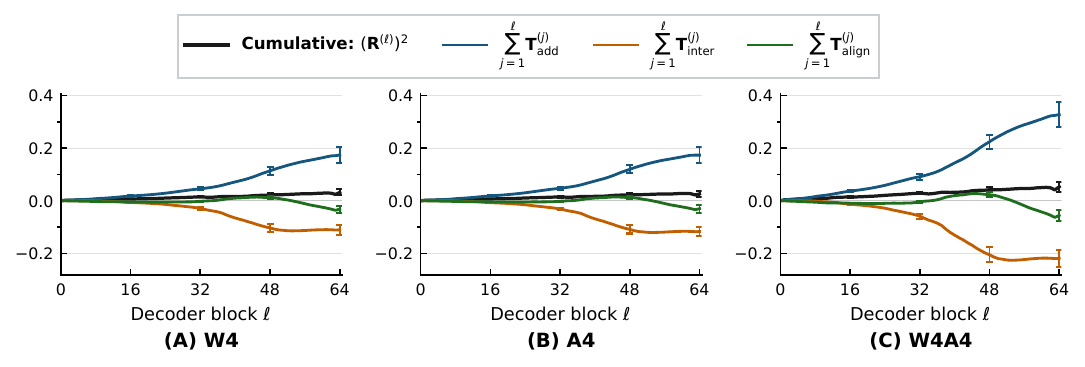}
  \caption{\textbf{Relative-error recurrence under weight and activation
  quantization.} Qwen3-32B on C4 under W4, A4, and W4A4. The black curve is
  the cumulative squared relative error; the colored curves are the cumulative
  contributions of $T_{\rm add}$, $T_{\rm inter}$, and $T_{\rm align}$.
  Error bars show one sample standard deviation across complete inputs.}
  \label{fig:app-sec5-quantization-site-recurrence}
\end{figure}

\begin{figure}[H]
  \centering
  \includegraphics[width=0.86\textwidth]{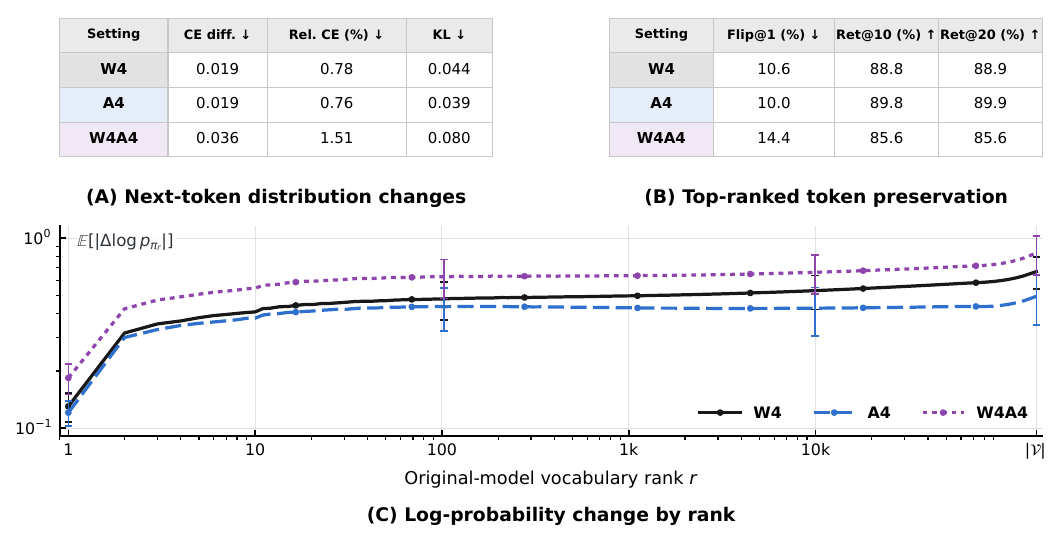}
  \caption{\textbf{Output changes under weight and activation quantization.}
  Qwen3-32B on the same C4 inputs and retained next-token positions used in
  Fig.~\ref{fig:app-sec5-quantization-site-recurrence}. Metrics follow
  Sec.~\ref{sec:preferential-output-preservation}. Error bars show one sample
  standard deviation across complete inputs.}
  \label{fig:app-sec5-quantization-site-output}
\end{figure}

\newpage
\subsection{Counteraction across PTQ Algorithms and Weight Formats}
\label{app:sec5-quantization-transfer}

Sec.~\ref{sec:scope-model-quantization} also asks whether counteraction depends
on how the 4-bit weights are produced. Fig.~\ref{fig:sec5-quantization-scope}(B)
shows that the blockwise counteraction cosine changes little across
calibration-free RTN and the calibration-based GPTQ and AWQ algorithms, all
using NVFP4 weights. This comparison does not show whether the same conclusion
holds for another weight format, or
whether similar counteraction curves imply similar relative-error and output
changes.

We compare RTN, GPTQ, and AWQ on Qwen3-4B under both NVFP4 and asymmetric INT4.
All conditions quantize the same Attention and
MLP weights, and we report the exact relative-error recurrence together with
the resulting next-token output changes. Evaluation follows
App.~\ref{app:main-figure-protocols} and retains at least $99.93\%$ of
positions. Every condition has
$\sum_{\ell=1}^{L}\termlabel{terminteraction}{T_{\rm inter}^{(\ell)}}<0$.
For NVFP4, the blockwise interaction-cosine curves under GPTQ and AWQ have
correlations above $0.998$ with the RTN curve.

\paragraph{Asymmetric INT4 quantization details.}
The INT4 experiments use the \texttt{W4A16\_ASYM} scheme from
\texttt{llmcompressor}.
All Attention and MLP linear weights are quantized in groups of 128
consecutive values along the input dimension, while activations remain in
BF16 and the LM head remains unquantized. Each group includes
zero and uses signed 4-bit integer values
$q_{\min}=-8$ and $q_{\max}=7$. Given group $w_{\min}$ and
$w_{\max}$, the static affine parameters are
\begin{equation*}
s=\frac{w_{\max}-w_{\min}}{15},
\qquad
z_{zp}=\operatorname{clip}
\left(
\operatorname{round}\left(q_{\min}-\frac{w_{\min}}{s}\right),
q_{\min},q_{\max}
\right),
\end{equation*}
where $z_{zp}$ is stored as INT8.

Quantization and reconstruction use
\begin{equation*}
q=\operatorname{clip}
\left(
\operatorname{round}\left(\frac{w}{s}+z_{zp}\right),
q_{\min},q_{\max}
\right),
\quad
\widehat{w}=s(q-z_{zp}).
\end{equation*}
\paragraph{Calibrated PTQ algorithm details.}
GPTQ and AWQ use 64 disjoint 512-token C4 calibration sequences; RTN uses no
calibration data. For NVFP4, NVIDIA Model Optimizer applies GPTQ with block
size 128 and Hessian dampening $0.01$, or AWQ-lite with alpha step $0.1$,
directly to the E2M1 weights and E4M3 block scales defined above. For
asymmetric INT4, GPTQ uses the same block size and dampening with static
activation ordering, while AWQ uses activation-and-weight duo scaling with
20 grid-search points.

Figs.~\ref{fig:app-sec5-calibrated-ptq},
\ref{fig:app-sec5-calibrated-ptq-three-term},
and~\ref{fig:app-sec5-calibrated-ptq-output} give the full results.
Across the tested algorithms and formats,
$\termlabel{terminteraction}{T_{\rm inter}}$ remains negative and its blockwise
geometry changes little, but $\termlabel{termcontrast}{T_{\rm add}}$, final
relative error, and output metrics still vary. A stronger cumulative
$\termlabel{terminteraction}{T_{\rm inter}}$ does not imply better
PTQ quality. Output changes also depend on newly introduced error and on how
the remaining hidden error affects the output layer.

\begin{figure}[H]
  \centering
  \includegraphics[width=0.85\textwidth]{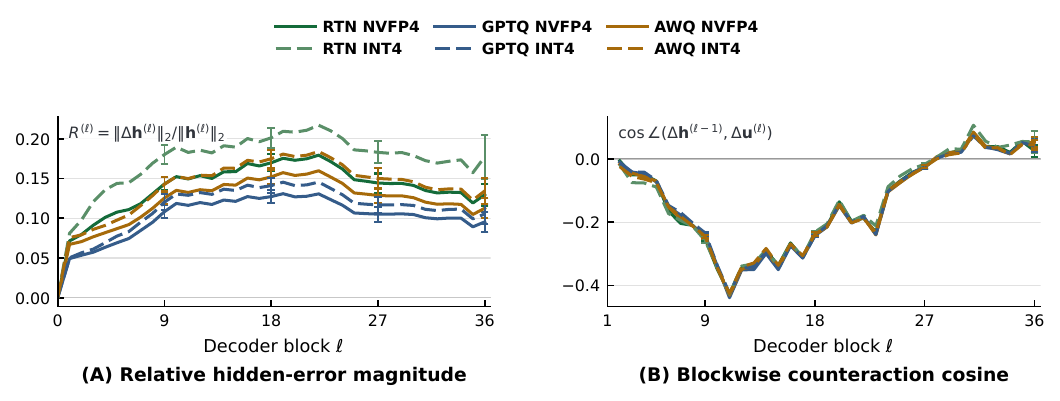}
  \caption{\textbf{Counteraction is consistent across PTQ algorithms and
  weight formats.} Qwen3-4B results. Solid and dashed curves denote NVFP4 and
  asymmetric INT4. Error bars show one sample standard deviation across
  inputs.}
  \label{fig:app-sec5-calibrated-ptq}
\end{figure}

\begin{figure}[H]
  \centering
  \includegraphics[width=0.9\textwidth]{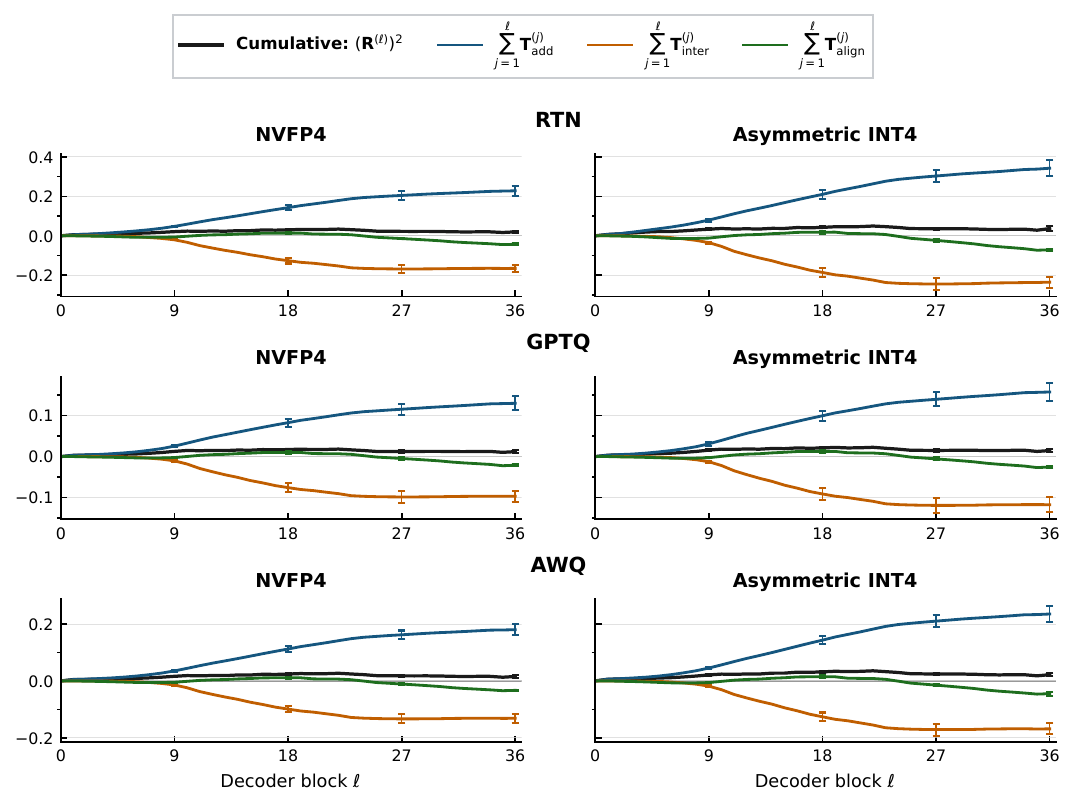}
  \caption{\textbf{Relative-error recurrence across PTQ algorithms and weight
  formats.} Qwen3-4B under RTN, GPTQ, and AWQ with NVFP4 and asymmetric INT4.
  Each row shows one PTQ algorithm, and the two columns share a vertical scale
  within that row. The black curve is the cumulative squared relative error;
  the colored curves are the cumulative contributions of
  $T_{\rm add}$, $T_{\rm inter}$, and $T_{\rm align}$.
  Error bars show one
  sample standard deviation across inputs.}
  \label{fig:app-sec5-calibrated-ptq-three-term}
\end{figure}

\begin{figure}[H]
  \centering
  \includegraphics[width=0.85\textwidth]{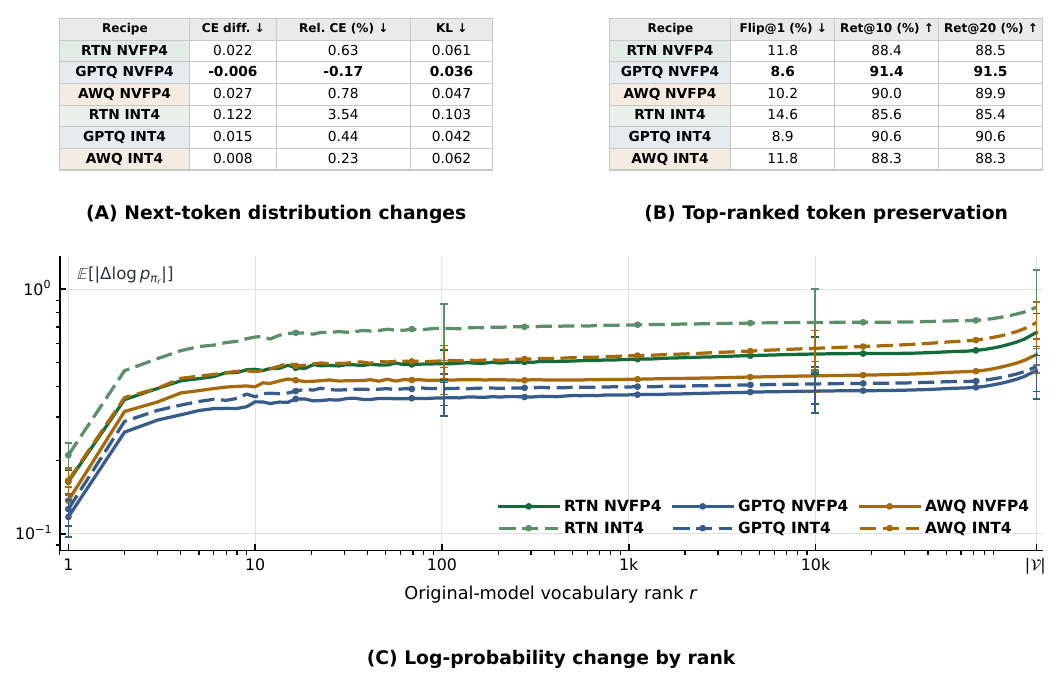}
  \caption{\textbf{Output changes across PTQ algorithms and weight formats.}
  Qwen3-4B results; metrics follow
  Sec.~\ref{sec:preferential-output-preservation}. \textbf{(C)} compares the
  rank-wise absolute log-probability change under the same inputs and position
  filter. Solid and dashed curves denote NVFP4 and asymmetric INT4. Error bars
  show one sample standard deviation across inputs.}
  \label{fig:app-sec5-calibrated-ptq-output}
\end{figure}

\newpage
\subsection{Output Robustness to Quantization across Models}
\label{app:output-analysis}
\label{app:sec5-cross-model-output}

Sec.~\ref{sec:lmhead-geometry} reports two connected observations for
Qwen3-32B. Weight quantization causes only modest changes in CE and KL, and
the scores and probabilities of top-ranked tokens change less than those of
lower-ranked tokens. Sec.~\ref{sec:lmhead-top-rank-geometry} explains the
rank dependence through the geometry of the shared LM head. Higher-ranked
token rows form smaller angles with the LM-head input, and the rotation of
that input produces much smaller changes in its angle to each fixed row. Here
we test whether the output stability and rank-dependent LM-head geometry extend
beyond the main model.

We evaluate pretrained models from the Qwen, OLMo, and Gemma families,
including dense and mixture-of-experts architectures, on C4, WikiText-103,
and GSM8K text. Each model is evaluated on 64 complete inputs from each data
view, with the original and quantized models compared at the same next-token
positions. The evaluation and aggregation follow
App.~\ref{app:main-figure-protocols}.

We first make the rank-dependent geometry explicit. At each position, let
$\pi_r$ be the token with BF16 score rank $r$, and define
$\theta_{\pi_r}:=\angle(\wv_{\pi_r},\hv_{\rm LM})$. For Qwen3-32B,
Fig.~\ref{fig:app-sec4-rank-cosine-monotonicity} shows that
$\Eb[\cos\theta_{\pi_r}]$ decreases from the top of the vocabulary toward lower ranks
after averaging over all three datasets. Thus, the rows of top-ranked tokens have
smaller projection angles to the LM-head input. Under the directional model
in Thm.~\ref{thm:lmhead-tangent-attenuation}, this is the geometry that makes
their relative scores less sensitive to the same LM-head input rotation.

\begin{figure}[H]
  \centering
  \includegraphics[width=0.7\textwidth]{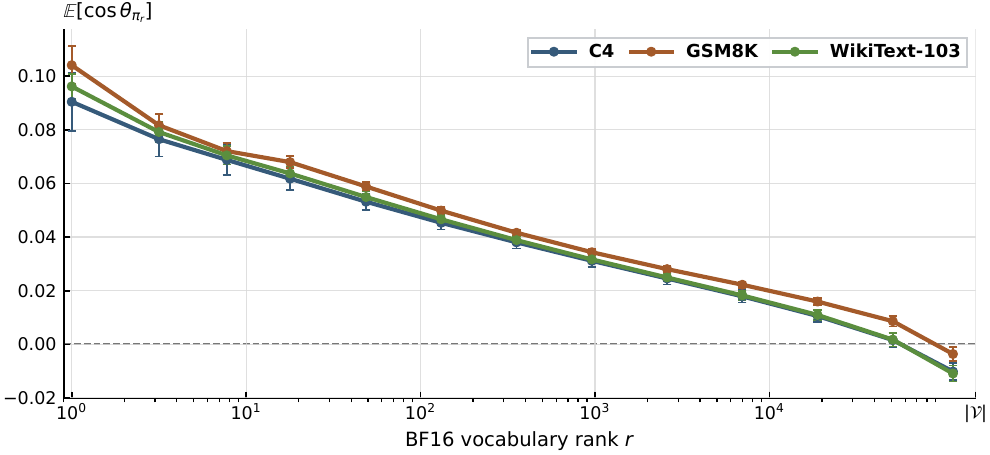}
  \caption{\textbf{Higher-ranked tokens have larger LM-head cosine values.}
  Error bars show one sample standard deviation across complete inputs. The grouped means
  decrease toward lower ranks on all three datasets,
  describing the overall rank trend rather than pointwise monotonicity.}
  \label{fig:app-sec4-rank-cosine-monotonicity}
\end{figure}

\noindent\begin{minipage}{\linewidth}
  \refstepcounter{table}\label{tab:sec4-cross-model-output}
  \small
  \textbf{Table~\thetable: Output changes across pretrained models.} Each entry is
  the macro-mean over C4, WikiText-103, and GSM8K text. $\Delta\mathrm{CE}$ and forward KL compare W4
  with BF16 on the same next-token positions.
  \par\smallskip
  \centering
  \setlength{\tabcolsep}{8pt}
  \begin{tabular}{lccc}
    \toprule
    Model & $\Delta\mathrm{CE}\downarrow$ & Rel. $\Delta\mathrm{CE}$ (\%)$\downarrow$
      & $\KL{\pv}{\widehat{\pv}}$ $\downarrow$ \\
    \midrule
    Qwen3-30B-A3B & 0.015 & 0.57 & 0.048 \\
    Qwen3-8B      & 0.017 & 0.54 & 0.060 \\
    OLMo3-32B     & 0.006 & 0.24 & 0.019 \\
    \bottomrule
\end{tabular}
\end{minipage}

Tab.~\ref{tab:sec4-cross-model-output} checks output stability on three
representative models. Their macro-mean $\Delta\mathrm{CE}$ ranges from
$0.006$ to $0.017$, relative $\Delta\mathrm{CE}$ from $0.24\%$ to $0.57\%$,
and forward KL from $0.019$ to $0.060$. The modest changes measured for
Qwen3-32B in Sec.~\ref{sec:preferential-output-preservation} therefore also
occur at other model scales and in another model family.

Fig.~\ref{fig:app-sec5-cross-model-output} then repeats the geometric and
rank-wise analysis for six dense and mixture-of-experts models.
Fig.~\ref{fig:app-sec5-cross-model-output}(A) compares the rotation
$\angle(\hv_{\rm LM},\widehat{\hv}_{\rm LM})$ with the vocabulary-mean
projection-angle change. Across the three data views, the LM-head input
rotates from $5.80^\circ$ to $21.71^\circ$, while the vocabulary-mean
projection-angle change ranges from only $0.093^\circ$ to $0.391^\circ$.
The attenuation observed for Qwen3-32B is therefore present in every tested
model.

Fig.~\ref{fig:app-sec5-cross-model-output}(B) measures the relative score
change of the BF16 rank-$r$ token,
$\Eb[|\widehat z_{\pi_r}-z_{\pi_r}|/|z_{\pi_r}|]$, over positions with
$z_{\pi_r}>0$. The observed W4 curves generally rise toward lower ranks. The
first- and second-order references set
$\rho=\norm{\widehat{\hv}_{\rm LM}}_2/\norm{\hv_{\rm LM}}_2=1$, so they
isolate the effect of the measured input rotation rather than its norm
change. They reproduce the overall rank trend, although the relative score
changes and local fluctuations vary across models.
Fig.~\ref{fig:app-sec5-cross-model-output}(C) carries the same comparison into
probability space. The top-ranked tokens have the smallest
absolute log-probability changes, and the approximations from
Thm.~\ref{thm:log-probability-approximations} follow the observed transition
from the top ranks to the rest of the vocabulary.

\begin{figure}[H]
  \centering
  \includegraphics[width=\textwidth]{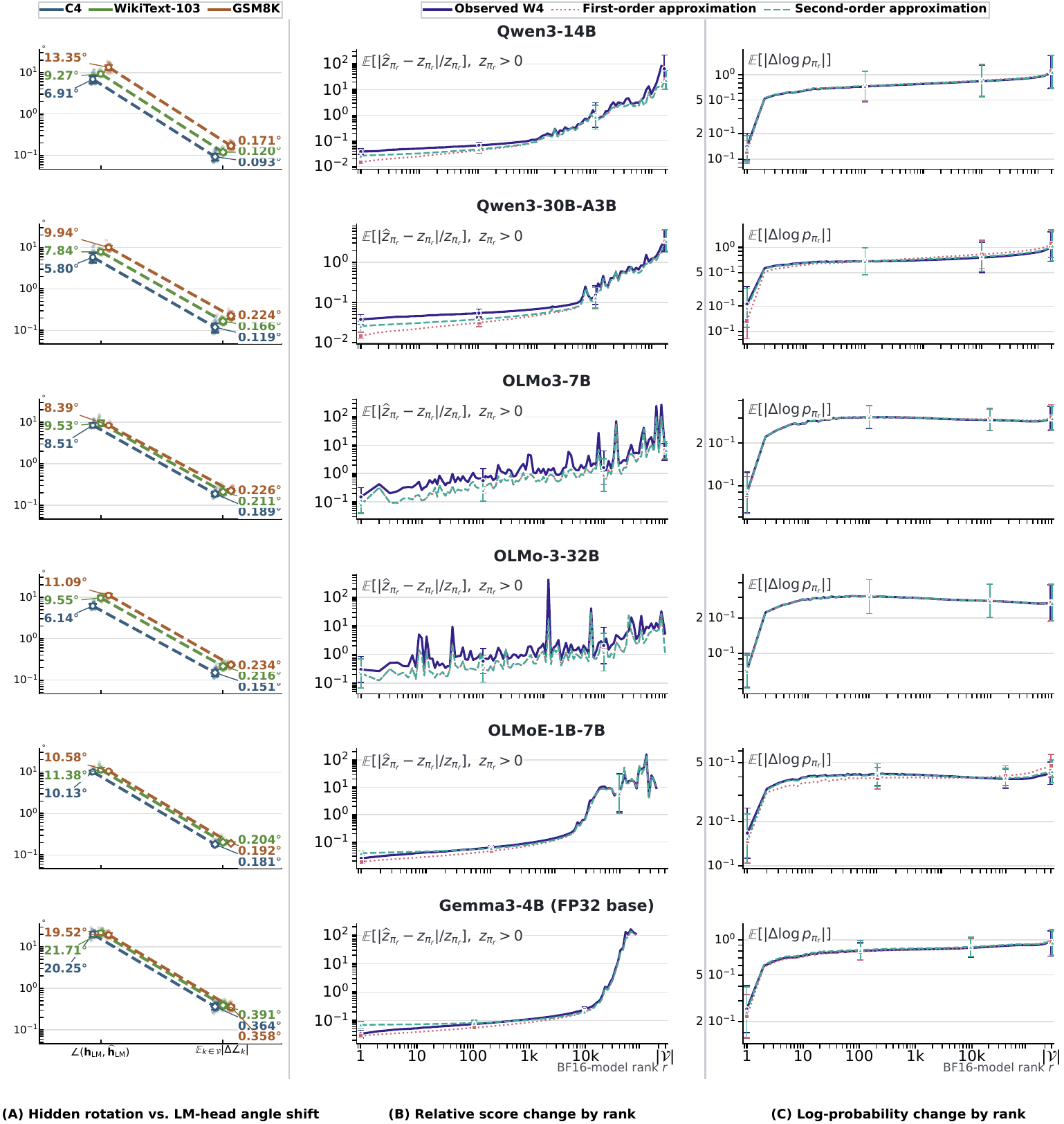}
  \caption{\textbf{Top-ranked token scores and probabilities are more stable
  after quantization across models.} Six models are evaluated on C4,
  WikiText-103, and GSM8K text. Projection-angle changes remain below one
  degree despite larger LM-head input rotations, and relative score and
  log-probability errors increase toward lower ranks. Score and log-probability results are averaged over the three datasets. Gemma3 uses FP32 for
  non-quantized computation. \textbf{(B)} uses the $\rho=1$ approximation from
  Thm.~\ref{thm:lmhead-tangent-attenuation}; \textbf{(C)} uses the approximations
  from Thm.~\ref{thm:log-probability-approximations}.
  }
  \label{fig:app-sec5-cross-model-output}
\end{figure}

\newpage
\subsection{Output Probability Sensitivity to Quantization across Temperatures}
\label{app:sec5-temperature}

Sec.~\ref{sec:scope-model-quantization} reports that preferential preservation
of top-ranked tokens persists when the softmax temperature changes. To separate
score geometry from the probability mapping, we keep each paired BF16/W4 score
vector fixed and vary only the temperature.

For fixed original and quantized score vectors, define
\begin{equation*}
\pv_T=\softmax(\zv/T),
\qquad
\widehat{\pv}_T=\softmax(\widehat{\zv}/T),
\qquad
T\in\{0.5,0.75,1,1.5,2\}.
\end{equation*}
No hidden state or LM-head projection is recomputed. Every $T>0$ preserves
the BF16 and W4 score rankings. Their top-$K$ token sets and Flip@1 therefore
remain unchanged, as do the LM-head angles. Probability metrics do change.
Fig.~\ref{fig:app-sec5-temperature-replay} shows the mean forward KL. Across the tested range,
it spans $0.0247$--$0.0661$ on C4 and $0.1514$--$0.3253$ on GSM8K text.
The same figure also reports the actual absolute log-probability change at each
rank and temperature.
Positive temperature rescaling preserves both score rankings, so it does not
change the score-side rank protection. It changes the magnitude of probability
errors, while the top-to-tail trend remains inherited from the score changes.
Thus, temperature affects probability sensitivity without changing the
hidden-state propagation or LM-head projections studied in
Secs.~\ref{sec:residual-counteraction} and~\ref{sec:lmhead-geometry}.

\begin{figure}[H]
  \centering
  \includegraphics[width=0.95\textwidth]{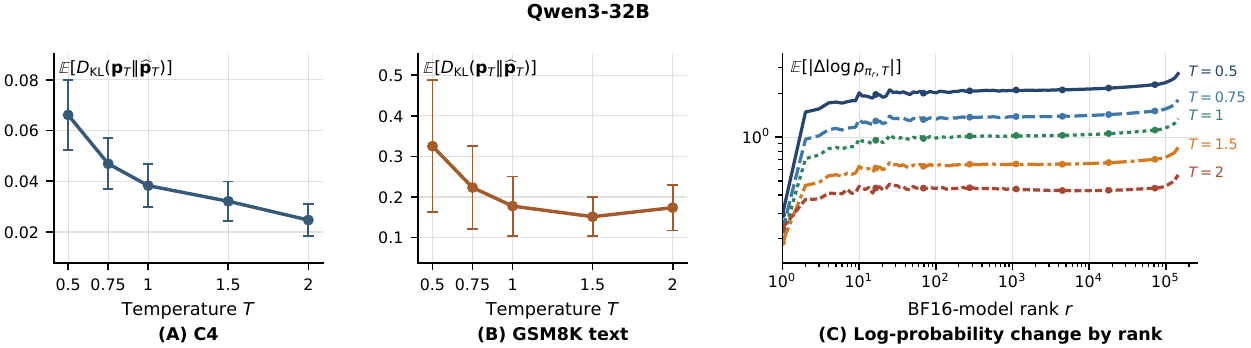}
  \caption{\textbf{Next-token probability changes under temperature
  rescaling.} \textbf{(A)} and \textbf{(B)} show the mean forward KL on C4 and
  GSM8K text. Error bars show standard deviation across inputs.
  \textbf{(C)} shows the actual mean absolute
  log-probability change over BF16-model rank for five temperatures averaged
  over both datasets.}
  \label{fig:app-sec5-temperature-replay}
\end{figure}

\end{document}